\documentclass{article}

\usepackage{microtype}
\usepackage{graphicx}
\usepackage{subcaption}
\usepackage{booktabs} % for professional tables
\usepackage{enumitem}
\usepackage{bm}

\def\argmax{{\arg\max}}

\def\bE{\mathbf{E}}

\def\bR{\mathbf{R}}

\def\bbR{\mathbb{R}}

\def\cF{\mathcal{F}}
\def\cG{\mathcal{G}}
\def\cH{\mathcal{H}}

\def\cN{\mathcal{N}}

\def\cP{\mathcal{P}}

\def\cR{\mathcal{R}}

\def\cU{\mathcal{U}}

\def\cX{\mathcal{X}}
\def\cY{\mathcal{Y}}

\def\exp{\mathrm{exp}}

\def\ker{\mathrm{ker}}

\usepackage{hyperref}

\usepackage[accepted]{icml2026}

\usepackage{amsmath}
\usepackage{amssymb}
\usepackage{mathtools}
\usepackage{amsthm}

\usepackage[capitalize,noabbrev]{cleveref}

\theoremstyle{plain}
\newtheorem{theorem}{Theorem}[section]
\newtheorem{proposition}[theorem]{Proposition}
\newtheorem{lemma}[theorem]{Lemma}

\theoremstyle{definition}
\newtheorem{definition}[theorem]{Definition}
\newtheorem{assumption}[theorem]{Assumption}
\theoremstyle{remark}
\newtheorem{remark}[theorem]{Remark}

\usepackage[textsize=tiny]{todonotes}

\icmltitlerunning{Maximin Relative Improvement: Fair Learning as a Bargaining Problem}

\begin{document}

\twocolumn[
\icmltitle{Maximin Relative Improvement: Fair Learning as a Bargaining Problem}

  % It is OKAY to include author information, even for blind submissions: the
  % style file will automatically remove it for you unless you've provided
  % the [accepted] option to the icml2026 package.

  % List of affiliations: The first argument should be a (short) identifier you
  % will use later to specify author affiliations Academic affiliations
  % should list Department, University, City, Region, Country Industry
  % affiliations should list Company, City, Region, Country

  % You can specify symbols, otherwise they are numbered in order. Ideally, you
  % should not use this facility. Affiliations will be numbered in order of
  % appearance and this is the preferred way.
  \icmlsetsymbol{equal}{*}

  \begin{icmlauthorlist}
    \icmlauthor{Jiwoo Han}{yyy}
    \icmlauthor{Moulinath Banerjee}{yyy}
    \icmlauthor{Yuekai Sun}{yyy}
    % \icmlauthor{Firstname4 Lastname4}{sch}
    % \icmlauthor{Firstname5 Lastname5}{yyy}
    % \icmlauthor{Firstname6 Lastname6}{sch,yyy,comp}
    % \icmlauthor{Firstname7 Lastname7}{comp}
    % %\icmlauthor{}{sch}
    % \icmlauthor{Firstname8 Lastname8}{sch}
    % \icmlauthor{Firstname8 Lastname8}{yyy,comp}
    %\icmlauthor{}{sch}
    %\icmlauthor{}{sch}
  \end{icmlauthorlist}

  \icmlaffiliation{yyy}{Department of Statistics, University of Michigan, Ann Arbor, USA}
  % \icmlaffiliation{comp}{Company Name, Location, Country}
  % \icmlaffiliation{sch}{School of ZZZ, Institute of WWW, Location, Country}

  \icmlcorrespondingauthor{Jiwoo Han}{jiwoohan@umich.edu}
  % \icmlcorrespondingauthor{Firstname2 Lastname2}{first2.last2@www.uk}

  % You may provide any keywords that you find helpful for describing your
  % paper; these are used to populate the "keywords" metadata in the PDF but
  % will not be shown in the document
  \icmlkeywords{Machine Learning, ICML}

  \vskip 0.3in
]

% this must go after the closing bracket ] following \twocolumn[ ...

% This command actually creates the footnote in the first column listing the
% affiliations and the copyright notice. The command takes one argument, which
% is text to display at the start of the footnote. The \icmlEqualContribution
% command is standard text for equal contribution. Remove it (just {}) if you
% do not need this facility.

% Use ONE of the following lines. DO NOT remove the command.
% If you have no special notice, KEEP empty braces:
\printAffiliationsAndNotice{}  % no special notice (required even if empty)
% Or, if applicable, use the standard equal contribution text:
% \printAffiliationsAndNotice{\icmlEqualContribution}

\begin{abstract}
When deploying a single predictor across multiple subpopulations, we propose a fundamentally different approach: interpreting group fairness as a bargaining problem among subpopulations. This game-theoretic perspective reveals that existing robust optimization methods such as minimizing worst-group loss or regret correspond to classical bargaining solutions and embody different fairness principles. We propose relative improvement, the ratio of actual risk reduction to potential reduction from a baseline predictor, which recovers the Kalai–Smorodinsky solution. Unlike absolute-scale methods that may not be comparable when groups have different potential predictability, relative improvement provides axiomatic justification including scale invariance and individual monotonicity. We establish finite-sample convergence guarantees under mild conditions.
% To learn a single model that performs fairly across multiple subpopulations, robust optimization methods such as minimizing worst-group loss or regret are widely used, but they operate on absolute risk scales that may not be comparable when groups have different potential predictability. We propose relative improvement, the ratio of actual risk reduction to potential reduction from a baseline predictor. By interpreting group fairness as a bargaining problem among subpopulations, we show that relative improvement and existing methods correspond to classical bargaining solutions, with relative improvement 
% recovering the Kalai–Smorodinsky solution. This connection provides axiomatic justification including scale invariance and individual monotonicity. We establish finite-sample convergence guarantees under mild conditions.
\end{abstract}

\section{Introduction}
\label{sec:intro}
Machine learning models now drive consequential decisions in lending, hiring, and healthcare, often replacing human judgment entirely. When deploying a single unified predictor across diverse subpopulations, optimizing for one group often degrades performance on others. How should we balance these competing objectives?

We approach this question through the lens of \textit{cooperative bargaining theory}. When multiple groups must share a single model, the fairness problem becomes one of negotiation: how should groups compromise from their individual optima to reach a mutually acceptable solution? This game-theoretic perspective reveals that existing robust optimization criteria, such as minimizing worst-case group loss or regret \citep{sagawa2019distributionally, agarwal2022minimax} are equivalent to particular notions of fair compromise.

However, these worst-case criteria rely on absolute comparisons of loss or regret across groups, implicitly assuming that a unit of reduction in loss is equally meaningful for all groups. To illustrate, consider predicting income for white and non-white workers in North 
Dakota using age as the feature in the 2018 American Community Survey \citep{ding2021retiring}: non-white workers have roughly ten times more potential improvement---the gap between the unconditional mean predictor and group-optimal performance---than white workers. Ensuring equal absolute reductions would extract nearly all available signal from the white group while poorly serving the non-white group.

We address this through a new criterion, \textit{relative improvement}, defined as the fraction of the gap between baseline and optimal performance each model captures. The baseline predictor $f_0$ represents a natural default, such as the mean response in regression or the majority class in classification. Let $f_P^{\ast}\in\cF$ denote the optimal predictor for distribution $P$:
\begin{equation}
\label{rel_reward}
\rho_P(f) = \frac{R_P(f_0)-R_P(f)}{R_P(f_0)-R_P(f_P^{\ast})},
\end{equation}
where $R_P(\cdot)$ is the risk under a given loss function. This criterion applies broadly across regression, classification, and nonparametric settings. We seek models that maximize the worst-case relative improvement:
\begin{equation}
\label{rel_reward_opt}
f_{\text{RI}} = \arg \max_{f \in \cF} \min_{P \in \cP} \rho_P(f),
\end{equation}
where $\cP$ denotes the set of probability distributions, $\rho^{\ast} = \min_{P \in \cP} \rho_P(f_{\text{RI}})$ captures equitable signal extraction: every group attains at least this fraction of its achievable risk reduction, independent of task difficulty.

\begin{figure}[t]
\vskip 0.1in
\begin{center}
\includegraphics[width=\columnwidth]{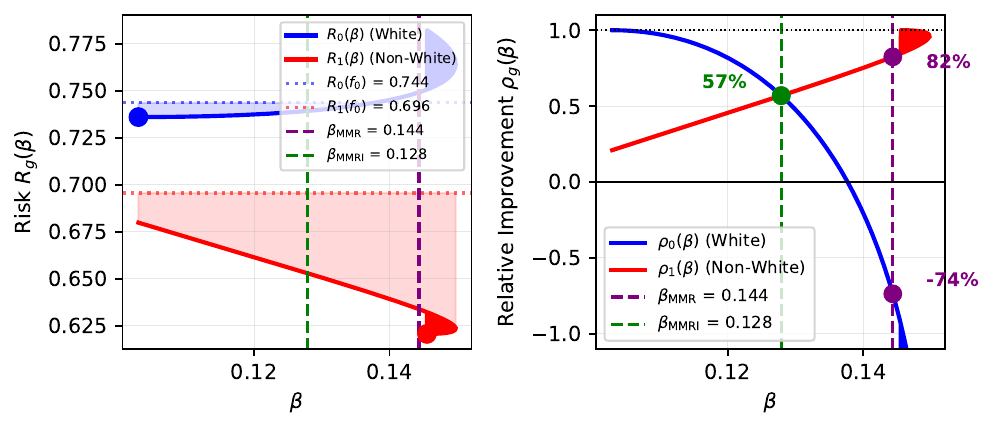}
\end{center}
\caption{
Left: per-group risk $R_g(\beta)$.
Right: relative improvement $\rho_g(\beta)$, where minimax regret yields $(-74\%, 82\%)$
while maximin relative improvement achieves $(57\%, 57\%)$.
}
\label{fig:motivating}
\vskip 0.1in
\end{figure}

Figure~\ref{fig:motivating} revisits the North Dakota example using this definition. Minimax regret yields $(-74\%, 82\%)$: it allocates nearly all model capacity to the non-white group and leaves the white group worse than baseline. Maximin relative improvement equalizes proportional gains at $(57\%, 57\%)$. This demonstrates that absolute worst-case criteria can yield models extracting vastly unequal fractions of available signal, whereas relative improvement avoids this failure mode.

% \begin{figure}[t]
% \vskip 0.1in
% \begin{center}
% \includegraphics[width=\columnwidth]{Figures/relative_improvement_example.pdf}
% \end{center}
% \caption{
% Minimax regret versus maximin relative improvement in a two-group linear regression
% example with heterogeneous predictability
% ($\beta_1 = 2, \sigma_1^2 = 1$ versus $\beta_2 = 6, \sigma_2^2 = 9$),
% yielding achievable risk reductions of 4 and 36 units respectively.
% Left: Risk landscape.
% Right: Relative improvement space, where minimax regret yields $(0\%, 89\%)$
% while maximin relative improvement achieves $(75\%, 75\%)$.
% }
% \label{fig:motivating}
% \vskip 0.1in
% \end{figure}

% To illustrate why relative improvement is necessary, Figure~\ref{fig:motivating} presents a two-group linear regression problem with substantially different inherent predictability. While minimax regret equalizes absolute regret, it allocates nearly all usable signal to the easier group and leaves the harder group no better than baseline. In contrast, maximizing worst-case relative improvement equalizes proportional gains: both groups achieve 75\% of their respective achievable improvement. This demonstrates that absolute worst-case criteria can yield models extracting vastly unequal fractions of available signal, whereas relative improvement avoids this failure mode.

We establish that the maximin relative improvement criterion corresponds exactly to the \textit{Kalai-Smorodinsky bargaining solution} \citep{kalai_1975_other}, inheriting its axiomatic properties: Pareto optimality, symmetry, scale invariance, and individual monotonicity. Section~\ref{sec:bargaining} develops this connection formally and situates existing robust optimization approaches within this unified bargaining framework, enabling axiomatic comparison of different fairness criteria through their game-theoretic interpretations in Section~\ref{sec:properties}.

Our main contributions are as follows:
\begin{enumerate}
\item \textbf{Relative improvement as a fairness criterion.} 
When groups differ in inherent predictability, absolute criteria are not comparable across groups. We propose relative improvement $\rho_g(f)$ and show that maximizing its worst case is exactly the Kalai--Smorodinsky bargaining solution, providing an axiomatic justification.
\item \textbf{A unified bargaining framework.} 
We show that existing robust optimization methods correspond to classical bargaining solutions under a common mapping. Theorem~\ref{thm:risk_set} establishes the geometric well-posedness needed to import bargaining theory into fair learning.
\item \textbf{Axiomatic characterization as a decision guide.} 
Each criterion is uniquely characterized by a distinct subset of axioms (Table~\ref{tab:axiom_comparison}), making their normative commitments explicit and directly comparable within a single framework.
\item \textbf{Learning-specific guarantees.} 
We prove that relative improvement satisfies individual rationality (no group is harmed relative to baseline), unlike minimax regret, and establish an $O(1/\sqrt{n})$ finite-sample convergence guarantee.
\end{enumerate}

\section{Problem Formulation}
\label{sec:formulation}

% We consider a general prediction setting in which samples consist of $(X_i, Y_i, G_i)$, where $X \in \cX$ denotes covariates, $Y \in \cY$ is the response variable, and $G \in \cG = \{1,\cdots, m\}$ indicates the observed group assignment. Each group $g$ has its own probability distribution $P_g$ for $(X,Y)|G=g$. We do not make assumptions about whether $X$ contains information about group assignment. Our goal is to find $f \in \cF$ using $X$ to predict $Y$ while extracting the most signal in the worst-group case to ensure fairness.

We consider a general prediction setting with $m$ groups indexed by $g\in\cG=\{1, \ldots, m\}$. Each group has distribution $P_g$ over $\cX \times \cY$, where $X\in\cX$ denotes covariates and $Y\in\cY$ is the response variable. Our goal is to find $f\in\cF$ using $X$ to predict $Y$ while extracting the most signal in the worst-group case to ensure fairness.

Given a loss function $\ell(y, f(x))$, the group risk is $R_g(f)=\bE_{P_g}[\ell(Y,f(X))]$. Let $f_0$ denote a baseline predictor and $f_g^{\ast}$ the group-optimal predictor within a function class $\cF$.\footnote{See Appendix~\ref{app:f0_discussion} for discussion of the baseline predictor $f_0$.} We assume $R_g(f_0) > R_g(f_g^{\ast})$ for all $g \in \cG$, ensuring non-trivial prediction problems. The optimization with finite groups is:
\begin{equation}
\label{eq:rel_reward_opt_fnt}
\begin{aligned}
f_{\text{RI}}
&= \arg\max_{f\in\cF}\min_{g\in\cG}\rho_g(f) \\
&= \arg\max_{f\in\cF}\min_{g\in\cG}
\frac{R_g(f_0)-R_g(f)}{R_g(f_0)-R_g(f_g^{\ast})},
\end{aligned}
\end{equation}
where $\rho_g(f)$ denotes the relative improvement for group $g$. The loss function and baseline choice depend on the application. We present three representative instantiations below, illustrating the framework's versatility across learning tasks and optimization contexts.

\textbf{Parametric Linear Regression.} Consider $\cF = \{f(x) = \theta^\top x: \theta \in \Theta\}$ with convex compact $\Theta$ and squared loss. Assume that $Y = \beta_g^\top X + \epsilon_g$ with $\bE_g[\epsilon_g|X] =0$, $\epsilon_g \sim \cN(0, \sigma_g^2)$, $\bE_g[X]=0$ and $\bE_g[XX^\top] = \Sigma_g$. The baseline is $f_0(x) = 0$ (the unconditional mean) and the group optimum is $f_g^{\ast}(x) = \beta_g^\top x$. Then the optimization reduces to $\theta_{\text{RI}} = \arg \max_{\theta \in \Theta} \min_{g \in \cG} (1-\| \theta - \beta_g \|^2_{\Sigma_g}/\| \beta_g \|^2_{\Sigma_g})$.
% \begin{equation}
% \theta_{\text{RI}} = \arg \max_{\theta \in \Theta} \min_{g \in \cG} \left(1-\frac{\| \theta - \beta_g \|^2_{\Sigma_g}}{\| \beta_g \|^2_{\Sigma_g}} \right).
% \end{equation}

\textbf{Binary Classification.} Consider $\cF = \{f(x) = \sigma(\theta^\top x): \theta \in \Theta\}$ with logistic loss. Assume $Y = \mathbf{1}\{\beta_g^\top X + \epsilon > 0\}$, where $X \sim N(0, \Sigma_g)$ and $\epsilon \sim N(0, \sigma_g^2)$. The natural baseline is $f_0(x) = \pi_0$ where $\pi_0 = P(Y=1)$ is the overall marginal probability, and the group optimum is $f_g^{\ast}(x) = \sigma(\beta_g^\top x)$. The relative improvement criterion applies directly.
% The optimization becomes:
% \begin{equation}
% \theta_{\text{RI}} = \arg \max_{\theta \in \Theta} \min_{g \in \cG} \frac{R_g(f_0)-R_g(f_\theta)}{R_g(f_0)-R_g(f_{\beta_g})},
% \end{equation}
% where $R_g(f) = \bE_g[\ell(Y,f(X))]$.

\textbf{Nonparametric RKHS Setting.} For a positive semi-definite kernel $k:\cX \times \cX \rightarrow \bbR$ inducing RKHS $\cH$, consider $\cF = \{f \in \cH: \|f\|_\cH \le R\}$ where $R>0$ controls function complexity. Define the irreducible error $\sigma_g^2 = \min_{f \in \cF} \bE_g[(Y-f(X))^2]$. The population-level maximin relative improvement objective applies directly with this definition of group-optimal risk. The RKHS structure enables efficient empirical estimation via kernel regularization, with the representer theorem ensuring finite expansions
$
\hat{f}(x) = \sum_{i=1}^n \alpha_i k(x_i,x).
$

\begin{remark}[Beyond Fairness]
\label{rem:multi_objective}
The relative improvement framework naturally addresses any multi-objective problem with competing criteria. When $g\in\cG$ represent different evaluation metrics rather than groups, the criterion balances performance across all objectives. 
% $R_g(f)$ measures error under metric $g$, and the optimization becomes $f_{\text{RI}} = \arg \max_{f \in \cF} \min_{g \in \cG} A_g(f)/A_g(f_g^{\ast})$,
% % \begin{equation}
% % f_{\text{RI}} = \arg \max_{f \in \cF} \min_{g \in \cG} \frac{A_g(f)}{A_g(f_g^{\ast})},
% % \end{equation}
% where $A_g(f) = 1-R_g(f)$ denotes accuracy for metric $g$ (assuming $R_g(f_0) = 1$, i.e., the baseline generates incorrect predictions almost surely). This ensures balanced improvement across all evaluation criteria.
While we focus on fairness for concreteness, the framework's scope extends to any setting requiring equitable performance across multiple objectives.
\end{remark}

\subsection{Related Work}
\textbf{Fairness in Machine Learning.} Ensuring equitable model performance across demographic groups has been widely studied through constrained optimization. \citet{hardt2016equality} introduced equalized odds and demographic parity constraints, which \citet{agarwal2018reductions} extended to cost-sensitive classification problems. In the regression setting, \citet{chzhen2020fair} propose a plug-in approach for fair regression under demographic parity. \citet{maity2021does} study whether enforcing fairness constraints helps mitigate biases under subpopulation shift, analyzing fairness as linear constraints on risk profiles. These methods optimize average performance subject to fairness metric constraints. Our work takes a complementary objective-based approach, directly maximizing the worst-group relative improvement rather than imposing constraints. 

\textbf{Robust Optimization for Group Fairness.} Recent work addresses fairness through worst-case optimization. Group Distributional Robust Optimization (Group DRO) \citep{sagawa2019distributionally} minimizes the worst-group risk, while minimax group regret \citep{agarwal2022minimax} compares each group's gap between its risk and its group-optimal risk in absolute units. In the linear regression setting, \citet{meinshausen2015maximin} propose maximizing worst-group explained variance from baseline. In contrast, we define fairness in terms of relative improvement, which accounts for both baseline performance and group-specific optimum in a proportional manner. We unify these robust optimization approaches through a cooperative bargaining framework.

\textbf{Game-Theoretic Approaches to Fairness.} A closely related line of work formulates group fairness as a multi-objective optimization problem. Minimax Pareto fairness \citep{martinez2020minimax} identifies classifiers that minimize the maximum group risk on the Pareto frontier, while \citet{liang2021algorithm} characterize the fairness-accuracy frontier for a given set of inputs to the algorithm. Other works adopt an adversarial formulation, modeling the minimax problem as a two-player zero-sum game between a learner and an adversary \citep{diana2021minimax}. Earlier work also drew inspiration from  bargaining to motivate preference-based fairness notions \citep{zafar2017parity}, though without formulating fair learning itself as a bargaining problem. In contrast, while our framework also adopts a multi-objective view, we interpret fairness through a cooperative bargaining lens, where groups are viewed as players negotiating over performance gains, and explicitly identify fairness criteria as classical bargaining solution concepts. 

\section{Bargaining Problem Perspective}
\label{sec:bargaining}
We can view the optimization problem~\eqref{eq:rel_reward_opt_fnt} as one in which each group makes concessions to reach a compromise ($f_{\text{RI}}$), sacrificing their performance from their own optimum ($f_g^{\ast}$). This naturally translates into a bargaining problem among $m$ players, where the players are the groups themselves.

% We can consider the optimization problem~\ref{eq:rel_reward_opt_fnt} as every group making a concession to other groups to reach a compromise ($f_{\text{RI}}$) while sacrificing their performance from their own optimum ($f_g^{\ast}$). We can translate this problem into a bargaining problem among $m$ players, where the players are the groups themselves. 

% To compare functions using multiple objective functions, we exploit the concept of Pareto optimality. A point is Pareto optimal if no alternative can improve one group's performance without degrading other's. We say $r'$ Pareto dominates $r$ if $r_g' \le r_g$ for all groups with strict inequality for at least one group. A point is weakly Pareto optimal if no alternative strictly dominates in all components simultaneously. The collection of all Pareto optimal points forms the Pareto Frontier, the efficiency boundary where no Pareto improvements are possible. Formal definitions are provided in Appendix~\ref{app:pareto_optimality}.

\subsection{Fairness as Bargaining: A General Framework}
\label{subsec:general_framework}
Any group fairness optimization can be viewed as a bargaining problem among groups. Consider the general form of group fairness optimization:
\begin{equation}
f^{\ast} = \arg\max_{f \in \cF} \Phi(R_1(f), \ldots, R_m(f)),
\end{equation}
where $\Phi: \mathbb{R}^m \to \mathbb{R}$ is an aggregation function determining how to balance group performances. Since $R_g(f)$ represents risk, $\Phi$ must be monotonically decreasing in each argument.

Each predictor $f\in\cF$ induces a risk vector of group-specific performances, which we interpret as the outcome of a bargaining game among the $m$ groups, represented by
$\bm R(f) = (R_1(f), \ldots, R_m(f))^\top \in \mathbb{R}^m.$
The set of all achievable risk profiles, determined by the predictor class $\cF$, forms the feasible risk set $\cR(\cF) = \{ \bm R(f) \in \mathbb{R}^m : f \in \cF \}.$

To determine which risk profiles represent acceptable compromises, we need a notion of efficiency for comparing vectors. In the bargaining interpretation, an efficient outcome should not allow one group to improve without imposing additional loss on another. 

Formally, a risk vector $\bm r'$ Pareto dominates $\bm r$ if $r_g' \le r_g$ for all groups $g$ with strict inequality for at least one group. A risk vector $\bm r$ is Pareto optimal if no other $\bm r'$ Pareto dominates it. A risk vector $\bm r$ is weakly Pareto optimal if no alternative strictly dominates in all components simultaneously—that is, no $\bm r'$ with $r_g' < r_g$ for all $g$. The collection of all Pareto optimal points forms the Pareto Frontier, the efficiency boundary where no Pareto improvements are possible. Formal definitions are provided in Appendix~\ref{app:pareto_optimality}.

\begin{figure}[t]
\centering
\includegraphics[width=\columnwidth]{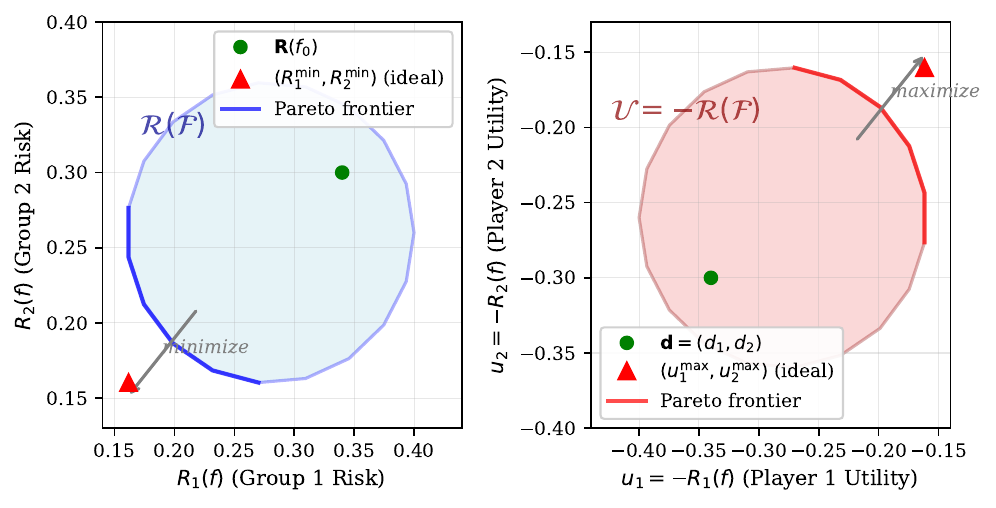}
\caption{Transformation from risk space to utility space. Left: risk space $\mathcal{R}(\mathcal{F})$ where groups aim to minimize risks. Right: utility space $\mathcal{U} = -\mathcal{R}(\mathcal{F})$ where players aim to maximize utilities.}
\label{fig:risk_utility}
\end{figure}
This optimization translates naturally into a bargaining problem through the following mapping (Figure~\ref{fig:risk_utility}):
\begin{align}
\text{Groups } g \in \cG &\leftrightarrow \text{Players}, \\
u_g = -R_g(f) &\leftrightarrow \text{Utility}, \\
d_g = -R_g(f_0) &\leftrightarrow \text{Disagreement point}, \\
u_g^{\max} = -R_g(f_g^{\ast}) &\leftrightarrow \text{Ideal point}, \\
\cU = -\cR(\cF) &\leftrightarrow \text{Feasible set}.
\end{align}

Here, $f_0$ represents the baseline predictor, which serves as the disagreement point, i.e., the outcome if groups fail to cooperate and default to the baseline. The ideal point $u_g^{\max} = -R_g(f_g^{\ast})$ represents each group's best achievable utility when optimizing solely for itself.

The key insight is that different fairness approaches implicitly choose different bargaining solutions by specifying how groups negotiate their compromise. Our relative improvement approach, as we show in Section~\ref{subsec:ks_ri}, corresponds to the Kalai-Smorodinsky solution. In Section~\ref{subsec:comparison}, we show that other robust optimization approaches such as group DRO, maximin relative explained variance, and minimax regret can also be translated into bargaining solutions.

To complete this perspective, it remains to verify that the learning problem induces a well-posed bargaining problem. Classical bargaining solutions are defined over feasible sets that are compact and convex, ensuring existence and axiomatic characterizations of the resulting agreements. In general, an arbitrary function class of predictors does not guarantee these properties. In Section~\ref{subsec:risk_set}, we identify mild conditions under which the feasible risk set $\cR(\cF)$ is compact and convex, thereby formally connecting the learning problem and bargaining problem and enabling further analysis of their properties in Section~\ref{sec:properties}.

\subsection{The Kalai-Smorodinsky Solution and Relative Improvement}
\label{subsec:ks_ri}

Our optimization in Equation~\eqref{eq:rel_reward_opt_fnt} corresponds exactly to the Kalai–Smorodinsky (KS) bargaining solution \citep{kalai_1975_other}, a classical result in cooperative game theory characterized by compelling fairness axioms. This connection grounds our approach in established theory rather than ad-hoc construction.

In the 2-player setting with compact and convex feasible set $\cU \subset \mathbb{R}^2$, the KS solution is the unique solution satisfying four axioms—Pareto optimality, symmetry, scale invariance, and individual monotonicity—and selects a Pareto-optimal point $(u_1, u_2)$ equalizing relative improvements:
\begin{equation}
\label{eq:ks-2player}
\frac{u_1 - d_1}{u_1^{\max} - d_1} = \frac{u_2 - d_2}{u_2^{\max} - d_2} = \rho^{\ast},
\end{equation}
where $\rho^{\ast}$ is the largest jointly realizable fraction of improvement. \citet{kalai_1975_other} establishes unique existence of this point. Theorem~\ref{thm:equal_rho_maximin} adapts this to our statistical setting and shows it coincides with the maximin solution. 

For $m>2$ players \citep{moulin_1984_implementing}, the KS solution generalizes to the maximin solution:
\begin{equation}
\label{eq:ks-nplayer}
\rho^{\ast} = \max_{u \in \cU} \min_{g \in [m]} \frac{u_g - d_g}{u_g^{\max} - d_g}.
\end{equation}

When $m > 2$, this may yield multiple or Pareto-dominated solutions \citep{moulin_1984_implementing}. The lexicographic maximin (leximin) extension \citep{imai1983individual} refines this by imposing a priority structure: among all solutions maximizing the worst group's relative improvement, it selects those maximizing the second-worst group's improvement, and continues sequentially. 
% To illustrate, consider comparing two solutions with relative improvement vectors $(0.6, 0.7, 0.9)$ and $(1.0, 0.6, 0.6)$. Leximin prefers the first: both have worst-group improvement of 0.6, but the first achieves 0.7 for its second-worst group versus 0.6 for the second solution. 

% Let $\rho_{(1)}(u) \leq \cdots \leq \rho_{(m)}(u)$ denote the sorted relative improvements (note that the group ordering may differ across $u$):
% \begin{equation}
% \label{eq:leximin}
% u^{\ast} = \argmax_{u \in \cU}^{\text{lex}} \left( \rho_{(1)}(u), \ldots, \rho_{(m)}(u) \right),
% \end{equation}
% where the lexicographic order prioritizes earlier components: $u \succ u'$ if $\rho_{(i)}(u) > \rho_{(i)}(u')$ at the first differing index $i$. This refinement ensures a unique risk vector and Pareto optimality in multi-group settings.
% %(formal definition in Appendix~\ref{app:leximin}).

% Uniqueness
\begin{remark}[Uniqueness of risk vectors vs. predictors]
\label{rem:uniqueness}
The KS solution guarantees a unique risk vector $\bm R(f_{\text{RI}})$, but not necessarily a unique predictor. Since our optimization depends only on group-wise risks, multiple predictors solving the maximin relative improvement problem might achieve the same risk profile, for example in overparameterized settings. Uniqueness of optimal predictor requires additional assumptions such as strict convexity of the loss (see Theorem~\ref{thm:maximin_unique}).
\end{remark}

\subsection{Comparison with Alternative Fairness Approaches}
\label{subsec:comparison}
% subsection -> section, add general properties here?

Our bargaining framework is not limited to relative improvement—it provides a unified lens for understanding existing robust fairness methods. 
We now show how group DRO, maximin explained variance, and minimax regret can also be expressed as bargaining solutions, each corresponding to different fairness principles.

\begin{figure}[t]
\centering
\includegraphics[width=0.85\columnwidth]{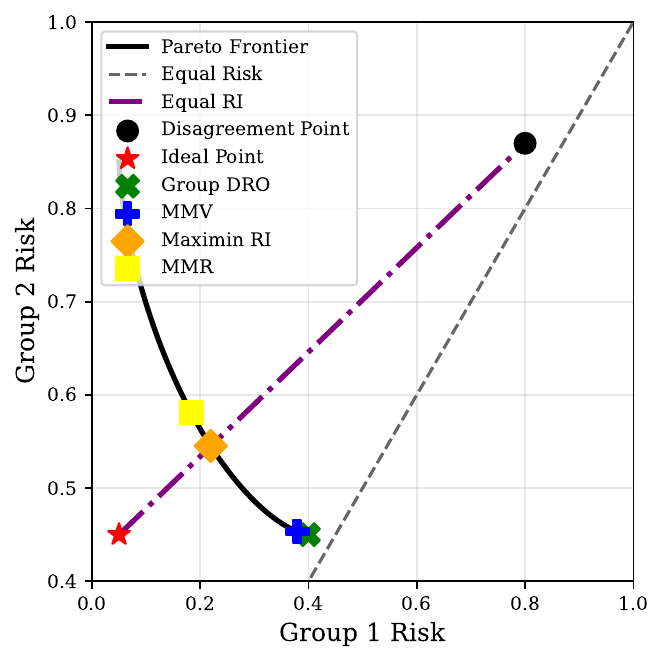}
\caption{Schematic diagram comparing group fairness methods in risk space, after assuming the Pareto Frontier. Each method selects a different solution on the Pareto frontier. Detailed characterization are provided in Appendix~\ref{app:fairness_methods}.}
\label{fig:fairness_comparison}
\end{figure}

Group distributionally robust optimization (GDRO, \citet{sagawa2019distributionally}) minimizes worst-group risk, maximin explained variance (MMV, \citet{meinshausen2015maximin}) maximizes lowest-group improvement from baseline, and minimax regret (MMR, \citet{agarwal2022minimax}) minimizes worst-group regret (Figure~\ref{fig:fairness_comparison}):
\begin{align}
f_{\text{GDRO}} &= \arg \min_{f \in \cF} \max_{g \in \cG} R_g(f), \label{eq:gdro} \\
f_{\text{MMV}} &= \arg \max_{f \in \cF} \min_{g \in \cG} [R_g(f_0) - R_g(f)], \label{eq:mmv} \\
f_{\text{MMR}} &= \arg \min_{f \in \cF} \max_{g \in \cG} [R_g(f) - R_g(f_g^{\ast})]. \label{eq:mmr}
\end{align}

Under the mapping in Section~\ref{subsec:general_framework}, these correspond to:
\begin{align*}
\text{GDRO (Rawlsian): } & u_{\text{RAW}} = \arg\max_{\bm u\in \cU} \min_{g} u_g, \\
\text{MMV (Egalitarian): } & u_{\text{EG}} = \arg\max_{\bm u\in \cU} \min_{g} (u_g - d_g), \\
\text{MMR (Equal Loss): } & u_{\text{EL}} = \arg\min_{\bm u\in \cU} \max_{g} (u_g^{\max} - u_g),
\end{align*}
where $\bm u = (u_1, \cdots, u_m)^\top$ denotes the utility vector.

The Ralwsian solution prioritizes the worst-off group, while the Egalitarian solution maximizes the minimum gain from the disagreement point $d$.\footnote{\label{fn:leximin}The egalitarian principle seeks to equalize utilities, but when full equalization is infeasible or Pareto-inefficient, it selects the most egalitarian feasible distribution by lexicographically comparing utility profiles in increasing order \citep[Sec.~3.3]{moulin2004fair}.} Equal loss minimizes maximum regret from the ideal outcomes \citep{chun1988equal}, corresponding to $p=\infty$ case of the general Yu solution which minimizes $(\sum_g |u_g^{\max} - u_g|^p)^{1/p}$ \citep{yu1973class}. Among classical bargaining solutions, the Nash solution \citep{nash1950bargaining} is perhaps the most well-known, defined as $u_{\text{Nash}} = \arg\max_{u \in \cU} \prod_g (u_g - d_g)$. It maximizes the product of utilities gained from the disagreement point. A comprehensive comparison of axiomatic properties satisfied by each solution and their implications for fairness is provided in Section~\ref{sec:properties}.

\subsection{Risk Set Properties and Well-Posedness}
\label{subsec:risk_set}

The bargaining framework established above requires the feasible set to be compact and convex to ensure well-defined solutions. We now provide conditions under which the risk set $\cR(\cF)$ satisfies these properties, enabling us to apply game-theoretic results and derive further properties in Section~\ref{sec:properties}. 

\begin{assumption}[Parametric setting]
\label{assumps_parametric} 
Assume the function class can be parametrized as $\cF = \{f_\theta: \theta \in \Theta\}$ where (i) the loss function $\ell(y,f_\theta(x))$ is convex and continuous with respect to $\theta$, (ii) there exists an integrable function $h: \cX \times \cY \to \bbR_+$ such that $0 \le \ell(y, f_\theta(x)) \le h(x,y)$ for all $\theta \in \Theta$ and $\bE_g[h(X,Y)] < \infty$ for all $g \in \cG$, and (iii) the parameter space $\Theta$ is compact and convex.
\end{assumption}

\begin{assumption}[Nonparametric setting]
\label{assumps_nonparam} 
Let $\cF$ be a class of real-valued functions on $\cX$ where (i) the loss function $\ell(y,t)$ is convex and continuous with respect to $t$, (ii) there exists an integrable function $h: \cX \times \cY \to \bbR_+$ such that $0 \le \ell(y, f(x)) \le h(x,y)$ for all $f \in \cF$ and $\bE_g[h(X,Y)] < \infty$ for all $g \in \cG$, and (iii) the function class $\cF$ is convex, and compact under a topology for which evaluation maps $f \mapsto f(x)$ are continuous for every $x \in \cX$.
\end{assumption}

The examples introduced in Section~\ref{sec:formulation} satisfy the assumptions under standard regularity conditions. Detailed verifications are provided in Appendix~\ref{app:assumption_verification}. More generally, sufficient conditions for verifying Assumptions~\ref{assumps_parametric}(ii) and~\ref{assumps_nonparam}(ii) are provided in Appendix~\ref{app:bargaining} (Proposition~\ref{prop:assumps_verification}). 
%For linear regression with squared loss, Assumption~\ref{assumps_parametric} requires compact convex $\Theta$ and finite second moments; all properties hold when $\Sigma_g$ is positive definite, while only (1)-(2) hold in overparameterized settings. For RKHS methods with squared loss, Assumption~\ref{assumps_nonparam} requires $\bE_g[k(X,X)] < \infty$. For logistic regression with binary cross-entropy, Assumption~\ref{assumps_parametric} requires compact convex $\Theta$, finite first moments, and $X \ne 0$ with positive probability. The nonparametric assumption also encompasses Lipschitz/H\"older functions (via Arzel\`a-Ascoli) and sieves/basis expansions. 

% \begin{remark}
% \label{rem:risk_set}
% For the parametric setting, Assumption~\ref{assumps_parametric}(2) can be verified if the loss satisfies a Lipschitz condition with integrable Lipschitz constant (see Appendix~\ref{app:bargaining}). For the RKHS setting with $\mathcal{F} = \{f \in \mathcal{H}: \|f\|_{\mathcal{H}} \le R\}$, Assumption~\ref{assumps_nonparam}(2) holds under a standard $p$-growth condition $|\ell(y,t)| \le C(1+|t|^p)$ combined with a moment condition on the kernel $\bE_g[k(X,X)^{p/2}] < \infty$ (see Appendix~\ref{app:bargaining}).
% %
% % Instead of verifying conditions on the loss function directly, it suffices to verify that each risk function $R_g(\theta)$ (or $R_g(f)$) is convex and continuous, since our results rely on properties of the risk function.
% \end{remark}

\begin{theorem}[Properties of the risk set]
\label{thm:risk_set}
Under either Assumption~\ref{assumps_parametric} or Assumption~\ref{assumps_nonparam}, the risk set $\cR(\cF)$ satisfies:
\begin{enumerate}[itemsep=2pt]
\item $\cR(\cF)$ is a compact set.
\item Let $\mathrm{conv}(\cR(\cF))$ denote the convex hull of $\cR(\cF)$.
Then, $\mathrm{conv}(\cR(\cF)) \setminus \cR(\cF)$ does not contain any Pareto optimal point.
\end{enumerate}

When the loss function is additionally strictly convex (in $\theta$ or $t$, respectively), the following properties also hold:
\begin{enumerate}
\setcounter{enumi}{2}
\item Every weakly Pareto optimal point in $\cR(\cF)$ is Pareto optimal. 
% That is, if there exist $\theta_1, \theta_2 \in \Theta$ such that $R_g(\theta_1) \le R_g(\theta_2)$ for all $g \in \{1,\ldots,m\}$ and $R_g(\theta_1) < R_g(\theta_2)$ for at least one $g$, then there exists $\theta' \in \Theta$ such that $R_g(\theta') < R_g(\theta_2)$ for all $g$.
\item The set $\mathrm{conv}(\cR(\cF)) \setminus \cR(\cF)$ does not contain any weakly Pareto optimal point.
\end{enumerate}
\end{theorem}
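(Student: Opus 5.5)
The plan is to reduce everything to two facts about the map $\bm R: \cF \to \bbR^m$: it is continuous, and each coordinate $R_g$ is convex on the convex domain. Continuity comes from dominated convergence. In the parametric case, Assumption~\ref{assumps_parametric}(i) gives that $\ell(y,f_\theta(x))$ is continuous in $\theta$ pointwise. Together with the envelope $h$ from (ii), this makes $\theta \mapsto R_g(\theta)$ continuous. In the nonparametric case, evaluation maps are continuous and $\ell(y,\cdot)$ is continuous, so $f \mapsto \ell(y,f(x))$ is continuous pointwise. Dominated convergence along nets (or sequences, if the topology is metrizable) then gives continuity of $R_g$. If nets are a concern, I would argue via sequential compactness or note that dominated convergence for nets holds when there is a uniform integrable envelope; this is the step I expect to need the most care. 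Compactness of $\cR(\cF)$ (item 1) then follows, since it is the continuous image of a compact set. Convexity of $R_g$ follows by integrating the pointwise convexity: $\ell(y, f_{\lambda\theta_1+(1-\lambda)\theta_2}(x)) \le \lambda \ell(\cdot,\theta_1) + (1-\lambda)\ell(\cdot,\theta_2)$, and in the nonparametric case we use $(\lambda f_1 + (1-\lambda) f_2)(x) = \lambda f_1(x) + (1-\lambda) f_2(x)$.

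For item 2, the key lemma is that $\mathrm{conv}(\cR(\cF)) \subseteq \cR(\cF) + \bbR^m_{\ge 0}$. Take $\bm r = \sum_i \lambda_i \bm R(f_i)$, a finite convex combination (Carathéodory). Set $\bar f = \sum_i \lambda_i f_i \in \cF$. Convexity of each $R_g$ gives $\bm R(\bar f) \le \bm r$ componentwise. If $\bm r \notin \cR(\cF)$, then $\bm R(\bar f) \ne \bm r$, so $\bm R(\bar f)$ Pareto dominates $\bm r$. Hence $\bm r$ is not Pareto optimal, whether optimality is taken in the convex hull or in $\cR(\cF) \cup \{\bm r\}$.

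For items 3 and 4, I use strict convexity. I interpret it as strict convexity of $\theta \mapsto \ell$ (or $t \mapsto \ell$) holding on a set of positive $P_g$-measure whenever the two predictors differ, so that $R_g$ is strictly convex in the parameter for every $g$. For item 3, suppose $\bm r = \bm R(f_2)$ is weakly but not Pareto optimal. Then some $f_1$ has $\bm R(f_1) \le \bm R(f_2)$ with strict inequality in some coordinate, so $f_1 \ne f_2$. Take $f' = \tfrac12(f_1+f_2)$. Strict convexity of every $R_g$ gives $R_g(f') < \tfrac12(R_g(f_1)+R_g(f_2)) \le R_g(f_2)$ for all $g$, which contradicts weak Pareto optimality. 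The subtle point, also flagged in Remark~\ref{rem:uniqueness}, is that distinct parameters might induce the same predictions. Strict inequality needs $f_1(X) \ne f_2(X)$ with positive probability under every $P_g$. This holds in the parametric case by strict convexity in $\theta$; in the nonparametric case I would state it as part of the strictness hypothesis.

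For item 4, take $\bm r \in \mathrm{conv}(\cR(\cF)) \setminus \cR(\cF)$ written as $\sum_i \lambda_i \bm R(f_i)$ with all $\lambda_i > 0$. Since $\bm r \notin \cR(\cF)$, the $f_i$ are not all equal. Strict convexity then gives $R_g(\bar f) < \bm r_g$ for every $g$, so $\bm R(\bar f)$ strictly dominates $\bm r$ in all components, and $\bm r$ is not weakly Pareto optimal.
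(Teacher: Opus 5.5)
Your proof follows the paper's proof step for step. You use dominated convergence for continuity of each $R_g$, and compactness of $\cR(\cF)$ as a continuous image. For item 2 you apply Jensen at $\bar f=\sum_i\lambda_i f_i$, getting $\bm R(\bar f)\le\bm r$ and $\bm R(\bar f)\neq\bm r$. Items 3 and 4 use the midpoint and strict-Jensen arguments. Two places where you depart from the paper need comment.

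\textbf{Dominated convergence along nets.} Your fallback claim, that dominated convergence holds along nets given an integrable envelope, is false. Index the net $(\mathbf{1}_F)$ by finite sets $F\subset[0,1]$, ordered by inclusion. It converges pointwise to $\mathbf{1}_{[0,1]}$ and is bounded by $1$, yet every one of its Lebesgue integrals is $0$. So only your sequential route is valid. That route needs $\cF$ to be sequentially compact (e.g.\ $\tau$ metrizable on $\cF$), which Assumption~\ref{assumps_nonparam} does not state. The paper carries the same tacit assumption: it checks $R_g(f_n)\to R_g(f)$ only along sequences and then calls $\bm R$ continuous. This is harmless for the paper's examples (weak topology on a ball of a separable RKHS, sup-norm for H\"older and sieve classes). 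Still, you should state it as a hypothesis rather than cover it by appealing to nets.

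\textbf{The strictness hypothesis.} Your strengthened strictness hypothesis in the nonparametric case is a genuine improvement. The paper glosses this point with ``or $f_1(X)\neq f_2(X)$ with positive probability'', but strict convexity of $\ell(y,\cdot)$ alone does not give items 3 and 4. Here is an example:
\begin{itemize}
\item Take $\cX=\{a,b\}$, group~1 concentrated at $X=a$, group~2 at $X=b$, $Y\equiv 0$, squared loss and $\cF=[-1,1]^{\{a,b\}}$.
\item Assumption~\ref{assumps_nonparam} holds, and $\cR(\cF)=[0,1]^2$.
\item The point $(0,1)$ is weakly but not Pareto optimal, so item 3 fails.
\item Add a third group with $X$ uniform on $\{a,b\}$ and $Y=\mathbf{1}\{X=b\}$. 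The midpoint $(0,1,1)$ of the risk vectors of $(f(a),f(b))=(0,1)$ and $(0,-1)$ lies in $\mathrm{conv}(\cR(\cF))\setminus\cR(\cF)$. It is weakly Pareto optimal because $r_1=0$, so item 4 fails too.
\end{itemize}
Your condition is that distinct elements of $\cF$ differ with positive $P_g$-probability for every $g$, so that every $R_g$ is strictly convex on $\cF$. This is exactly what your midpoint and multi-point strict-Jensen steps need, and what the paper's steps need as well. In the parametric case it follows automatically from strict convexity of $\theta\mapsto\ell(y,f_\theta(x))$ for each $(x,y)$.
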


\begin{figure}[t]
    \centering
    \includegraphics[width=1.0\linewidth]{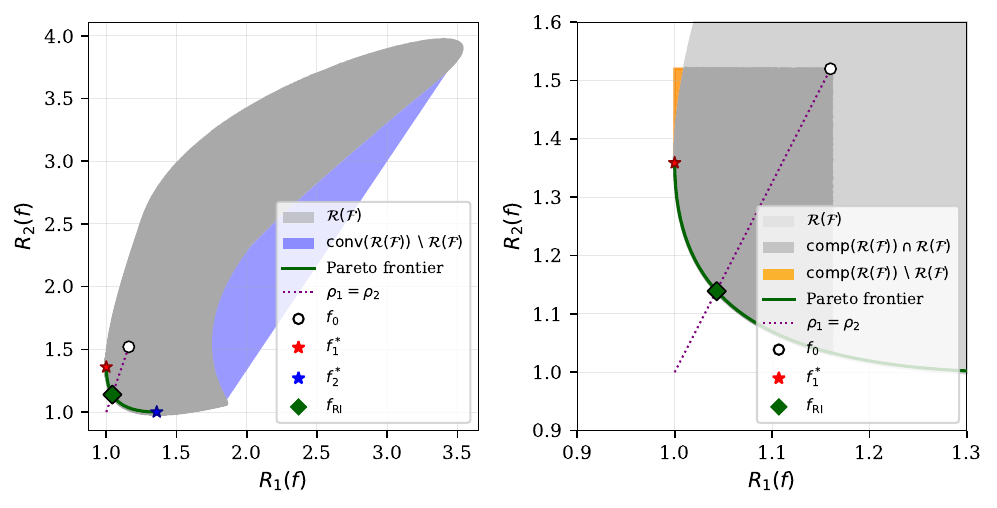}
    \caption{Risk set $\mathcal{R}(\mathcal{F})$, its convex hull (left), and comprehensive closure (right) for the linear regression setting (in Section~\ref{sec:formulation}) with $\Theta = \{\theta:\|\theta\| \leq 1\}$, $\beta_1 = (0.4, 0)$, $\beta_2 = (0.4, 0.6)$, $\Sigma_1 = \begin{psmallmatrix} 1 & 0.5 \\ 0.5 & 1 \end{psmallmatrix}$, 
    $\Sigma_2 = \begin{psmallmatrix} 1 & 0 \\ 0 & 1 \end{psmallmatrix}$, and $\sigma_g = 1$. Note that $\mathrm{conv}(\cR(\cF)) \setminus \cR(\cF)$ and $\mathrm{comp}(\cR(\cF)) \setminus \cR(\cF)$ contain no Pareto optimal points.}
    \label{fig:risk_set}
\end{figure}

Theorem~\ref{thm:risk_set} shows that while $\cR(\cF)$ need not be convex, its convex hull $\text{conv}(\cR(\cF))$ is compact and convex and—crucially—contains no (weakly) Pareto optimal points outside $\cR(\cF)$ (see Figure~\ref{fig:risk_set}). Thus, convexification constitutes an exact relaxation: although the bargaining problem is formulated over $\text{conv}(\cR(\cF))$, all solutions selected by standard bargaining criteria correspond to realizable risks in $\cR(\cF)$ induced by some $f\in\cF$. Classical bargaining theory takes compactness and convexity of the feasible set as primitive assumptions; our contribution is to derive these geometric properties directly from the function class and standard regularity conditions on the loss.

Beyond well-posedness, this result serves as a general bridge between cooperative bargaining theory and fair learning: any bargaining solution that selects Pareto optimal outcome can be imported into the fair learning setting whenever Assumptions~\ref{assumps_parametric} or \ref{assumps_nonparam} holds, without requiring case-by-case verification of geometric well-posedness. We now turn to the axiomatic properties within this framework.

% Theorem~\ref{thm:risk_set} shows that while $\cR(\cF)$ need not be convex, its convex hull $\text{conv}(\cR(\cF))$ is compact and convex and—crucially—contains no (weakly) Pareto optimal points outside $\cR(\cF)$ (see Figure~\ref{fig:risk_set}). Thus, convexification constitutes an exact relaxation: although the bargaining problem is formulated over $\text{conv}(\cR(\cF))$, all solutions selected by standard bargaining criteria correspond to realizable risks in $\cR(\cF)$ induced by some $f\in\cF$. In contrast to classical bargaining settings, which assume these properties, our results derive them from the function class $\cF$, enabling a principled cooperative bargaining analysis of fair learning. We now turn to the axiomatic properties within this framework.

% Theorem~\ref{thm:risk_set} provides a novel characterization of the risk set: $\text{conv}(\cR(\cF))$ is compact and convex, and all Pareto optimal points lie in the original risk set $\cR(\cF)$ (see Figure~\ref{fig:risk_set}). In contrast to classical bargaining settings that impose these properties as assumptions, our result derives them from the function class $\cF$, enabling a principled cooperative bargaining analysis of fair learning. Having established the framework, we now examine the axiomatic properties that make relative improvement a compelling fairness criterion. 

\section{Fairness Guarantees of Relative Improvement}
\label{sec:properties}
Why is relative improvement a principled fairness criterion? It ensures no group is harmed relative to baseline, and it satisfies Pareto optimality, scale invariance, and individual monotonicity that alternatives lack.

\subsection{No-Harm Guarantee}

When the baseline predictor $f_0 \in \cF$, the maximin relative improvement solution naturally ensures no group is harmed. 

\begin{theorem}[Bounded Loss Relative to Baseline]
\label{thm:bounded_loss}
Assume the baseline predictor $f_0 \in \cF$. Then the maximin relative improvement solution $f_{\text{RI}}$ from~\eqref{eq:rel_reward_opt_fnt} satisfies
\begin{equation}
\label{eq:bounded_loss}
R_g(f_{\text{RI}}) \le R_g(f_0), \quad \text{for all } g \in \cG.
\end{equation}
\end{theorem}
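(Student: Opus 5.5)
The argument compares the maximin value against the value attained by the baseline itself. Since $f_0 \in \cF$, it is a feasible candidate in \eqref{eq:rel_reward_opt_fnt}. For every $g$ we have $\rho_g(f_0) = \frac{R_g(f_0)-R_g(f_0)}{R_g(f_0)-R_g(f_g^{\ast})} = 0$, which is well defined because the standing assumption $R_g(f_0) > R_g(f_g^{\ast})$ makes the denominator strictly positive. Hence $\min_{g} \rho_g(f_0) = 0$.

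By optimality of $f_{\text{RI}}$, we get
\[
\min_{g\in\cG} \rho_g(f_{\text{RI}}) \;\ge\; \min_{g\in\cG}\rho_g(f_0) \;=\; 0 .
\]
It follows that $\rho_g(f_{\text{RI}}) \ge 0$ for every $g \in \cG$. We then multiply through by the positive denominator $R_g(f_0)-R_g(f_g^{\ast})$. This gives $R_g(f_0) - R_g(f_{\text{RI}}) \ge 0$, which is exactly \eqref{eq:bounded_loss}. The inequality holds for any maximizer, so non-uniqueness of $f_{\text{RI}}$ (Remark~\ref{rem:uniqueness}) causes no difficulty.

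The only step needing care is that a maximizer exists at all, so that $f_{\text{RI}}$ is well defined. Under Assumption~\ref{assumps_parametric} or \ref{assumps_nonparam}, each $R_g$ is continuous on a compact $\cF$ (Theorem~\ref{thm:risk_set}(1)), and then $\min_g \rho_g$ is continuous and attains its maximum. If one prefers not to rely on that, the same comparison works for any $f$ attaining the supremum. For any $\epsilon$-maximizer it gives $\rho_g \ge -\epsilon$, which is enough for the statement whenever the argmax is nonempty.

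I do not expect a real obstacle here: the result is a direct consequence of feasibility of $f_0$ together with the positivity of the normalizing denominators.
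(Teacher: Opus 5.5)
Your proof is correct and matches the paper's argument: feasibility of $f_0$ gives $\min_{g} \rho_g(f_{\text{RI}}) \ge \min_{g} \rho_g(f_0) = 0$, and the positive denominators $R_g(f_0) - R_g(f_g^{\ast})$ turn $\rho_g(f_{\text{RI}}) \ge 0$ into $R_g(f_{\text{RI}}) \le R_g(f_0)$. Your extra remark on existence of a maximizer is not needed, since the statement already presupposes $f_{\text{RI}}$, but it does no harm.
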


This property corresponds to {\it individual rationality} in bargaining theory: each group performs at least as well as the disagreement point (baseline). %It emerges directly from the relative improvement formulation: since we measure improvement as a fraction of achievable progress from $f_0$, requiring non-negative relative improvement is equivalent to requiring risk reduction.

Notably, minimax regret violates this property. Revisiting the motivating example in Figure~\ref{fig:motivating}, when group~1 has less potential improvement than group~2, minimizing worst-group regret can increase the risk of the less signal group, yielding $R_1(\theta_{\text{Regret}})>R_1(0)$. In contrast, relative improvement's normalization prevents such harm. 

% When group 1 has large potential improvement ($R_1(f_0) \gg R_1(f_1^{\ast})$) while group 2 has little ($R_2(f_0) \approx R_2(f_2^{\ast})$), minimizing $\max_g [R_g(f) - R_g(f_g^{\ast})]$ may increase $R_2(f)$ above $R_2(f_0)$ to reduce the large regret of group 1. In contrast, relative improvement's normalization prevents such harm. 

This bounded-loss guarantee is related to constraints used in some fair learning methods \citep{agarwal2019fair, chzhen2022minimax}, though those approaches typically require explicit bounded-loss constraints as part of their formulation. In contrast, our framework obtains this property directly from the maximin objective without additional constraints. 

\subsection{Axiomatic Characterization: $m=2$ case}
\label{subsec:axiom_2}
We first characterize axioms of the two group case, which corresponds to a 2-player KS solution in equation~\eqref{eq:ks-2player}.

\begin{proposition}[Relative improvement as the KS solution ($m=2$), adapted from \citet{kalai_1975_other}]
\label{prop:axioms}
For $m = 2$, $f_{\text{RI}}$ in Equation~\eqref{eq:rel_reward_opt_fnt} satisfies:

\begin{enumerate}
\item \textbf{Pareto Optimality (PO).} No $f' \in \cF$ satisfies $\rho_g(f') \geq \rho_g(f_{\text{RI}})$ for all $g$ with strict inequality for at least one.

\item \textbf{Symmetry (SYM).} For any permutation $\pi: \cG \to \cG$, denote $\pi * \cR(\cF) = \{\bm r \in \mathbb{R}^m : r_{\pi(g)} = s_g \text{ for all } g \text{ for some } \bm s \in \cR(\cF)\}$. Under any permutation $\pi$,
$\pi * \bm R(f_{\text{RI}})=\bm R(f_{\pi,\text{RI}}),$
where $f_{\pi,\text{RI}}$ denotes a solution with the permuted feasible set $\pi * \cR(\cF)$.

\item \textbf{Scale Invariance (SI).} Affine transformations $\widetilde{R}_g(f) = c_g R_g(f) + a_g$ (with $c_g > 0$) preserve relative improvements: $\widetilde{\rho}_g(\tilde{f}_{\text{RI}}) = \rho_g(f_{\text{RI}})$.

\item \textbf{Individual Monotonicity (IM).} For all $g$, if $\cF_1 \subseteq \cF_2$ with the same baseline $f_0$ and optimal risk for group $g'\ne g$, then $R_g(f_{2,\text{RI}}) \le R_g(f_{1,\text{RI}})$, ensuring that enlarging the feasible set while keeping other groups' optimum cannot harm any group.
\end{enumerate}

The relative improvement metric is the unique solution satisfying (PO), (SYM), (SI), and (IM) for $m=2$.
\end{proposition}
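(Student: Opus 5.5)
The plan is to reduce everything to the classical Kalai--Smorodinsky theorem on the utility set. First I would fix the bargaining problem $(\cU,d)$ with $\cU=-\mathrm{conv}(\cR(\cF))$ and $d_g=-R_g(f_0)$, and ideal point $u^{\max}_g=-R_g(f_g^\ast)$, assumed to satisfy $u^{\max}_g>d_g$. Theorem~\ref{thm:risk_set}(1) gives compactness of $\cR(\cF)$, hence of its convex hull. Theorem~\ref{thm:risk_set}(2) shows that every Pareto optimal point of the hull is realized by some $f\in\cF$. So the classical KS solution on $(\cU,d)$, which is Pareto optimal, is attained by a genuine predictor.

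Next I would show that for $m=2$ the maximin problem~\eqref{eq:rel_reward_opt_fnt} selects exactly the point in~\eqref{eq:ks-2player}. Let $\rho^\ast$ be the largest $t$ such that the point $d+t(u^{\max}-d)$ lies in the comprehensive hull of $\cU$; this is the KS point. Since $\rho_g$ is an increasing affine function of $u_g$, $\min_g\rho_g$ is maximized at the point where the segment from $d$ to $u^{\max}$ exits the set. If at the optimum $\rho_1>\rho_2$, one could slide along the frontier, using convexity and continuity of the hull, to raise $\rho_2$ while keeping the minimum larger. This gives equality of the $\rho_g$ at a Pareto optimal point, or else the point is dominated by such a point. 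This settles PO, provided ties among maximin solutions are broken toward Pareto optimal points; Theorem~\ref{thm:risk_set}(3)--(4) handles this under strict convexity. For $m=2$ the weakly Pareto segments can be removed by the KS equal-ratio characterization itself.

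The axioms then follow mostly by direct verification on $\rho$. For SYM, permuting coordinates permutes the $\rho_g$, $d$ and $u^{\max}$ together, so the maximin set is permuted. For SI, $\widetilde R_g=c_gR_g+a_g$ transforms $f_0$, $f_g^\ast$ and $f$ by the same affine map. Then $\widetilde\rho_g(f)=\rho_g(f)$ identically, so the optimizer and its $\rho$-values are unchanged. For IM, enlarging $\cF_1\subseteq\cF_2$ with $f_0$ and the other group's optimum fixed leaves $d$ and $u^{\max}_{g'}$ unchanged, while $u^{\max}_g$ can only increase. The new KS ray points more toward group $g$ and the feasible set is larger. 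The classical two-dimensional argument then shows group $g$'s utility at the new KS point is at least its old one: compare along the old frontier and use that the new set contains the old one, so the point where the new ray exits lies weakly northeast in coordinate $g$.

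Uniqueness is where I would simply import \citet{kalai_1975_other}: any solution satisfying PO, SYM, SI and IM on compact convex two-player problems with $d$ in the set coincides with KS. The main obstacle is checking that the class of problems generated by function classes $\cF$ is rich enough for Kalai--Smorodinsky's proof. That proof normalizes by SI to $d=0$ and $u^{\max}=(1,1)$, then compares with the triangle $\mathrm{conv}\{0,(1,0),(0,1),\text{KS point}\}$ using SYM and IM. I would handle this by exhibiting a simple parametric family realizing that quadrilateral as $\mathrm{conv}(\cR(\cF))$ after transformation. For example, take linear predictors over a finite set of distributions with $\Theta$ chosen appropriately, or argue directly at the level of the bargaining problem on $\mathrm{conv}(\cR(\cF))$, which Theorem~\ref{thm:risk_set} justifies as an exact relaxation.
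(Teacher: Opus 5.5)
Your plan is essentially the paper's own proof, which transports Kalai--Smorodinsky through the risk-to-utility mapping, with Theorem~\ref{thm:risk_set} and Theorem~\ref{thm:equal_rho_maximin} supplying well-posedness and maximin $=$ KS. Your direct checks of PO, SYM, SI and IM are a more detailed version of that argument; for $m=2$ you also need no tie-breaking or strict convexity for PO, since attainment of each group's ideal plus convexity already forces every maximin solution to have $\rho$-vector $(\rho^\ast,\rho^\ast)$. On the uniqueness obstacle you flag, realizing the auxiliary quadrilateral by a separate parametric family does not by itself help: the Kalai--Smorodinsky argument applies IM to that quadrilateral sitting inside the given problem with the same ideal point, and the non-comprehensive set $\mathrm{conv}(\cR(\cF))$ generally does not contain it, so the import only goes through once the axioms are read on comprehensive closures of risk sets (which leave the maximin point unchanged when $d$ and the ideal point are held fixed), in the spirit of Theorem~\ref{thm:comprehensive_closure}.
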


These axioms ensure: efficiency (PO), equal treatment of groups (SYM), measurement-independence (SI), and that more options help every group (IM). 

\subsection{Axiomatic Characterization: $m>2$ case}
\label{subsec:axiom_m}
% However, for $m>2$, \citet{roth1979impossibility} showed no solution can satisfy all four axioms simultaneously. The leximin refinement \citep{imai1983individual} we discussed in Section~\ref{subsec:ks_ri} resolves this and satisfies modified axioms (SYM) and (IM). 
For $m>2$ groups, \citet{roth1979impossibility} showed no solution can satisfy all four axioms (PO, SYM, SI, IM) simultaneously. The leximin refinement of the KS solution \citep{imai1983individual} resolves this impossibility by modifying the individual monotonicity axiom, while relying on comprehensive feasible sets.

\textbf{Leximin refinement.} Let $\rho_{(1)}(\bm u) \leq \cdots \leq \rho_{(m)}(\bm u)$ denote the sorted relative improvements (note that the group ordering may differ across $\bm u$):
\begin{equation}
\label{eq:leximin}
\bm u^{\ast} = \argmax_{u \in \cU}^{\text{lex}} \left( \rho_{(1)}(\bm u), \ldots, \rho_{(m)}(\bm u) \right),
\end{equation}
where the lexicographic order prioritizes earlier components: $\bm u \succ \bm u'$ if $\rho_{(i)}(\bm u) > \rho_{(i)}(\bm u')$ at the first differing index $i$. This refinement ensures a unique risk vector and Pareto optimality in multi-group settings.
%(formal definition in Appendix~\ref{app:leximin}).

\textbf{Comprehensive risk set.} Classical bargaining theory for multi-players assumes a \emph{comprehensive} feasible set: if $\bm u \in \cU$ and $\bm d \le \bm u' \le \bm u$ componentwise, then $\bm u' \in \cU$ (voluntary utility disposal). Our risk sets $\cR(\cF)$ are typically not comprehensive, as risk cannot be voluntarily increased. However, comprehensiveness is not restrictive in our setting, as established by the following result.\footnote{See Appendix~\ref{app:extension} for further discussion of leximin refinement and comprehensive closures.}

\begin{theorem}[Equivalence under comprehensive closure]
\label{thm:comprehensive_closure}
When $f_0\in\cF$, the leximin solution on $\cR(\cF)$ coincides with that on its comprehensive closure 
$
\mathrm{comp}(\cR(\cF)) = \{\mathbf{r}' : \exists \mathbf{r} \in \cR(\cF), -d_g \ge r'_g \ge r_g \ \forall g\},
$
where $-d_g = R_g(f_0)$ denotes the baseline risk for group $g$.
\end{theorem}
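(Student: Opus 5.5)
The approach is to work directly in risk coordinates. Relative improvement $\rho_g(\bm r) = (R_g(f_0) - r_g)/(R_g(f_0) - R_g(f_g^\ast))$ is strictly decreasing and affine in $r_g$, and it depends only on the fixed quantities $R_g(f_0)$ and $R_g(f_g^\ast)$. These are the same for $\cR(\cF)$ and for $\mathrm{comp}(\cR(\cF))$, since the ideal point of the closure is unchanged: every $\bm r'$ in the closure dominates nothing below $\cR(\cF)$, i.e.\ $r'_g \ge r_g \ge R_g(f_g^\ast)$. Write $C = \mathrm{comp}(\cR(\cF))$. Because $f_0 \in \cF$, the baseline vector $\bm R(f_0)$ lies in $\cR(\cF)$, and $\cR(\cF) \cap \{\bm r \le \bm R(f_0)\} \subseteq C$. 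I would first check that the leximin solution on $C$ exists and is unique. $C$ is compact, because it is the image of the compact set $\cR(\cF) \times [0,1]^m$ under a continuous map after intersecting with the box (Theorem~\ref{thm:risk_set}(1)); uniqueness of the leximin vector is the classical fact cited from \citet{imai1983individual}.

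Step 1: by Theorem~\ref{thm:bounded_loss}, and more precisely its leximin analogue, the leximin solution $\bm r^\ast$ on $\cR(\cF)$ satisfies $\bm r^\ast \le \bm R(f_0)$. The reason is that $\bm R(f_0)$ has all $\rho_g = 0$, so any profile with some $\rho_g < 0$ is lexicographically beaten at the first sorted coordinate. Hence $\bm r^\ast \in C$.

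Step 2 is the key domination argument. Take any $\bm r' \in C$. By definition there is $\bm r \in \cR(\cF)$ with $\bm r \le \bm r'$ componentwise, so $\rho_g(\bm r) \ge \rho_g(\bm r')$ for every $g$. Componentwise domination of the unsorted vector implies domination of the sorted vectors: if $a_g \ge b_g$ for all $g$, then $a_{(i)} \ge b_{(i)}$ for all $i$, by the standard order-statistic argument (at least $m-i+1$ of the $a$'s are $\ge b_{(i)}$). Sorted-vector domination implies $\bm r \succeq_{\text{lex}} \bm r'$. Consequently
\[
\max\nolimits^{\text{lex}}_{C} \preceq \max\nolimits^{\text{lex}}_{\cR(\cF)} = \rho(\bm r^\ast),
\]
with the inequality in the lex order. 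Combined with $\bm r^\ast \in C$ from Step 1, this shows that $\bm r^\ast$ attains the leximin optimum on $C$. Uniqueness of the leximin vector on $C$ then gives equality of the two solutions. Equivalently, any maximizer $\bm r'$ on $C$ has a dominating $\bm r \in \cR(\cF)$ with equal sorted profile, and strict monotonicity of $\rho$ combined with uniqueness forces $\bm r = \bm r' = \bm r^\ast$.

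The main obstacle is the uniqueness and well-definedness claim. On the nonconvex $\cR(\cF)$, distinct risk vectors can in principle share a sorted $\rho$-profile; for example, vectors differing by a permutation across groups would give sorted-equal profiles. I would handle this by stating the result at the level of the set of leximin-optimal risk vectors and showing the two sets coincide. Optimal sets on $\cR(\cF)$ lie in $C$ and remain optimal there by Step 2. Conversely, an optimum $\bm r'$ on $C$ is dominated by some $\bm r \in \cR(\cF)$ with equal sorted profile. If $\bm r \ne \bm r'$, some coordinate of $\rho(\bm r)$ is strictly larger while the others are weakly larger, which makes the sorted profile strictly larger, contradicting the optimality of $\bm r'$. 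So $\bm r' = \bm r \in \cR(\cF)$. This argument gives coincidence of the solution sets without needing convexity, and it reduces to a unique vector whenever uniqueness holds, e.g.\ via Theorem~\ref{thm:risk_set}(2) on the convex hull.
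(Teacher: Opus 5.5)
Your proof is correct and follows the paper's route. Individual rationality (Theorem~\ref{thm:bounded_loss}) puts the leximin solution on $\cR(\cF)$ inside $\mathrm{comp}(\cR(\cF))$, and componentwise domination of every closure point by a point of $\cR(\cF)$ keeps the closure's optimum inside $\cR(\cF)$; you turn this domination directly into lexicographic domination of sorted $\rho$-profiles rather than citing Pareto optimality of leximin, and your set-level exchange argument spells out the value-equality step the paper compresses into ``combining these observations'' without relying on uniqueness. One aside is wrong, though nothing depends on it: on the closure you only show $r'_g \ge R_g(f_g^\ast)$, and the closure's ideal point is strictly worse whenever every group-$g$ optimal predictor pushes some other group above its baseline risk, so the claim that the ideal point is unchanged does not hold in general; your argument is unaffected because you, like the paper, keep the normalization $R_g(f_0)-R_g(f_g^\ast)$ fixed.
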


%Though comprehensive closure is only required for $m > 2$, we illustrate it in Figure~\ref{fig:risk_set} for the two-group case.

% However, the leximin solution on $S$ coincides with that on its comprehensive closure $\mathrm{comp}(S) = \{\mathbf{r}' : \exists \mathbf{r} \in S, -d_g \ge r'_g \ge r_g \ \forall g\}$ where $-d_g = R_g(f_0)$ denotes the baseline risk for group $g$. This is because any point in $\mathrm{comp}(S)\setminus S$ is Pareto dominated by some point in $S$, and moreover, Theorem~\ref{thm:bounded_loss} guarantees $-d_g = R_g(f_0) \ge R_g(f_{\text{RI}})$ for all $g$, ensuring the solution remains within the comprehensive closure (though comprehensive closure is only required for $m > 2$, we illustrate it in Figure~\ref{fig:risk_set} for a two-group case).\footnote{See Appendix~\ref{app:extension} for further discussion of leximin refinement and comprehensive closures.}

\begin{table*}[t]
\caption{Axiomatic comparison of bargaining-based fairness solutions.}
\label{tab:axiom_comparison}
\begin{center}
\begin{small}
\begin{sc}
\begin{tabular}{lccccccc}
\toprule
Solution & PO & SYM & SI & IIA & IM & TI & SM \\
\midrule

\textit{Relative Improvement (KS-based)} 
 & & & & & & & \\

\quad $m=2$, maximin $=$ equal \citep{kalai_1975_other}
 & $\surd$ & $\surd$ & $\surd$ & $\times$ & $\surd$ & $\circ$ & $\times$ \\

\quad $m>2$, leximin \citep{imai1983individual}
 & $\surd$ & $\surd$ & $\surd$ & $\square$ & $\square$ & $\circ$ & $\times$ \\

\midrule
\textit{Nash Bargaining Solution} 
 & & & & & & & \\
 \quad general $m$ \citep{nash1950bargaining}
 & $\surd$ & $\surd$ & $\surd$ & $\surd$ & $\times$ & $\circ$ & $\times$ \\

\midrule
\textit{Explained Variance (Egalitarian-based)}
 & & & & & & & \\
 \quad equal \citep{kalai1977proportional}
 & $\triangle$ & $\surd$ & $\times$ & $\times$ & $\circ$ & $\surd$ & $\surd$ \\
 \quad $m=2$, maximin if equal = maximin \citep{chen2000individual}
 & $\surd$ & $\surd$ & $\times$ & $\times$ & $\circ$ & $\surd$ & $\surd$ \\
 % \quad $m>2$, equal
 % & $\triangle$ & $\surd$ & $\times$ & $\times$ & $\times$ & $\surd$ & $\surd$ \\
 % \quad $m>2$, leximin after Normalization
 % & $\surd$ & $\square$ & $\surd$ & $\surd$ & $\square$ & $\circ$ & $\times$ \\

\midrule
\textit{Regret (Equal Loss-Based)}
 & & & & & & & \\
 \quad equal \citep{chun1988equal}
 & $\triangle$ & $\surd$ & $\times$ & $\times$ & $\times$ & $\surd$ & $\square$ \\

\bottomrule
\end{tabular}
\end{sc}
\end{small}
\end{center}
\vskip 0.05in
\footnotesize{
\textbf{Axioms:}
PO = Pareto Optimality; SYM = Symmetry; SI = Scale Invariance;
IIA = Independence of Irrelevant Alternatives;
IM = Individual Monotonicity;
TI = Translation Invariance;
SM = Strong Monotonicity. \\
$\surd$: satisfied and used for characterization;
$\square$: satisfied after modification and used for characterization;
$\circ$: satisfied but not used;
$\triangle$: weak Pareto optimality;
$\times$: violated.
}
\vskip -0.1in
\end{table*}

\begin{proposition}[Relative improvement as the KS solution ($m>2$), adapted from \citet{imai1983individual}]
\label{prop:generalized_axioms}
For $m > 2$ and any baseline $f_0\in\cF$, $f_{\text{RI}}$, a leximin relative improvement solution, satisfies:
\begin{enumerate}

\item \textbf{Pareto Optimality (PO).} (Same as Proposition~\ref{prop:axioms}.)

\item \textbf{Symmetry (SYM).} (Same as Proposition~\ref{prop:axioms}.)

\item \textbf{Scale Invariance (SI).} (Same as Proposition~\ref{prop:axioms}.)

\item \textbf{Independence of Irrelevant Alternatives with Ideal point (IIIA).} If $\text{comp}(\cR(\cF_1)) \subseteq \text{comp}(\cR(\cF_2))$ with the same baseline $f_0$ and 
the same optimal risk profile $(R_g(f^{\ast}_g))_{g \in \cG}$, and if $\bm R(f_{2,\text{RI}}) \in \text{comp}(\cR(\cF_1))$, then $\bm R(f_{1,\text{RI}}) = \bm R(f_{2,\text{RI}})$. % This guarantees stability when irrelevant predictors are added, provided the baseline and ideal points remain unchanged.

\item \textbf{Modified Individual Monotonicity (IM').} Define the projection ${}^{g}\cR(\cF) = \{(r_1, \ldots, r_{g-1}, r_{g+1}, \ldots, r_m) : r \in \cR(\cF)\}$. If $\text{comp}(\cR(\cF_1)) \subseteq \text{comp}(\cR(\cF_2))$ with the same baseline $f_0$ and $\text{comp}({}^{g}\cR(\cF_1)) = \text{comp}({}^{g}\cR(\cF_2))$, then $R_g(f_{2,\text{RI}}) \le R_g(f_{1,\text{RI}})$. %This ensures expanding opportunities for other groups doesn't harm group $g$.
\end{enumerate}
The relative improvement metric is the unique solution satisfying (PO), (SYM), (SI), (IIIA), and (IM') for $m>2$.
\end{proposition}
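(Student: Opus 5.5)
The plan is to reduce the statement to Imai's characterization of the lexicographic Kalai--Smorodinsky solution on comprehensive, compact, convex bargaining problems. The two ingredients that make the reduction legitimate are Theorem~\ref{thm:risk_set} and Theorem~\ref{thm:comprehensive_closure}. Concretely, I would work in utility space with $\cU=-\cR(\cF)$, disagreement point $d_g=-R_g(f_0)$ and ideal point $u_g^{\max}=-R_g(f_g^{\ast})$. I would then consider the bargaining problem $S=\mathrm{comp}(\mathrm{conv}(\cU))$, restricted to $\{\bm u\ge \bm d\}$.

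First I would establish that $S$ is a legitimate Imai problem, meaning it is compact, convex, comprehensive relative to $\bm d$, and contains a point strictly dominating $\bm d$. Compactness and convexity follow from part~1 of Theorem~\ref{thm:risk_set} together with the fact that the convex hull and the box-type comprehensive closure of a compact set are compact. The strict-domination condition needs slightly more care: I would take a uniform mixture of the group optima in $\mathrm{conv}(\cR(\cF))$, use convexity of each $R_g$ under Assumption~\ref{assumps_parametric} or \ref{assumps_nonparam}, and invoke the standing assumption $R_g(f_0)>R_g(f_g^{\ast})$.

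Next I would show that the leximin point of $S$ is realized by some $f\in\cF$ and coincides with the leximin point over $\cR(\cF)$. Imai's solution on $S$ is Pareto optimal in $S$. A Pareto optimal point of the closure cannot lie in $\mathrm{comp}(\cdot)\setminus\mathrm{conv}(\cdot)$, since any such point is dominated by its generating point. By part~2 of Theorem~\ref{thm:risk_set}, it cannot lie in $\mathrm{conv}(\cR(\cF))\setminus\cR(\cF)$ either. Hence it equals $\bm R(f_{\text{RI}})$ for some $f_{\text{RI}}\in\cF$. Theorem~\ref{thm:comprehensive_closure} then confirms that leximin over $\cR(\cF)$ and leximin over its closure agree, with $f_0\in\cF$ guaranteeing the individual rationality needed for that theorem via Theorem~\ref{thm:bounded_loss}.

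The five axioms then transfer one by one. (PO) holds because the leximin point is Pareto optimal in $S\supseteq \cU$. (SYM) holds because permuting coordinates permutes the sorted vector $\rho_{(\cdot)}$. (SI) holds because $\rho_g$ is invariant under $R_g\mapsto c_gR_g+a_g$: the baseline, the ideal, and the current risk all transform identically, so the ratio in \eqref{eq:rel_reward_opt_fnt} is unchanged. (IIIA) and (IM') are exactly Imai's axioms stated on comprehensive closures, which is why the statement phrases them through $\mathrm{comp}(\cR(\cF_i))$. For these two I would cite \citet{imai1983individual} applied to $S_1\subseteq S_2$, after checking that the common baseline and ideal hypotheses match Imai's normalization.

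Uniqueness is the converse direction. Suppose a solution $\phi$ on learning problems satisfies the five axioms. Because every compact, convex, comprehensive problem whose ideal point dominates its disagreement point can be approximated, or realized, as $\mathrm{comp}(\cR(\cF))$ for a suitable class, Imai's uniqueness theorem forces $\phi$ to equal the leximin KS solution.

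I expect this realizability step to be the main obstacle. Since the uniqueness is only claimed up to risk vectors (Remark~\ref{rem:uniqueness}), I would first try the following construction. Take a finite-dimensional parametric family with linear risks: $\Theta=$ a polytope, $\ell$ linear in $\theta$ on a finite mixture of point masses, so that $\cR(\cF)$ can equal any polytope. Then pass to general convex sets by continuity of the leximin solution. If that route fails, I would restrict the uniqueness claim to the class of problems generated by $\cF$ and argue that Imai's proof only uses polytope-type test problems, which this construction supplies.
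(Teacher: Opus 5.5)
Your route is the paper's own: pass to the comprehensive closure via Theorem~\ref{thm:comprehensive_closure}, then import Imai's axioms. You add useful care about convexity, non-degeneracy and uniqueness, but the bridge from Imai's solution to $f_{\text{RI}}$ fails. Imai's lexicographic KS solution on $S$ normalizes gains by the ideal point of $S$ itself. Because $S\subseteq\{\bm u\ge\bm d\}$, this ideal is $a_g=-\min\{R_g(f): f\in\cF,\ \bm R(f)\le\bm R(f_0)\}$. That is strictly smaller than $u_g^{\max}=-R_g(f_g^{\ast})$ as soon as every minimizer of $R_g$ pushes some other group above its baseline. Theorem~\ref{thm:comprehensive_closure} only shows that the leximin of $\rho$, with normalization frozen at $R_g(f_g^{\ast})$, is the same on $\cR(\cF)$ and on $\mathrm{comp}(\cR(\cF))$; nothing links that point to Imai's.

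The two points really differ. Take one feature with $\bE_g X^2=1$, $\beta=(1,3,1)$, $f_0=0$ and $\Theta=[-5,5]$. The gains $R_g(0)-R_g(\theta)$ are $\theta(2-\theta)$, $\theta(6-\theta)$ and $\theta(2-\theta)$. The individually rational range is $[0,2]$, so group~2's ideal gain in $S$ is $8$, not $9$. The $\rho$-maximin is unique by Theorem~\ref{thm:maximin_unique}, hence it is the leximin; it sits at $\theta=3/2$ with $\rho\equiv 3/4$. Imai's solution on $S$ instead sits at $\theta=10/7$. So ``Pareto optimal, hence equal to $\bm R(f_{\text{RI}})$'' conflates two different points. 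Citing Imai for (IIIA), (IM') and uniqueness fails for the same reason: the statement's (IIIA) fixes $(R_g(f_g^{\ast}))_g$, while Imai's fixes the ideal of the comprehensive problem. (The paper's proof makes the same silent identification when it writes $f(S,\bm d)=g(\mathrm{comp}(S),\bm d)$ and then switches to $I(\mathrm{comp}(S_1))=I(\mathrm{comp}(S_2))$.)

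Your non-degeneracy step is also false. The uniform mixture only bounds coordinate $h$ by $\frac1m\sum_g R_h(f_g^{\ast})$, and the terms $R_h(f_g^{\ast})$ with $g\ne h$ can exceed $R_h(f_0)$. Take $\beta=(1,-1,1)$ in the same model. Group~1 improves only for $\theta\in(0,2)$ and group~2 only for $\theta\in(-2,0)$, so no predictor strictly improves everyone. Then $\mathrm{comp}(\cR(\cF))=\{\bm R(f_0)\}$ and Imai's domain is never reached. Condition~\eqref{eq:strict_domination} is therefore an extra hypothesis, not a consequence of $f_0\in\cF$.

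Both defects are cured by assuming that each $R_g(f_g^{\ast})$ is attained by some $f^{(g)}\in\cF$ with $\bm R(f^{(g)})\le\bm R(f_0)$. Then $a_g=u_g^{\max}$, Imai's point is $\bm R(f_{\text{RI}})$, and your averaging argument applied to the $f^{(g)}$ gives strict domination. Without that assumption, you can still verify (PO), (SYM), (SI) and the stated (IIIA) directly for the frozen-normalization leximin. This uses that $\mathrm{comp}(\cR(\cF))=\mathrm{comp}(\mathrm{conv}\,\cR(\cF))$ is convex and compact, so its leximin point is unique. (IM') and uniqueness then need their own argument.

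Finally, do not rely on continuity for uniqueness: for $m\ge 3$ the leximin point is not Hausdorff-continuous. With $\bm d=0$, the set $\mathrm{comp}\,\mathrm{conv}\{e_1,e_2,e_3,(\tfrac12,\tfrac12,1)\}$ has solution $(\tfrac12,\tfrac12,1)$. Adding the point $(\tfrac12+\varepsilon,\tfrac12+\varepsilon,\tfrac12)$ moves the solution to $(t,t,t)$ with $t=\tfrac12+\tfrac{\varepsilon}{1+2\varepsilon}\to\tfrac12$. Your fallback, realizing Imai's polytope test problems as linear-risk classes lying below the baseline, is the workable route.
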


The condition $\text{comp}(\cR(\cF_1)) \subseteq \text{comp}(\cR(\cF_2))$ appearing in axioms (IIIA) and (IM') means that any predictor in $\cF_1$ can be weakly Pareto dominated by some predictor in $\cF_2$. Thus, (IM') states that Pareto-improving expansions of the feasible set that leave other groups unchanged should not worsen group $g$’s relative improvement.

In summary, extending to $m>2$ groups requires two refinements. The leximin criterion resolves non-existence and non-uniqueness by hierarchically prioritizing relative improvements. Comprehensive closure ensures compatibility with classical bargaining axioms without changing the solution. Together, these yield a unique, scale-invariant, and stable fairness criterion.

\subsection{Comparison with Alternative Bargaining Solutions}
\label{subsec:axiom_comparison}

To understand why the axioms satisfied by relative improvement are desirable, we compare them with alternative bargaining solutions introduced in Section~\ref{subsec:comparison}. Each solution satisfies different axioms (detailed definitions in Appendix~\ref{app:axioms}).

Table~\ref{tab:axiom_comparison} compares the axiomatic properties of bargaining-based fairness solutions across various numbers of groups. Since absolute-scale approaches like Group DRO, minimax regret, and maximin explained variance share the same axiomatic structure but differ only in their reference points ($\bm d$ or $\bm u^{\ast}$), the table reports explained variance and regret as representative cases. For $m>2$, the axiomatic characterization of absolute-scale methods are formulated for normalized bargaining problems; after rescaling by potential improvement, the resulting problem is uniquely solved by the KS solution \citep{chen2000individual}.
% \textbf{Why Scale Invariance Matters.} Consider two groups at baseline accuracies 50\% and 95\%, with optima 80\% and 98\%—improvements of 30 vs. 3 percentage points. GDRO, MMV, and MMR operate on absolute scales, making 30 and 3 incomparable when groups face different task difficulties. Only KS normalizes by achievable improvement, ensuring scale-invariant judgments.

\textbf{Why Full Pareto Optimality Matters.} GDRO, MMV, and MMR only guarantee weak PO, potentially selecting dominated solutions. Full PO ensures no missed opportunities to improve some groups without harming others.

\textbf{Why Individual Monotonicity Matters.} Nash solution violates IM: more expressive models may harm some groups. In contrast, IM guarantees that richer function classes weakly benefit all groups in risks.

As discussed in Section~\ref{sec:intro}, scale invariance is essential when groups have different baseline predictability. Only KS satisfies (PO), (SYM), (SI), and (IM) simultaneously—a compelling combination for fair learning.

\begin{remark}[Axiom-based criterion selection]
\label{rem:axiom_selection}
No single fairness criterion dominates all others across every setting. Indeed, most criteria in Table~\ref{tab:axiom_comparison} select Pareto-optimal risk profiles, so they cannot be ranked by Pareto dominance alone; the table instead shows that each is uniquely characterized by a distinct subset of axioms. Rather than asserting that relative improvement is universally preferable, we view the axiomatic framework as a \emph{decision guide}: a practitioner should first identify which axioms are most relevant to their context, and the solution is then uniquely determined. For example, when groups differ substantially in inherent predictability, scale invariance is essential---making relative improvement the natural choice. Conversely, a practitioner who prioritizes strong monotonicity over scale invariance is led to the egalitarian solution instead. 
\end{remark}

\section{Empirical Estimator}
\label{sec:empirical}
We now turn to finite-sample estimation and its empirical counterparts. Given $n$ samples where group $g$ has $n_g$ observations $\{(x_{ig}, y_{ig})\}_{i=1}^{n_g}$, the empirical risk for group $g$ is
$\hat{R}_g(f) = 1/n_g \sum_{i=1}^{n_g} \ell(y_{ig}, f(x_{ig})).$

Let $f_0$ denote a baseline predictor. The empirical group-optimal predictor is $\hat{f}_g^{\ast} \in \arg \min_{f \in \cF} \hat{R}_g(f)$ 
with risk $\hat{R}_g^{\ast} = \hat{R}_g(\hat{f}_g^{\ast})$.

For each group $g$, the empirical relative improvement is
\begin{equation}
\hat{\rho}_g(f) = \frac{\hat{R}_g(f_0) - \hat{R}_g(f)}{\hat{R}_g(f_0) - \hat{R}_g^{\ast}},
\end{equation}
and we define an empirical maximin predictor as $\hat{f}_{\text{RI}} \in \arg \max_{f \in \cF} \min_{g \in \cG} \hat{\rho}_g(f).$
% \begin{equation}
% \hat{f}_{\text{RI}} = \arg \max_{f \in \cF} \min_{g \in \cG} \frac{\hat{R}_g(f_0) - \hat{R}_g(f)}{\hat{R}_g(f_0) - \hat{R}_g^{\ast}}.
% \end{equation}
The population relative improvement value using such empirical optimal predictor is
\begin{equation}
\rho_g(\hat{f}_{\text{RI}}) = \frac{R_g(f_0) - R_g(\hat{f}_{\text{RI}})}{R_g(f_0) - R_g(f_g^{\ast})}.
\end{equation}

To establish convergence guarantees for $\hat{f}_{\text{RI}}$, we require a uniform concentration of empirical risks around their population counterparts.

\begin{assumption}[Uniform concentration for group risks]
\label{asmp:cond1}
For each group $g\in\cG$ and all $t>0$, there exists a nondecreasing function $r_{n_g}(t)$ such that, with probability at least $1-2e^{-t}$,
\[
\sup_{f\in\cF}\,\big|\hat R_g(f)-R_g(f)\big|\ \le\ r_{n_g}(t).
\]
\end{assumption}

The following lemma provides concrete sufficient conditions under which Assumption~\ref{asmp:cond1} holds.

\begin{lemma}[Sufficient conditions for Assumption~\ref{asmp:cond1}]
\label{lem:suff-to-cond1}
Assumption~\ref{asmp:cond1} holds under the following conditions:

\begin{itemize}
\item (Lipschitz Loss) $\ell(y,z)$ is $L$-Lipschitz in $z$;
\item (Entropy Condition) There exist $C_0>0$ and $p\in[0,2)$ with
$\log\cN(\varepsilon,\cF,L_2(P_{g,X}))\le C_0\,\varepsilon^{-p}$ for every $g\in\cG$, where $P_{g,X}$ denotes the marginal distribution of $P_g$ on $X$.
\item Either (A) (Bounded Loss) $\ell$ is bounded in $[0,1]$, or (B) (Sub-Gaussian Envelope) $\sup_{f\in\cF}|\ell(y,f(x))|$ is sub-Gaussian with parameter $\sigma>0$ under each $P_g$.
\end{itemize}
Then for some constants $C_1,C_2>0$, \[
r_{n_g}(t)\ \le\ C_1\,\frac{L}{\sqrt{n_g}}\;+\;C_2\,\sqrt{\frac{t}{n_g}}.
\]
\end{lemma}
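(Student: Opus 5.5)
The plan is to split $\sup_{f\in\cF}|\hat R_g(f)-R_g(f)|$ into its mean and its fluctuation around the mean. The mean is controlled by symmetrization, Ledoux--Talagrand contraction and Dudley's entropy integral. The fluctuation is controlled by a concentration inequality: bounded differences in case (A), and a sub-Gaussian (Talagrand-type or Adamczak) inequality in case (B). Write $Z_g=\sup_{f\in\cF}|\hat R_g(f)-R_g(f)|$ and $\ell\circ\cF=\{(x,y)\mapsto \ell(y,f(x)):f\in\cF\}$.

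\textbf{Expectation.} Symmetrization gives $\bE Z_g\le 2\,\bE\,\mathfrak{R}_{n_g}(\ell\circ\cF)$, where $\mathfrak{R}_{n_g}$ is the empirical Rademacher complexity. Because $\ell(y,\cdot)$ is $L$-Lipschitz, the contraction inequality (applied conditionally on the $y_i$, with centered functions $\ell(y_i,z)-\ell(y_i,0)$) yields $\bE\,\mathfrak{R}_{n_g}(\ell\circ\cF)\le 2L\,\bE\,\mathfrak{R}_{n_g}(\cF)$ up to a constant. Dudley's chaining bound then gives
\[
\mathfrak{R}_{n_g}(\cF)\lesssim \frac{1}{\sqrt{n_g}}\int_0^{D}\sqrt{\log\cN(\varepsilon,\cF,L_2(P_{n,X}))}\,d\varepsilon .
\]
With the entropy condition $\log\cN\le C_0\varepsilon^{-p}$, the integrand is $\varepsilon^{-p/2}$, which is integrable at $0$ exactly because $p<2$. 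This produces $\bE Z_g\le C_1L/\sqrt{n_g}$.

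The point I expect to need the most care is that the entropy condition is stated in $L_2(P_{g,X})$, whereas chaining uses the empirical $L_2(P_{n,X})$ metric. I would first try to handle this by reading the hypothesis as a uniform bound, or by using a bracketing/population version of the chaining bound (e.g. van der Vaart--Wellner Lemma 3.4.2 in a uniform-entropy form). Failing that, I would pass through the standard comparison of empirical and population norms for classes with polynomial entropy. The diameter $D$ must also be finite. In case (A) the loss class lies in $[0,1]$. In case (B) I would work with an envelope bounded in $L_2$, with constants depending on $\sigma$ absorbed into $C_1$.

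\textbf{Concentration.} In case (A), changing one sample changes $Z_g$ by at most $1/n_g$. McDiarmid then gives $Z_g\le \bE Z_g+\sqrt{t/(2n_g)}$ with probability at least $1-e^{-t}\ge 1-2e^{-t}$. In case (B), McDiarmid does not apply directly. I would use Adamczak's version of Talagrand's inequality for unbounded classes with sub-Gaussian envelope ($\psi_2$ norm of the max envelope). Its extra term is of order $\sigma\sqrt{\log n_g}\,t/n_g$. Alternatively, a simpler route is to truncate the envelope at level $\sigma\sqrt{2(t+\log n_g)}$ and control the truncated part with a sub-Gaussian tail plus a union bound. The resulting terms are dominated by $C_2\sqrt{t/n_g}$ for $t\lesssim n_g$. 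For larger $t$ I would adjust constants, or note that the bound is then trivial relative to the envelope.

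\textbf{Assembling.} Combining the two steps gives $r_{n_g}(t)=C_1L/\sqrt{n_g}+C_2\sqrt{t/n_g}$, which is nondecreasing in $t$, as Assumption~\ref{asmp:cond1} requires. The factor $2$ in the probability $1-2e^{-t}$ leaves room for a two-sided or two-event union bound. The main obstacle is case (B), specifically getting a clean $\sqrt{t/n_g}$ tail without logarithmic factors. I would accept constants depending on $\sigma$ and restrict to the regime $t\le n_g$, where the extra terms are absorbed.
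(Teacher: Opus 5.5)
Your architecture matches the paper's: symmetrization and Dudley for the mean, McDiarmid in case (A), truncation in case (B). Your contraction step replaces the paper's Lipschitz transfer of covering numbers from $\cF$ to $\ell\circ\cF$; that change is harmless and gives the linear dependence on $L$ more directly.

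The genuine gap is the concentration step in case (B). Neither tool you propose gives $C_2\sqrt{t/n_g}$ for \emph{all} $t>0$, which Assumption~\ref{asmp:cond1} requires. Write $F_g=\sup_f|\ell(y,f(x))|$ for the envelope.
Adamczak's extra term $\sigma\sqrt{\log n_g}\,t/n_g$ is dominated by $\sqrt{t/n_g}$ only for $t\lesssim n_g/\log n_g$, not for all $t\le n_g$ as you claim.
Truncating at $\tau=\sigma\sqrt{2(t+\log n_g)}$ is worse. Under McDiarmid the bounded-class deviation becomes $\tau\sqrt{t/(2n_g)}=\sigma\sqrt{(t+\log n_g)\,t/n_g}$, already off by $\sqrt{\log n_g}$ at $t=1$; under Bousquet a $\tau t/n_g$ term remains.
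The crude envelope bound $Z_g\le 2\bE F_g+O(\sigma\sqrt{t/n_g})$ has the right form only for $t\gtrsim n_g$.

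What is missing is a deviation bound governed by the $\psi_2$-norm of a single envelope rather than of $\max_i F_g(x_i,y_i)$. Talagrand's Orlicz-norm inequality for sums of independent mean-zero processes (e.g.\ van der Vaart--Wellner, Prop.~A.1.6 with $p=q=2$) gives $\|Z_g\|_{\psi_2}\lesssim \bE Z_g+\sigma/\sqrt{n_g}$. Hence $Z_g\lesssim \sqrt{t}\,\bE Z_g+\sigma\sqrt{t/n_g}$ with probability at least $1-2e^{-t}$, which has the required form with $C_2$ depending on $L$ and $\sigma$. A McDiarmid inequality with sub-Gaussian differences works equally well, since replacing one sample moves $Z_g$ by at most $(F_g(x_i,y_i)+F_g(x_i',y_i'))/n_g$.
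The paper's own fixed-$\tau$ truncation does not supply this step either. It relies on $\sup_f|(\hat P_g-P_g)(\ell_f-\phi_\tau(\ell_f))|\le|(\hat P_g-P_g)Z|$ with $Z=F_g\mathbf{1}\{F_g\ge\tau\}$, which fails in general; only $(\hat P_g+P_g)Z$ dominates the left side.

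Second, your worry about the metric is well founded. The entropy hypothesis is stated in $L_2(P_{g,X})$, while chaining needs empirical covering numbers, and the paper inserts $L_2(P_g)$ covering numbers into Dudley's bound without comment. But your last fallback, comparing empirical and population norms, cannot succeed, because population covering numbers do not control the empirical process at all.
Take $P_{g,X}$ uniform on $[0,1]$, $Y\equiv 0$, $\ell(y,z)=\min\{|y-z|,1\}$ and $\cF=\{\mathbf{1}_A: A\subset[0,1]\ \text{finite}\}$. Every $f\in\cF$ vanishes in $L_2(P_{g,X})$, so $\cN(\varepsilon,\cF,L_2(P_{g,X}))=1$ for all $\varepsilon$, and every hypothesis of case (A) holds. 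Yet $R_g\equiv 0$ while $\hat R_g(\mathbf{1}_{\{x_1,\dots,x_{n_g}\}})=1$, so $\sup_{f}|\hat R_g(f)-R_g(f)|=1$ for every sample.

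The entropy condition therefore has to be strengthened, in one of two ways:
(i) a uniform bound over finitely supported $Q$, which is your first option;
(ii) a bracketing bound in $L_2(P_g)$, which is your second. It transfers to $\ell\circ\cF$ through the Lipschitz property and is used via the bracketing maximal inequality instead of symmetrization.
Under either reading your expectation step goes through.
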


% \begin{lemma}[Sufficient conditions for Assumption~\ref{asmp:cond1}]
% \label{lem:suff-to-cond1}
% Assumption~\ref{asmp:cond1} holds if: (i) $\ell(y,z)$ is $L$-Lipschitz in $z$; (ii) $\log\cN(\varepsilon,\cF,L_2(P_{g,X}))\le C_0\,\varepsilon^{-p}$ for some $C_0>0$ and $p\in[0,2)$, where $P_{g,X}$ is the marginal of $P_g$ on $X$; and either (iii-A) $\ell$ is bounded in $[0,1]$, or (iii-B) $\sup_{f\in\cF}|\ell(y,f(x))|$ is sub-Gaussian with parameter $\sigma>0$ under each $P_g$. Then for some $C_1,C_2>0$,
% \[
% r_{n_g}(t)\ \le\ C_1\,\frac{L}{\sqrt{n_g}}\;+\;C_2\,\sigma\,\sqrt{\frac{t}{n_g}},
% \]
% where $\sigma=1$ in case (iii-A).
% \end{lemma}

We also require that the baseline predictor is improvable for each group.

\begin{assumption}[Positive improvement bound]
\label{asmp:gap}
% There exists $\Delta>0$ such that for every $g\in\cG$,
% \[
% R_g(f_0)-R_g(f_g^{\ast})\ \ge\ \Delta,
% \quad
% \text{where } f_g^{\ast}\in\arg\min_{f\in\cF} R_g(f).
% \]
For every group $g\in\cG$, there exists potential for improvement: $R_g(f_0)- R_g(f_g^{\ast})>0$ where $f_g^{\ast}\in\arg\min_{f\in\cF} R_g(f)$. Define $\Delta:=\ \min_{g\in\cG} \big(R_g(f_0)-R_g(f_g^{\ast})\big) > 0$.
\end{assumption}

With these assumptions in place, we can now state our main convergence result.

\begin{theorem}[Convergence Rate of Relative Improvement]\label{thm:RI_main}
Let
$$
\hat f_{\text{RI}}\in\arg\max_{f\in\cF}\min_{g\in\cG}\hat\rho_g(f),
\quad
f_{\text{RI}}\in\arg\max_{f\in\cF}\min_{g\in\cG}\rho_g(f).
$$ 
Under Assumptions~\ref{asmp:cond1} and~\ref{asmp:gap}, for any $\delta\in(0,1)$, with probability at least $1-\delta$,
\begin{equation}\label{eq:ri_main_rate}
\min_{g\in\cG}\rho_g(\hat f_{\mathrm{RI}})
\ge
\min_{g\in\cG}\rho_g(f_{\mathrm{RI}})
-
\frac{C}{\Delta}\max_{g\in\cG} r_{n_g}\!\big(\log(2m/\delta)\big)
\end{equation}
for some constant $C>0$. We assume $n_g$ is sufficiently large that $\max_{g\in\cG}\ r_{n_g}\!\big(\log(2m/\delta)\big) < \Delta/4$ holds.
\end{theorem}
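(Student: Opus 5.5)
The plan is a standard uniform-deviation argument for the ratio functionals $\hat\rho_g$ and $\rho_g$, followed by the usual ``empirical maximizer is near-optimal'' sandwich. Set $t=\log(2m/\delta)$ and let $\varepsilon:=\max_{g} r_{n_g}(t)$. By Assumption~\ref{asmp:cond1} and a union bound over the $m$ groups, with probability at least $1-2me^{-t}=1-\delta$ the event
\[
E=\Bigl\{\sup_{f\in\cF}|\hat R_g(f)-R_g(f)|\le \varepsilon \ \text{for all } g\in\cG\Bigr\}
\]
holds. Everything below is deterministic on $E$. (If one insists on $1-\delta$ exactly rather than $1-\delta$ up to the factor in the union bound, adjust $t$ to $\log(2m/\delta)$ as stated; this matches.)

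\textbf{Step 1: denominators.} On $E$, $|\hat R_g(f_0)-R_g(f_0)|\le\varepsilon$. The optimal empirical risk also satisfies $|\hat R_g^{\ast}-R_g(f_g^{\ast})|\le\varepsilon$, since $\hat R_g^\ast\le \hat R_g(f_g^\ast)\le R_g(f_g^\ast)+\varepsilon$ and $R_g(f_g^\ast)\le R_g(\hat f_g^\ast)\le \hat R_g^\ast+\varepsilon$. Writing $D_g=R_g(f_0)-R_g(f_g^\ast)\ge\Delta$ and $\hat D_g$ for its empirical analogue, we get $|\hat D_g-D_g|\le 2\varepsilon\le \Delta/2$, using $\varepsilon<\Delta/4$. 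Hence $\hat D_g\ge D_g/2\ge \Delta/2>0$, so $\hat\rho_g$ is well defined.

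\textbf{Step 2: uniform closeness of $\hat\rho_g$ and $\rho_g$.} Write $N_g(f)=R_g(f_0)-R_g(f)$, with $|\hat N_g(f)-N_g(f)|\le 2\varepsilon$. Then
\[
\hat\rho_g-\rho_g=\frac{\hat N_g-N_g}{\hat D_g}+N_g\Bigl(\frac1{\hat D_g}-\frac1{D_g}\Bigr).
\]
The first term is at most $4\varepsilon/\Delta$. The second is at most $|\rho_g(f)|\cdot 2\varepsilon/\hat D_g\le |\rho_g(f)|\,4\varepsilon/\Delta$. This is the main obstacle: $\rho_g(f)$ is not bounded below a priori, since bad predictors can have very negative relative improvement.

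I would handle it in two ways. First, $\rho_g\le 1$ always, because $R_g(f)\ge R_g(f_g^\ast)$. Second, only predictors with $\min_g\rho_g$ not too negative matter. Concretely, I need closeness at two points. At $f_{\rm RI}$, I expect $\min_g\rho_g(f_{\rm RI})\ge 0$ whenever a sensible comparator exists. More robustly, I would bound $|N_g(f)|\le N_{\max}:=\sup_{f,g}|R_g(f_0)-R_g(f)|$, which is finite under the integrable-envelope/boundedness implicit in the setting, and absorb it into $C$. I would write the argument with this constant, noting that when $\rho_g\in[-1,1]$ the constant is simply $C=8$.

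\textbf{Step 3: sandwich.} Set $\eta=C'\varepsilon/\Delta$ as the uniform bound from Step 2. Then
\[
\min_g\rho_g(\hat f_{\rm RI})\ge \min_g\hat\rho_g(\hat f_{\rm RI})-\eta\ge \min_g\hat\rho_g(f_{\rm RI})-\eta\ge \min_g\rho_g(f_{\rm RI})-2\eta,
\]
where the middle inequality uses that $\hat f_{\rm RI}$ maximizes the empirical objective and the outer two use the fact that the min of functions is 1-Lipschitz in sup norm. This yields \eqref{eq:ri_main_rate} with $C=2C'$.

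To avoid needing uniformity over all of $\cF$ for the negative side, the closeness is actually needed only at $\hat f_{\rm RI}$ and $f_{\rm RI}$. For $\hat f_{\rm RI}$, I would first show $\min_g\hat\rho_g(\hat f_{\rm RI})\ge \min_g\hat\rho_g(f_0)=0$ when $f_0\in\cF$, which bounds its $\hat\rho_g$ below. The multiplicative error term can then be controlled through the identity $\rho_g=(\hat D_g\hat\rho_g-(\hat N_g-N_g))/D_g$.
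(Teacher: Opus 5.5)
Your route is the paper's: a union bound giving $\sup_{f\in\cF}|\hat R_g(f)-R_g(f)|\le\varepsilon$ for all $g$, a perturbation bound for the ratio with empirical denominators kept above $\Delta/2$, and the sandwich through the empirical optimality of $\hat f_{\mathrm{RI}}$. Your Step~1 is cleaner than the paper's. The paper normalizes by the surrogate $\hat R_g(f_0)-\hat R_g(f_g^\ast)$ and only later converts to the true empirical denominator $\hat R_g(f_0)-\hat R_g^\ast$ via a multiplicative $(1-\varepsilon)$ factor. That conversion itself needs $\hat R_g(f_0)-\hat R_g(\hat f_{\mathrm{RI}})\ge 0$. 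Your two-sided bound $|\hat R_g^\ast-R_g(f_g^\ast)|\le\varepsilon$ handles the empirical optimizer directly and keeps the sandwich additive.

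The obstacle you flag in Step~2 is genuine, and the paper does not resolve it. It drops the absolute value on $A_g(f)/A_g^\ast$ and bounds that ratio by $1$. Its claimed $\sup_{f}|\hat\rho_g(f)-\rho_g(f)|\le 6\varepsilon_0/\Delta$ therefore tacitly assumes $\rho_g(f)\ge -1$.

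You should commit to your localization fix and drop the $N_{\max}$ route, for two reasons. First, finiteness of $\sup_{f,g}|R_g(f_0)-R_g(f)|$ does not follow from Assumptions~\ref{asmp:cond1} and~\ref{asmp:gap}. Second, even when it holds, $|\rho_g|\le N_{\max}/\Delta$ yields an error of order $N_{\max}\varepsilon/\Delta^2$, so absorbing it into $C$ makes $C$ depend on $\Delta$ and defeats the displayed $1/\Delta$ rate.

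The localized argument is complete once stated:
\begin{itemize}
\item At $f_{\mathrm{RI}}$: if $f_0\in\cF$, then $\rho_g(f_{\mathrm{RI}})\in[0,1]$ for every $g$, by Theorem~\ref{thm:bounded_loss} together with $\rho_g\le 1$. Your Step~2 then gives $\hat\rho_g(f_{\mathrm{RI}})\ge\rho_g(f_{\mathrm{RI}})-8\varepsilon/\Delta$.
\item At $\hat f_{\mathrm{RI}}$: we have $\hat\rho_g(\hat f_{\mathrm{RI}})\in[0,1]$, since $\min_{g'}\hat\rho_{g'}(\hat f_{\mathrm{RI}})\ge\min_{g'}\hat\rho_{g'}(f_0)=0$ and $\hat R_g\ge\hat R_g^\ast$. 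Your identity $\rho_g=(\hat D_g\hat\rho_g-(\hat N_g-N_g))/D_g$ then gives $\rho_g(\hat f_{\mathrm{RI}})\ge\hat\rho_g(\hat f_{\mathrm{RI}})-4\varepsilon/\Delta$.
\item The sandwich yields the claim with $C=12$.
\end{itemize}

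Be explicit that $f_0\in\cF$ is a hypothesis you are adding; it is not in the statement. The paper's proof uses it implicitly as well. It needs it to bound $|\hat R_g(f_0)-R_g(f_0)|$ by $\varepsilon_0$, since Assumption~\ref{asmp:cond1} only controls $f\in\cF$. It also needs it for the conversion step mentioned above.

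The hypothesis is not cosmetic. Without control of $|\rho_g|$ at $f_{\mathrm{RI}}$ and $\hat f_{\mathrm{RI}}$, a relative perturbation of size $O(\varepsilon/\Delta)$ in the denominators can shift a relative improvement of $-K$ by order $K\varepsilon/\Delta$. That shift can flip the empirical maximizer. So no argument carried out deterministically on your event $E$ can produce a universal $C$.
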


The convergence guarantee in Theorem~\ref{thm:RI_main} depends on the uniform concentration rate $r_{n_g}(t)$, which by Lemma~\ref{lem:suff-to-cond1} scales as $O(1/\sqrt{n_g})$ under standard regularity conditions. This rate matches those established for alternative fairness criteria such as minimax regret \citep{agarwal2022minimax, mo2024minimax}.

% \section{Experiment}

% \begin{figure}[t]
%   \centering
%   \includegraphics[width=0.75\columnwidth]{Figures/fairness_solutions_risk_space_sex_NC.pdf}
%   \caption{Fairness solutions in the risk space. The Pareto frontier represents the set of achievable risk pairs. The Ideal Point indicates the oracle losses (group-specific optimal models), while the line from disagreement to ideal represents equal relative improvement. Group DRO minimizes worst-group loss, Minimax Regret equalizes regrets, and Maximin Relative Improvement balances relative improvements across groups.}
%   \label{fig:fairness_risk_space}
% \end{figure}

\section{Experiment}

We use the 2018 American Community Survey (ACS) accessed via \texttt{folktables}~\citep{ding2021retiring}, predicting log-transformed personal income for full-time employees across all 50 U.S.\ states. We consider two binary group partitions (sex, race) and four features (age, education, marital status, householder status), yielding 400 configurations in total. For each, we fit group-reweighted linear models by sweeping $\lambda \in [0,1]$ over $10{,}001$ values to trace the full Pareto frontier; see Appendix~\ref{sec:experiments_detail} for details.

\textbf{Differential predictability across groups is common in practice.} For each configuration, we compute the oracle gap ratio $r = (R_0(f_0) - R_0(f_0^{\ast})) / (R_1(f_0) - R_1(f_1^{\ast}))$, measuring the relative possible improvement of the two groups. Across all 400 configurations, the gap ratio ranges from $0.04$ to $3.48$, with a median of $0.80$. Only 27\% of configurations are near-symmetric ($0.8 \leq r \leq 1.25$); the remaining 73\% exhibit moderate-to-extreme asymmetry, with 26\% exceeding a factor of two ($r < 0.5$ or $r > 2.0$). Asymmetry is especially pronounced under the race partition, where features such as marital status and householder status yield extreme ratios in 50\% and 86\% of states, respectively. These results confirm that differential predictability across groups is a pervasive feature of real data, not an artifact of synthetic construction.

\begin{figure}[t]
\centering
\includegraphics[width=0.45\textwidth]{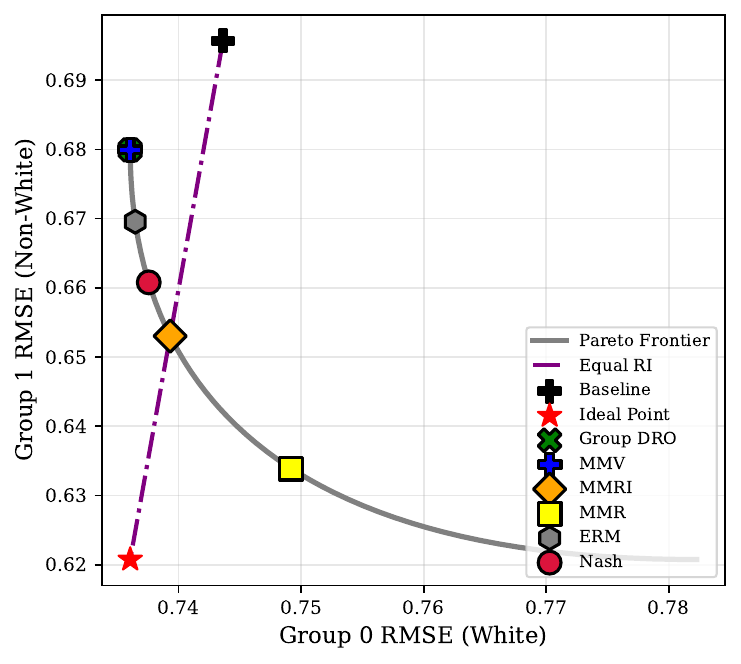}
\caption{%
  \textbf{North Dakota, race partition, age (\texttt{AGEP}), gap ratio $\approx 0.10$.}
  Age is roughly ten times more predictive for non-white workers than for white workers.
}
\label{fig:pareto-main}
\end{figure}

\textbf{Consequences for fairness criteria.} Figure~\ref{fig:pareto-main} shows the Pareto frontier and method solutions for the North Dakota race partition with age(\texttt{AGEP}) as the single feature (gap ratio $\approx 0.10$). All absolute-scale methods (MMR, Group DRO, MMV) deviate substantially from the equal-RI line, with MMR leaving the white group worse than baseline. MMRI lies on the equal-RI line by construction. Full results across states, features, and the four-feature model are provided in Appendix~\ref{sec:experiments_detail}.

\section{Discussion}

This work views group fairness through a cooperative bargaining lens, complementing worst-case optimization approaches. By interpreting performance trade-offs as negotiated compromises rather than worst-case competition, we connect fairness criteria to classical bargaining solutions. Applying bargaining theory to fairness optimization requires verifying that feasible risk sets satisfy the geometric assumptions of cooperative game theory. We provide sufficient conditions for convexity and compactness (Theorem~\ref{thm:risk_set}), covering parametric and nonparametric function classes, thereby offering a unified framework for applying bargaining solutions to diverse learning problems.

Relative improvement inherits the axiomatic properties of the Kalai–Smorodinsky solution, providing principled justification without ad-hoc penalty choices. In the two-group case, it recovers the maximin principle while yielding a unique Pareto-optimal compromise, and preserving scale invariance and individual monotonicity that regret-based methods lack due to their absolute-scale formulation. When a feasible baseline predictor is available, the solution ensures no group is harmed, a guarantee minimax regret violates. Moreover, under standard assumptions, our empirical estimator achieves $O(1/\sqrt{n})$ convergence rates, matching those of regret-based methods. In practice, relative improvement is particularly suited to settings with heterogeneous group predictability, where absolute regret fails to account for differences in task difficulty.

This work establishes a broad theoretical framework for relative improvement. While prior fairness criteria have been studied within specific models, such instantiations of our framework remain unexplored. Additionally, while we propose leximin extensions for multiple groups, we do not address the algorithmic challenges of computing these solutions efficiently, which grows in complexity with the number of groups. These refinements are left for future investigation, as our primary contribution lies in introducing and formalizing the metric itself.

Several more general directions remain open. Extending relative improvement to overparameterized regimes raises fundamental difficulties shared by robust optimization methods: strict convexity may fail, geometric properties weaken, and standard complexity controls such as covering numbers become inapplicable, precluding uniform convergence guarantees. Understanding fairness and robustness under this regime is an important direction for future work. Finally, the bargaining viewpoint naturally extends beyond fairness to general multi-objective optimization, suggesting promising applications in settings where objectives are heterogeneous and not directly comparable—for instance, in large language model evaluation across diverse benchmarks.

\section*{Impact Statement}

This paper presents work whose goal is to advance the field of 
Machine Learning. There are many potential societal consequences 
of our work, none which we feel must be specifically highlighted here.

\bibliography{new}
\bibliographystyle{icml2026}

\newpage
\appendix
\onecolumn
\newpage
\appendix
\section{Definitions}
\subsection{Pareto Optimality}
\label{app:pareto_optimality}

We provide formal definitions of Pareto optimality concepts used in Section~\ref{sec:bargaining}.

\begin{definition}[Risk vector]
\label{def:risk_vector}
Each predictor $f \in \cF$ induces a risk vector $\bm R(f) = (r_1, \ldots, r_m)^\top \in \mathbb{R}^m$, where $r_g = R_g(f)$ denotes the risk incurred by group $g$. 
\end{definition}

\begin{definition}[Pareto dominance]
\label{def:pareto_dominance}
For risk vectors $\bm r, \bm r' \in \mathbb{R}^m$, we say $\bm r'$ Pareto dominates $\bm r$ (written $\bm r' \prec \bm r$) if $r'_g \le r_g$ for all $g \in \cG$ with strict inequality for at least one group. %For relative improvement vectors $\bm \rho, \bm \rho' \in \mathbb{R}^m$, we say $\bm \rho'$ Pareto dominates $\bm \rho$ (written $\bm \rho' \succ_P \bm \rho$) if $\rho'_g \geq \rho_g$ for all $g \in \cG$ with strict inequality for at least one group. Note that in risk space, lower values are preferred, while in relative improvement space, higher values are preferred.
\end{definition}

\begin{definition}[Weak Pareto dominance]
\label{def:weak_pareto_dominance}
For risk vectors $\bm r, \bm r' \in \mathbb{R}^m$, we say $\bm r'$ weakly Pareto dominates $\bm r$ if $r'_g < r_g$ for all $g \in \cG$. %For relative improvement vectors $\bm \rho, \bm \rho' \in \mathbb{R}^m$, we say $\bm \rho'$ weakly Pareto dominates $\bm \rho$ if $\rho'_g > \rho_g$ for all $g \in \cG$.
\end{definition}

\begin{definition}[Pareto optimality]
\label{def:pareto_optimal}
Let $S \subseteq \mathbb{R}^m$. A point $\bm r \in S$ is Pareto optimal in $S$ if there exists no $\bm r' \in S$ such that $\bm r' \prec \bm r$. Equivalently, $\bm r$ is Pareto optimal if any improvement for one group requires degradation for another.
\end{definition}

\begin{definition}[Weak Pareto optimality]
\label{def:weak_pareto_optimal}
Let $S \subseteq \mathbb{R}^m$. A point $\bm r \in S$ is weakly Pareto optimal in $S$ if there exists no $\bm r' \in S$ such that $r'_g < r_g$ for all $g \in \cG$. Equivalently, $\bm r$ is weakly Pareto optimal if no alternative in $S$ strictly dominates it in all components simultaneously.
\end{definition}

\begin{definition}[Feasible risk set]
\label{def:feasible_set}
The feasible risk set is $\cR(\cF) = \{\bm R(f) : f \in \cF\}$. 
\end{definition}

\begin{definition}[Pareto frontier of risk set]
\label{def:pareto_frontier}
The Pareto frontier of the risk set $\cR(\cF)$ is $\cP_{\cR}(\cF) = \{\bm r \in \cR(\cF) : r \text{ is Pareto optimal in } \cR(\cF)\}$. The collection of all Pareto optimal points forms the Pareto frontier, i.e., the efficiency boundary where no Pareto improvements are possible.
\end{definition}

\subsection{Risk to Relative Improvement}
\label{app:transformation}

We provide formal definitions of the risk to relative improvement transformation and associated concepts used in the proof of Theorem~\ref{thm:equal_rho_maximin}.

\begin{definition}[Risk-to-Relative-Improvement Transformation]
\label{def:transformation}
Let $\bm r = (r_1, \ldots, r_m)^T \in \mathbb{R}^m$ be a risk vector. The transformation from risk space to relative improvement space is defined as
$$T: \mathbb{R}^m \to \mathbb{R}^m, \quad T(\bm r) = \bm \rho$$
where
$$\rho_g = \frac{r^0_g - r_g}{r^0_g - r^{\ast}_g}$$
for each $g \in \cG$. Here $r^{\ast}_g$ denotes the group-optimal risk (the minimum risk achievable for group $g$ over $\cF$) and $r^0_g$ denotes the baseline risk. We assume $r^{\ast}_g < r^0_g$ for all $g$ to ensure the transformation is well-defined.
\end{definition}

\begin{definition}[Relative improvement vector]
\label{def:ri_vector}
Given a predictor $f \in \cF$ with risk vector $\bm R(f)$, its relative improvement vector is defined as $T(\bm R(f)) \in \mathbb{R}^m$.
\end{definition}

\begin{definition}[Feasible relative improvement set]
\label{def:feasible_set_ri}
The feasible set in the relative improvement space is $\Omega(\cF) = \{\bm \rho \in \mathbb{R}^m : \bm \rho = T(\bm r) \text{ for some } \bm r \in \cR(\cF)\} = T(\cR(\cF))$.
\end{definition}

\begin{definition}[Pareto frontier of relative improvement set]
\label{def:pareto_frontier_ri}
The Pareto frontier of the relative improvement set $\Omega(\cF)$ is $\cP_{\Omega}(\cF) = \{\bm \rho \in \Omega(\cF) : \text{there exists no } \bm \rho' \in \Omega(\cF) \text{ such that } \bm \rho' \succ_P \bm \rho\}$.
\end{definition}

\begin{remark}[Properties of the transformation]
The transformation $T$ is affine (specifically, a composition of translation and scaling on each coordinate). Therefore: (i) $T$ preserves convexity: if $\cR(\cF)$ is convex, then $\Omega(\cF)$ is convex, (ii) $T$ preserves compactness: if $\cR(\cF)$ is compact, then $\Omega(\cF)$ is compact, and (iii) Pareto optimality is preserved under $T$: if $\bm r$ is Pareto optimal in $\cR(\cF)$, then $T(\bm r)$ is Pareto optimal in $\Omega(\cF)$. Under Assumption~\ref{assumps_parametric} or Assumption~\ref{assumps_nonparam}, Theorem~\ref{thm:risk_set} establishes that $\cR(\cF)$ is compact with all Pareto optimal points in $\cR(\cF)$ rather than only in its convex hull. These properties transfer to $\Omega(\cF)$.
\end{remark}

\begin{remark}[Connection to risk space Pareto frontier]
\label{rem:pareto_connection}
The Pareto frontier in relative improvement space is precisely the image under $T$ of the Pareto frontier in risk space:
$$\cP_{\Omega}(\cF) = T(\cP_{\cR}(\cF))$$
The direction of Pareto dominance reverses under the transformation: in risk space, lower values are preferred ($r'_g \leq r_g$), while in relative improvement space, higher values are preferred ($\rho'_g \geq \rho_g$).
\end{remark}

\section{Proof of Theoretical Results}
\subsection{Bargaining Problem Perspective}
\label{app:bargaining}

We first establish formal equivalence and uniqueness results for the relative improvement optimization problem underlying Section~\ref{subsec:ks_ri}. The equivalence result below (Theorem~\ref{thm:equal_rho_maximin}, part 1.) is a direct adaptation of the Kalai--Smorodinsky solution to our setting. We include the proof for completeness and to make the connection explicit.

\begin{theorem}[Equal Relative Improvement and Maximin Equivalence]
\label{thm:equal_rho_maximin}
For the two-group case ($m = 2$), under Assumption~\ref{assumps_parametric} 
or Assumption~\ref{assumps_nonparam}:
\begin{enumerate}
    \item The constraint $\rho_1(f) = \rho_2(f)$ intersects the Pareto 
          frontier at exactly one point.
    \item This unique point coincides with the maximin solution
          $\max_{f \in \cF} \min_{g \in \{1,2\}} \rho_g(f)$.
\end{enumerate}
\end{theorem}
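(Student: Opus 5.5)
We will work in relative improvement space $\Omega(\cF)=T(\cR(\cF))$. The map $T$ is a coordinatewise decreasing affine bijection, so Pareto optimality in $\cR(\cF)$ corresponds exactly to Pareto optimality in $\Omega(\cF)$ with the order reversed. By Theorem~\ref{thm:risk_set}, $\cR(\cF)$ is compact, so $\Omega(\cF)$ is compact. Also, $\mathrm{conv}(\Omega(\cF))=T(\mathrm{conv}(\cR(\cF)))$ is compact and convex, and it has no Pareto optimal points outside $\Omega(\cF)$. Moreover, $\Omega(\cF)$ contains the points $T(\bm R(f_1^{\ast}))=(1,\rho_2(f_1^{\ast}))$ and $T(\bm R(f_2^{\ast}))=(\rho_1(f_2^{\ast}),1)$, and $1$ is the maximal value of each coordinate over $\Omega(\cF)$.

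\textbf{Existence of the maximin point.} The function $\bm\rho\mapsto\min(\rho_1,\rho_2)$ is continuous, so it attains its maximum $\rho^{\ast}$ on the compact set $\Omega(\cF)$. We first show that this maximum is also the maximum over $\mathrm{conv}(\Omega(\cF))$. Any maximizer over the convex hull can be pushed to a Pareto optimal point of the hull without decreasing the minimum, since the hull is compact and a Pareto optimal point dominating a given point always exists. By Theorem~\ref{thm:risk_set}(2), that Pareto optimal point lies in $\Omega(\cF)$.

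\textbf{Equal relative improvements at the maximin.} Next we show that some maximin point has $\rho_1=\rho_2=\rho^{\ast}$ and is Pareto optimal. Work on $K=\mathrm{conv}(\Omega(\cF))$. The segment between $(1,\rho_2(f_1^{\ast}))$ and $(\rho_1(f_2^{\ast}),1)$ lies in $K$. On this segment, $\rho_1-\rho_2$ changes sign, so the diagonal meets $K$. Let $\rho^{\dagger}=\max\{t:(t,t)\in K\}$. Then $(\rho^{\dagger},\rho^{\dagger})$ is weakly Pareto optimal in $K$.

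To upgrade to full Pareto optimality, suppose some $\bm\rho'\in K$ dominates $(\rho^{\dagger},\rho^{\dagger})$, say $\rho'_1>\rho^{\dagger}$ and $\rho'_2=\rho^{\dagger}$. The point $(1,\rho_2(f_1^{\ast}))$ satisfies $\rho_2(f_1^{\ast})\le\rho^{\dagger}$. Indeed, if $\rho_2(f_1^{\ast})>\rho^{\dagger}$ we could not have $\rho^{\dagger}<1$ maximal on the diagonal, and the case $\rho^{\dagger}=1$ is trivially Pareto optimal. Now consider the other corner $(\rho_1(f_2^{\ast}),1)$, which has $\rho_2=1\ge\rho^{\dagger}$. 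Taking a suitable convex combination of $\bm\rho'$ with this corner produces a point with both coordinates strictly above $\rho^{\dagger}$, contradicting the maximality of $\rho^{\dagger}$. The edge case $\rho^{\dagger}=1$ is handled separately. So $(\rho^{\dagger},\rho^{\dagger})$ is Pareto optimal in $K$, and by Theorem~\ref{thm:risk_set}(2) it lies in $\Omega(\cF)$, i.e., it is realized by some $f\in\cF$.

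Clearly $\rho^{\ast}\ge\rho^{\dagger}$. Conversely, a maximizer $\bm\rho$ with $\rho^{\ast}=\min(\rho_1,\rho_2)$ satisfies $(\rho^{\ast},\rho^{\ast})\le\bm\rho$ componentwise. The segment from this $\bm\rho$ to the appropriate corner crosses the diagonal at a value $\ge\rho^{\ast}$ by convexity, so $\rho^{\dagger}\ge\rho^{\ast}$. This proves part 2 once uniqueness is established.

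\textbf{Uniqueness (part 1).} Suppose two distinct Pareto optimal points of $\Omega(\cF)$ lie on the diagonal, $(a,a)$ and $(b,b)$ with $a<b$. Then $(b,b)$ dominates $(a,a)$, contradicting Pareto optimality. Hence the diagonal meets the frontier $\cP_\Omega(\cF)=T(\cP_\cR(\cF))$ in exactly one point, namely $(\rho^{\ast},\rho^{\ast})$. Any maximin solution whose value vector is Pareto optimal must then equal this point, since a Pareto optimal maximizer with unequal coordinates, say $\rho_1>\rho_2=\rho^{\ast}$, could be mixed with the corner $(\rho_1(f_2^{\ast}),1)$ to raise both coordinates, contradicting optimality.

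\textbf{Main obstacle.} The hardest step is the convex-combination argument that rules out boundary Pareto improvements along the diagonal, together with the degenerate cases where a corner already attains the value $1$ in both coordinates. The key subtlety is that every such argument must be carried out in $\mathrm{conv}(\Omega(\cF))$ and then pulled back to $\Omega(\cF)$ via Theorem~\ref{thm:risk_set}(2), since $\Omega(\cF)$ itself need not be convex.
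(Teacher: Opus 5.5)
Your proof is correct, but it reaches the diagonal point by a different mechanism than the paper. The paper takes $\Omega(\cF)$ to be convex ``without loss of generality'' and defines the upper-boundary function $\phi(\rho_1)=\max\{\rho_2:(\rho_1,\rho_2)\in\Omega(\cF)\}$ on $[\rho_1^{(2)},1]$. It gets continuity of $\phi$ from Berge's maximum theorem, and uses a chord inequality through the corner $(\rho_1^{(2)},1)$ to show that every point $(\rho_1,\phi(\rho_1))$ is Pareto optimal. It then applies the intermediate value theorem to $\phi(\rho_1)-\rho_1$. Part 2 follows because a Pareto optimal diagonal point cannot be beaten in its minimum coordinate.

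You instead define $\rho^{\dagger}$ as the highest diagonal point of $K=\mathrm{conv}(\Omega(\cF))$; it exists by compactness once the segment between the two corners shows the diagonal meets $K$. You then get Pareto optimality by mixing a hypothetical improvement with the corner whose coordinate equals $1$. This is the same chord idea the paper uses, but applied at a single point rather than along the whole frontier.

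What your route buys:
\begin{itemize}
\item No continuity of the frontier and no hemicontinuity check for Berge's theorem are needed.
\item The passage from the convex hull back to realizable risks is done explicitly via Theorem~\ref{thm:risk_set}(2), rather than by assuming convexity.
\item The ``exactly one'' part of claim 1, which the paper leaves implicit, is actually argued.
\end{itemize}
What the paper's route buys is a global description of the frontier over $[\rho_1^{(2)},1]$ as a continuous curve of Pareto optimal points, which underlies its geometric picture.

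One small step to fill in: weak Pareto optimality of $(\rho^{\dagger},\rho^{\dagger})$ in $K$ does not follow immediately from the definition of $\rho^{\dagger}$. Take a point with both coordinates above $\rho^{\dagger}$ and join it to the corner on the other side of the diagonal. That segment crosses the diagonal strictly above $\rho^{\dagger}$, with the case $\rho^{\dagger}=1$ being trivial. This is the same corner-mixing step you already use in your ``Conversely'' paragraph.
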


\begin{lemma}[Berge's Maximum Theorem \citep{berge1963topological}]
\label{lem:berge}
Let $X$ be a compact topological space and $\Theta$ a topological space. Let $C: \Theta \rightrightarrows X$ be a compact-valued correspondence with $C(\theta) \neq \emptyset$ for all $\theta \in \Theta$. Let $\xi: X \times \Theta \to \bR$ be continuous. Define $$V(\theta) = \sup\{\xi(x, \theta) : x \in C(\theta)\}.$$ If $C$ is continuous (upper and lower hemicontinuous), then $V$ is continuous.
\end{lemma}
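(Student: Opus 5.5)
We prove continuity of $V$ at an arbitrary $\theta_0\in\Theta$ by establishing lower and upper semicontinuity separately, working directly with neighborhoods. This avoids sequences, so no countability assumption on $\Theta$ is needed. Lower hemicontinuity of $C$ gives the lower bound and upper hemicontinuity gives the upper bound. First note that for each $\theta$, $C(\theta)$ is nonempty and compact and $x\mapsto \xi(x,\theta)$ is continuous. Hence the supremum defining $V(\theta)$ is attained and finite.

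\textbf{Lower semicontinuity.} Fix $\varepsilon>0$ and choose $x_0\in C(\theta_0)$ with $\xi(x_0,\theta_0)=V(\theta_0)$.
\begin{itemize}
\item By joint continuity of $\xi$ at $(x_0,\theta_0)$, there are open sets $U\ni x_0$ in $X$ and $W\ni\theta_0$ in $\Theta$ such that $\xi(x,\theta)>V(\theta_0)-\varepsilon$ on $U\times W$.
\item Since $C(\theta_0)\cap U\neq\emptyset$, lower hemicontinuity gives a neighborhood $W'$ of $\theta_0$ with $C(\theta)\cap U\neq\emptyset$ for all $\theta\in W'$.
\item For $\theta\in W\cap W'$, pick $x\in C(\theta)\cap U$. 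Then $V(\theta)\ge \xi(x,\theta)>V(\theta_0)-\varepsilon$.
\end{itemize}

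\textbf{Upper semicontinuity.} This is the main step, since it needs a tube-lemma style compactness argument. Fix $t>V(\theta_0)$.
\begin{itemize}
\item For each $x\in C(\theta_0)$ we have $\xi(x,\theta_0)<t$. By continuity of $\xi$, choose open $U_x\ni x$ and $W_x\ni\theta_0$ with $\xi<t$ on $U_x\times W_x$.
\item By compactness of $C(\theta_0)$, finitely many $U_{x_1},\dots,U_{x_k}$ cover it. Set $U=\bigcup_i U_{x_i}$, which is open and contains $C(\theta_0)$, and $W=\bigcap_i W_{x_i}$, which is an open neighborhood of $\theta_0$. Then $\xi<t$ on $U\times W$.
\item Upper hemicontinuity gives a neighborhood $W''$ of $\theta_0$ with $C(\theta)\subseteq U$ for $\theta\in W''$.
\item For $\theta\in W\cap W''$, every $x\in C(\theta)$ satisfies $\xi(x,\theta)<t$. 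Since the supremum over the compact set $C(\theta)$ is attained, $V(\theta)<t$.
\end{itemize}

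Combining the two parts, for every $\varepsilon>0$ there is a neighborhood of $\theta_0$ on which $|V(\theta)-V(\theta_0)|<\varepsilon$: apply the upper bound with $t=V(\theta_0)+\varepsilon$ and intersect the resulting neighborhood with the one from the lower bound. Hence $V$ is continuous at $\theta_0$, and $\theta_0$ was arbitrary. Compactness of the ambient space $X$ is only used to guarantee the compactness of each $C(\theta)$ and the attainment of suprema.
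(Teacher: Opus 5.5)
Your proof is correct. The paper gives no proof of this lemma: it quotes Berge (1963) and uses the result as a black box to obtain continuity of the frontier function $\phi$ in the proof of Theorem~\ref{thm:equal_rho_maximin}. So your argument supplies a self-contained justification rather than an alternative to one in the paper.

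What you give is the standard neighborhood-based proof, essentially Berge's own. For lower semicontinuity, you apply lower hemicontinuity to a rectangle $U\times W$ around a maximizer $x_0\in C(\theta_0)$. For upper semicontinuity, you apply upper hemicontinuity to the open set $U\supseteq C(\theta_0)$ assembled from a finite subcover. Working with neighborhoods rather than sequences is the right choice because $\Theta$ is an arbitrary topological space, and nothing in your argument needs Hausdorffness.

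One small correction to your closing remark: compactness of $X$ is not used anywhere. Compactness of each $C(\theta)$ is already a hypothesis (compact-valuedness), and attainment of the supremum follows from it directly. Compactness of $X$ would matter only if upper hemicontinuity were read as ``closed graph''. In that case, a tube argument over the compact set $X\setminus U$ upgrades the closed-graph property to the neighborhood form $C(\theta)\subseteq U$ that your proof uses.
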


\begin{figure}[t]
\centering
\includegraphics[width=\textwidth]{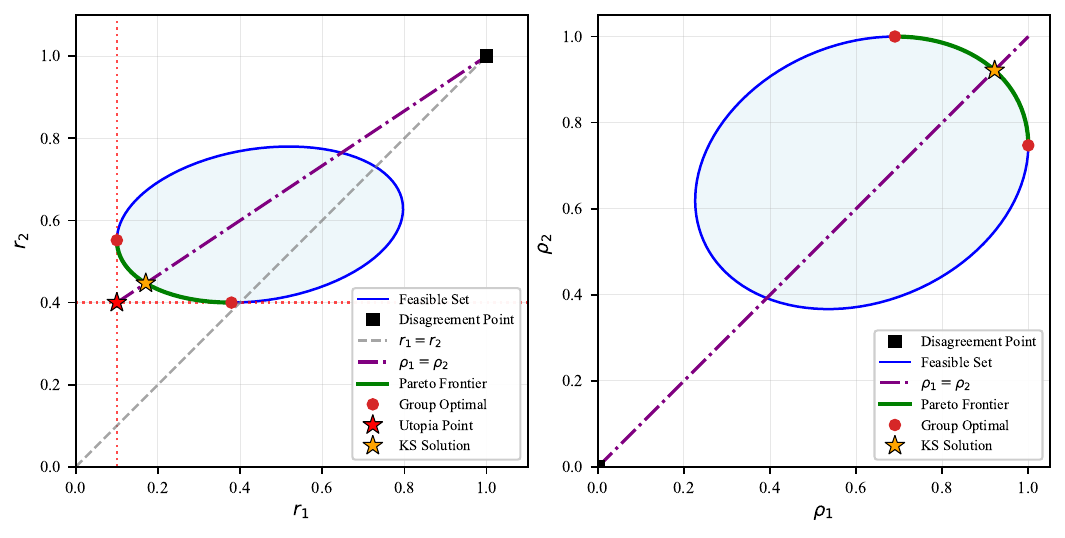}
\caption{Transformation between risk space and relative improvement space. 
{Left:} Risk space with group-optimal risks $r_1^{\ast} = 0.1$, $r_2^{\ast} = 0.4$ (utopia point) and baseline disagreement point $(1, 1)$. 
{Right:} Relative improvement space where the disagreement point maps to $(0,0)$ and group optimal points to $(1, \rho_2^{(1)})$ and $(\rho_1^{(2)}, 1)$. 
The fairness diagonal $\rho_1 = \rho_2$ intersects the Pareto frontier at the 
KS solution (orange star).}
\label{fig:ellipse_comparison}
\end{figure}

\begin{proof}[Proof of Theorem~\ref{thm:equal_rho_maximin}]
Under Assumption~\ref{assumps_parametric} or Assumption~\ref{assumps_nonparam}, parts (1) and (2) of Theorem~\ref{thm:risk_set} ensure that $\text{conv}(\cR(\cF))$ is compact and convex, with all Pareto optimal points lying in $\cR(\cF)$ rather than only in its convex hull. Since the transformation $T$ is affine, these properties transfer to $\Omega(\cF)$: the set $\text{conv}(\Omega(\cF))$ is compact and convex, and the Pareto frontiers of $\Omega(\cF)$ and $\text{conv}(\Omega(\cF))$ coincide. Therefore, without loss of generality, we may assume $\Omega(\cF)$ is compact and convex for the remainder of the proof.

Let $\bm \rho^{(1)} = (1, \rho^{(1)}_2)$ and $\bm \rho^{(2)} = (\rho^{(2)}_1, 1)$ denote the group-optimal points. Specifically, define $\rho^{(1)}_2 = \max\{\rho_2: (1,\rho_2) \in \Omega(\cF)\}$ and $\rho^{(2)}_1 = \max\{\rho_1: (\rho_1,1) \in \Omega(\cF)\}$. If $\rho^{(1)}_2 = 1$ or $\rho^{(2)}_1 = 1$, the result follows immediately. Otherwise, assume $\rho^{(1)}_2 < 1$ and $\rho^{(2)}_1 < 1$.

For a compact convex set $\Omega(\cF)$ in $\bR^2$, the upper-right boundary (Pareto frontier) can be locally represented as the graph of a function. Specifically, for each $\rho_1 \in [\rho_1^{(2)}, 1]$, define
$$
\phi(\rho_1) = \max\{\rho_2: (\rho_1, \rho_2) \in \Omega(\cF)\}.
$$
The maximum exists by compactness of $\Omega(\cF)$, and the resulting function $\phi: [\rho_1^{(2)}, 1] \to \bR$ describes the upper boundary of feasible set (green curve in Figure~\ref{fig:ellipse_comparison}). 
Also, the boundary of a convex compact set $\Omega(\cF)$ ensures that $C:[\rho_{1,\min}, \rho_{1,\max}] \rightrightarrows (-\infty, 1]$ such that $C(\rho_1) := \{\rho_2: (\rho_1, \rho_2) \in \Omega(\cF)\} \neq \emptyset$ is a compact-valued correspondence. Denote $f(\rho_2, \rho_1) = \rho_2$ which is a continuous function; then by maximum theorem (Lemma~\ref{lem:berge}), 
$$
\phi(\rho_1) = \sup \{\rho_2=f(\rho_2, \rho_1): \rho_2 \in C(\rho_1)\}
$$
is a continuous function.

For any $\rho_1 \in [\rho_1^{(2)}, 1]$, the point $(\rho_1, \phi(\rho_1))$ is Pareto optimal. If not, there would exist $(\rho_1', \rho_2')\in \Omega(\cF)$ with $\rho_1' \ge \rho_1$ and $\rho_2' \ge \phi(\rho_1)$, with at least one inequality strict. If $\rho_1' = \rho_1$, this contradicts the definition of $\phi(\rho_1)$ as the maximum. If $\rho_1' > \rho_1$, then \begin{align*}
\rho_2 = \phi(\rho_1) &\ge \frac{\phi(\rho_1^{(2)})(\rho_1'-\rho_1) +\phi(\rho_1')(\rho_1-\rho_1^{(2)})}{\rho_1'-\rho_1^{(2)}} \\
&= \frac{(\rho_1'-\rho_1) +\phi(\rho_1')(\rho_1-\rho_1^{(2)})}{\rho_1'-\rho_1^{(2)}} > \phi(\rho_1') \ge \rho_2',    
\end{align*}
which is a contradiction.

Define $g(\rho_1) = \phi(\rho_1) - \rho_1$. Then,
\begin{align*}
g(\rho_1^{(2)}) &= 1 - \rho_1^{(2)} > 0, \\
g(1) &= \rho_2^{(1)} - 1 < 0.
\end{align*}

By continuity of $\phi$ and the Intermediate Value Theorem, there exists $\rho_1^{\ast} \in (\rho_1^{(2)}, 1)$ such that $g(\rho_1^{\ast}) = 0$, i.e., $\phi(\rho_1^{\ast}) = \rho_1^{\ast}$ Geometrically, since one group's optimal point lies above and the other below the fairness diagonal, continuity of the frontier guarantees an intersection (see Figure~\ref{fig:ellipse_comparison}).

Therefore, $(\rho_1^{\ast}, \rho_1^{\ast})$ lies on both the Pareto frontier and the fairness diagonal, establishing part 1.

For part 2, we show that this point is the maximin solution. Because of Pareto Optimality of $(\rho_1^{\ast}, \rho_1^{\ast})$, any other point $(\rho_1, \rho_2) \in \Omega(\cF)$ must have $\rho_1 \leq \rho_1^{\ast}$ or $\rho_2 \leq \rho_1^{\ast}$ (or both). Therefore, $\min\{\rho_1, \rho_2\} \leq \rho_1^{\ast} = \min\{\rho_1^{\ast}, \rho_1^{\ast}\}$, which shows that $(\rho_1^{\ast}, \rho_1^{\ast})$ maximizes the minimum relative improvement.
\end{proof}

This result strengthens the connection between our fairness framework and the KS solution: the KS-solution, which selects the point on the Pareto frontier where both players achieve equal normalized gains, coincides with the maximin relative improvement solution.

We next establish a uniqueness result for the predictor, clarifying the additional assumptions required for the relative improvement solution to be uniquely realized. As discussed in Remark~\ref{rem:uniqueness}, this result is strictly stronger than uniqueness of the induced risk vector or of the bargaining solution itself.

\begin{theorem}[Uniqueness of Maximin Solution]
\label{thm:maximin_unique}
Under Assumption~\ref{assumps_parametric} or Assumption~\ref{assumps_nonparam} with strict convexity of the loss function, the maximin fairness optimization problem
%~\ref{eq:rel_reward_opt_fnt}
$$\max_{f \in \cF} \min_{g \in \{1,\ldots,m\}} \rho_g(f)$$
admits a unique solution.
\end{theorem}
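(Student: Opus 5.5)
We will prove existence and uniqueness separately. In both steps the key fact is that the objective $\Psi(f) := \min_{g} \rho_g(f)$ is built from convex group risks.

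\textbf{Step 1: convexity of the risks.} Under Assumption~\ref{assumps_parametric}, each $R_g(\theta) = \bE_g[\ell(Y,f_\theta(X))]$ is convex in $\theta$, since the loss is convex pointwise. It is also continuous, by dominated convergence with the envelope $h$. Under Assumption~\ref{assumps_nonparam} the same argument gives convexity and continuity of $f \mapsto R_g(f)$ in the given topology: evaluation maps are continuous and $\ell(y,\cdot)$ is continuous, so dominated convergence applies along nets. If dominated convergence along nets is delicate, we would at least obtain lower semicontinuity via Fatou's lemma, using $\ell \ge 0$. Strict convexity of the loss will be used to upgrade this to strict convexity of $R_g$, as explained in Step 3.

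\textbf{Step 2: existence.} Each $\rho_g(f) = (R_g(f_0)-R_g(f))/(R_g(f_0)-R_g(f_g^\ast))$ is an affine decreasing function of $R_g(f)$ with positive denominator. Hence $\rho_g$ is concave and upper semicontinuous (continuous under Step 1). The minimum over finitely many such functions, $\Psi$, is concave and upper semicontinuous as well. Since $\Theta$ (respectively $\cF$) is compact, a maximizer exists. Alternatively, existence follows from compactness of $\cR(\cF)$ in Theorem~\ref{thm:risk_set}, because $\Psi$ depends on $f$ only through $\bm R(f)$.

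\textbf{Step 3: uniqueness.} Suppose $f_1 \ne f_2$ are both maximizers with value $\rho^\ast$, and set $\bar f = \tfrac12(f_1+f_2) \in \cF$, which lies in $\cF$ by convexity. The main obstacle is to get strict inequality $R_g(\bar f) < \tfrac12(R_g(f_1)+R_g(f_2))$ for every $g$. This holds whenever $\ell(Y,f_1(X)) \ne \ell(Y,f_2(X))$ on a set of positive $P_g$-probability, or more precisely whenever the arguments $f_1(X) \ne f_2(X)$ (respectively $\theta_1 \ne \theta_2$ for the parametric case, where strict convexity in $\theta$ is assumed directly) differ with positive probability.

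In the parametric case strict convexity is in $\theta$, so $\theta_1 \ne \theta_2$ gives strict pointwise inequality everywhere, and integrating yields the strict inequality for every $g$. In the nonparametric case we interpret $f_1 \ne f_2$ as elements of $\cF$ differing as functions on the support of the $P_{g,X}$. If they agree $P_{g,X}$-almost surely for all $g$, they are identified, which is consistent with Remark~\ref{rem:uniqueness}. We will state this identification explicitly.

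Given strict inequality for all $g$, we get
\[
\rho_g(\bar f) > \tfrac12\big(\rho_g(f_1)+\rho_g(f_2)\big) \ge \rho^\ast \quad\text{for every } g .
\]
Taking the minimum over finitely many $g$ gives $\Psi(\bar f) > \rho^\ast$, which contradicts maximality. Hence the maximizer is unique.

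We can also cross-check this against part 3 of Theorem~\ref{thm:risk_set}. The strict midpoint argument shows that $\bm R(\bar f)$ strictly dominates $\bm R(f_1)$ and $\bm R(f_2)$ in at least the average sense. This is exactly the mechanism behind weak Pareto optimality implying Pareto optimality there.
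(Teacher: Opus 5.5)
You follow the same route as the paper: strict convexity of every $R_g$ along a segment makes $\min_g\rho_g$ strictly concave, compactness gives existence, and a midpoint contradiction gives uniqueness. In the parametric case this is fine, because strict convexity in $\theta$ holds pointwise and so every $R_g$ is strictly convex. In the nonparametric case, however, the step ``given strict inequality for all $g$'' is a genuine gap, and your identification does not close it. Strict convexity of $\ell(y,\cdot)$ yields $R_g(\bar f)<\tfrac12\bigl(R_g(f_1)+R_g(f_2)\bigr)$ only for those $g$ with $P_{g,X}(f_1\neq f_2)>0$. Identifying functions that agree $P_{g,X}$-a.s.\ for \emph{every} $g$ only handles the case where no group sees a difference. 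It leaves the mixed case: $f_1=f_2$ a.s.\ under the binding groups but not under a group with slack. There you only get $\min_g\rho_g(\bar f)\ge\rho^\ast$, which is no contradiction. The paper's proof has the same hole: it asserts strict convexity of each $R_g$ on $\cF$, which does not follow from strict convexity in $t$.

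For $m\ge 3$ the hole cannot be patched, because the statement itself fails. Take $\cX=\{a,b\}$, $\cF=[-1,1]^{\cX}$, squared loss and $f_0\equiv 0$. Use three groups with $(X,Y)$ equal to $(a,1)$, $(a,-1)$ and $(b,1)$ almost surely. Assumption~\ref{assumps_nonparam} holds with a strictly convex loss, and all denominators equal $1$. Then $\min(\rho_1,\rho_2)=-f(a)^2-2|f(a)|\le 0$, with equality iff $f(a)=0$, and $\rho_3=f(b)\bigl(2-f(b)\bigr)\ge 0$ iff $f(b)\in[0,1]$. So every $f$ with $f(a)=0$ and $f(b)\in[0,1]$ is a maximizer; these are distinct functions with distinct risk vectors.

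A correct statement needs each $R_g$ to be strictly convex on $\cF$ itself, for instance mutually absolutely continuous covariate marginals $P_{g,X}$, with uniqueness understood up to a.s.\ equality.

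For $m=2$ your argument can be rescued without that extra hypothesis. In the mixed case $\bar f$ is again a maximizer, with, say, $\rho_1(\bar f)=\rho^\ast<\rho_2(\bar f)$. If $\rho^\ast<1$, take a small step from $\bar f$ toward $f_1^\ast$. This strictly raises $\rho_1$, and by convexity of $R_2$ it keeps $\rho_2>\rho^\ast$, contradicting maximality. If $\rho^\ast=1$, both $f_1$ and $f_2$ minimize $R_2$, so strict convexity forces $f_1=f_2$ $P_{2,X}$-a.s., again a contradiction. With three or more groups this perturbation fails, since two binding groups can pull in opposite directions, as in the example.
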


\begin{proof}[Proof of Theorem~\ref{thm:maximin_unique}]
Define the worst-group relative improvement function
$$F(f) := \min_{g\in\{1,\ldots,m\}} \rho_g(f) = \min_{g\in\{1,\ldots,m\}} \frac{r_g^0 - R_g(f)}{r_g^0-r_g^{\ast}}.$$

We show that $F$ is strictly concave. For any $f_1, f_2\in\cF$ with $f_1\ne f_2$ and $\lambda\in(0,1)$, let $f_\lambda = \lambda f_1 + (1-\lambda)f_2$. By strict convexity of each risk function $R_g$, $\rho_g$ is strictly concave and
$$\rho_g(f_\lambda) > \lambda \rho_g(f_1) + (1-\lambda) \rho_g(f_2)$$
for each $g\in\{1,\ldots,m\}$. Taking the minimum over groups,
\begin{align*}
F(f_\lambda) &= \min_{g\in\{1,\ldots,m\}} \rho_g(f_\lambda) > \min_{g\in\{1,\ldots,m\}} \{\lambda \rho_g(f_1) + (1-\lambda) \rho_g(f_2)\} \\
&\ge \lambda \min_{g\in\{1,\ldots,m\}} \rho_g(f_1) + (1-\lambda) \min_{g\in\{1,\ldots,m\}} \rho_g(f_2) = \lambda F(f_1) + (1-\lambda) F(f_2).
\end{align*}

Thus $F$ is strictly concave on the convex set $\cF$. Under our assumptions, $F$ is continuous and $\cF$ (or its parameterization $\Theta$) is compact, so a maximizer exists. Strict concavity ensures uniqueness: if there were two distinct maximizers $f_1 \ne f_2$ with $F(f_1) = F(f_2) = F^{\ast}$, then $F(f_{1/2}) > F^{\ast}$, contradicting maximality.
\end{proof}

Moreover, when the loss is strictly convex, the maximin and leximin solutions coincide. This follows from the fact that every leximin solution is, by definition, a maximin solution, while the converse does not hold in general. If the maximin solution is unique, it must therefore also be the leximin solution. Under strict convexity of the loss, the maximin relative improvement solution satisfies the axioms discussed in Proposition~\ref{prop:generalized_axioms}. In the absence of strict convexity, there may exist multiple predictors achieving the same maximin relative improvement value; in this case, the leximin solutions satisfy the axioms in Proposition~\ref{prop:generalized_axioms}.

We provide the proof of the following result regarding the properties of the feasible risk set in Section~\ref{subsec:risk_set}.

\begin{proposition}
\label{prop:assumps_verification}
For the parametric setting, Assumption~\ref{assumps_parametric}(ii) can be verified if the loss satisfies a Lipschitz condition with integrable Lipschitz constant. For the RKHS setting with $\mathcal{F} = \{f \in \mathcal{H}: \|f\|_{\mathcal{H}} \le R\}$, Assumption~\ref{assumps_nonparam}(ii) holds under a standard $p$-growth condition $|\ell(y,t)| \le C(1+|t|^p)$ combined with a moment condition on the kernel $\bE_g[k(X,X)^{p/2}] < \infty$.
\end{proposition}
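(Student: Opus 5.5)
The proposition has two independent parts, and I will verify each by producing an integrable envelope $h$ with $0 \le \ell(y,f(x)) \le h(x,y)$ uniformly over $\cF$. In both parts I assume the loss is nonnegative, as the paper implicitly does, so that only the upper envelope matters.

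\textbf{Parametric part.} I read the Lipschitz condition as $|\ell(y,f_\theta(x)) - \ell(y,f_{\theta'}(x))| \le L(x,y)\,\|\theta-\theta'\|$ for all $\theta,\theta'\in\Theta$, with $\bE_g[L(X,Y)]<\infty$. I will also need $\bE_g[\ell(Y,f_{\theta_0}(X))]<\infty$ for one fixed anchor $\theta_0\in\Theta$; this is implicit, since otherwise the risks themselves are infinite.
\begin{itemize}
\item Set $D=\sup_{\theta,\theta'\in\Theta}\|\theta-\theta'\|$, which is finite because $\Theta$ is compact (Assumption~\ref{assumps_parametric}(iii)).
\item The triangle inequality gives, for every $\theta$,
\[
\ell(y,f_\theta(x)) \le \ell(y,f_{\theta_0}(x)) + D\,L(x,y) =: h(x,y).
\]
\item $h$ is integrable under every $P_g$, as the sum of two integrable terms.
\end{itemize}

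\textbf{RKHS part.} The tool here is the reproducing property together with Cauchy--Schwarz.
\begin{itemize}
\item For $f$ with $\|f\|_\cH\le R$,
\[
|f(x)| = |\langle f, k(x,\cdot)\rangle_\cH| \le \|f\|_\cH\,\|k(x,\cdot)\|_\cH = \|f\|_\cH\sqrt{k(x,x)} \le R\sqrt{k(x,x)}.
\]
\item Plugging this into the $p$-growth condition gives
\[
0\le \ell(y,f(x)) \le C\bigl(1+R^p k(x,x)^{p/2}\bigr) =: h(x,y).
\]
\item Then $\bE_g[h] = C + C R^p\,\bE_g[k(X,X)^{p/2}]<\infty$ for every $g\in\cG$, which verifies Assumption~\ref{assumps_nonparam}(ii).
\end{itemize}

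\textbf{Main obstacle.} The calculations are straightforward; the difficulty is pinning down what the informal hypotheses mean. In the parametric part I must specify the Lipschitz condition as Lipschitz in $\theta$ with a random constant $L(x,y)$, and add the single-point integrability at $\theta_0$. If the intended condition is instead Lipschitz in the prediction $z$, I would additionally bound $|f_\theta(x)-f_{\theta_0}(x)|$ by $D$ times an integrable factor. For linear $f_\theta(x)=\theta^\top x$ that factor is $\|x\|$, which requires an integrable product $L(x,y)\|x\|$, for example via Cauchy--Schwarz on second moments. I will state these interpretations explicitly at the start of the proof.
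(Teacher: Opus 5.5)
Your proposal is correct and matches the paper's proof: the parametric envelope $|\ell(y,f_{\theta_0}(x))| + \mathrm{diam}(\Theta)\,L(x,y)$ is the same, including the tacit integrability at the anchor $\theta_0$, and so is the RKHS bound $|f(x)|\le \|f\|_\cH\sqrt{k(x,x)}$ plugged into the $p$-growth condition. Bounding $\|f\|_\cH$ by $R$ directly is slightly cleaner than the paper's appeal to weak compactness to show $\sup_{f\in\cF}\|f\|_\cH<\infty$.
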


\begin{proof}[Proof of Proposition~\ref{prop:assumps_verification}]
~

\textbf{Parametric Setting - Lipschitz Condition.}
Suppose the loss function satisfies a Lipschitz condition: there exists $L:\cX \times \cY \to \bR_+$ such that 
$$|\ell(y, f_{\theta_1}(x)) - \ell(y, f_{\theta_2}(x))| \leq L(x,y) \|\theta_1 - \theta_2\|$$
for all $\theta_1, \theta_2 \in \Theta$ with $\bE_g[L(X,Y)] <\infty$ for all $g \in \cG$.
Then for any $\theta_0 \in \Theta$, we can construct the dominating function
$$g(x,y) = |\ell(y, f_{\theta_0}(x))| + \text{diam}(\Theta) \cdot L(x,y),$$
where $\text{diam}(\Theta)=\sup_{\theta_1, \theta_2 \in \Theta} \|\theta_1 - \theta_2\| < \infty$ by compactness of $\Theta$.
For any $\theta \in \Theta$,
\begin{align*}
|\ell(y, f_\theta(x))| &\le |\ell(y, f_{\theta_0}(x))| + |\ell(y, f_\theta(x)) - \ell(y, f_{\theta_0}(x))| \\
&\le |\ell(y, f_{\theta_0}(x))| + L(x,y) \|\theta - \theta_0\| \\
&\le |\ell(y, f_{\theta_0}(x))| + L(x,y) \cdot \text{diam}(\Theta) \\
&= h(x,y).
\end{align*}
Since $\bE_g[L(X,Y)] < \infty$ and $|\ell(y, f_{\theta_0}(x))|$ is integrable, we have $\bE_g[h(X,Y)] < \infty$ for all $g \in \cG$, verifying Assumption~\ref{assumps_parametric}(2).

\textbf{RKHS Setting - $p$-Growth Condition.}
Suppose the loss function satisfies a $p$-growth condition: there exist constants $C>0$ and $p \ge 1$ such that
$$|\ell(y,t)| \le C(1+|t|^p) \quad \text{for all } (y,t) \in \cY \times \bR.$$
Since $\cH$ is a reproducing kernel Hilbert space, every $f\in\cF\subset\cH$ satisfies the reproducing property
$$|f(x)| \le \|f\|_{\cH}\, \sqrt{k(x,x)}.$$
Therefore, for all $f\in\cF$,
\begin{align*}
|\ell(y,f(x))| &\le C(1+|f(x)|^p) \\
&\le C(1 + \|f\|_{\cH}^p \cdot k(x,x)^{p/2}) \\
&\le C(1 + B^p \cdot k(x,x)^{p/2}) =: h(x,y),
\end{align*}
where $B = \sup_{f\in\cF}\|f\|_{\cH} < \infty$. The finiteness of $B$ follows from weak compactness of $\cF$ in $\cH$: a weakly compact set in a Hilbert space is norm-bounded.
If the kernel moment condition holds, i.e., $\bE_g[k(X,X)^{p/2}] < \infty$ for all $g \in \cG$, then
$$\bE_g[h(X,Y)] = C(1 + B^p \cdot \bE_g[k(X,X)^{p/2}]) < \infty,$$
verifying Assumption~\ref{assumps_nonparam}(2).
\end{proof}

This proposition shows that Assumptions~\ref{assumps_parametric}(2) and~\ref{assumps_nonparam}(2) are satisfied under standard regularity conditions in commonly used parametric and RKHS settings.

We now prove the theorem linking the fair learning formulation with the corresponding bargaining problem.

\begin{proof}[Proof of Theorem~\ref{thm:risk_set}]
We prove the result for both the parametric setting (Assumption~\ref{assumps_parametric}) and the nonparametric setting (Assumption~\ref{assumps_nonparam}).

\medskip
\noindent\textbf{Continuity and Convexity of Risk Functions.}

\emph{Parametric case.} For any $\theta_n \to \theta$ in $\Theta$, Assumption~\ref{assumps_parametric}(1) gives $\ell(y,f_{\theta_n}(x)) \to \ell(y,f_\theta(x))$ pointwise. By Assumption~\ref{assumps_parametric}(2), $|\ell(y,f_{\theta_n}(x))| \le h(x,y)$ where $\bE_g[h(X,Y)]<\infty$. The Dominated Convergence Theorem implies $R_g(\theta_n) \to R_g(\theta)$, so $R_g$ is continuous.

For convexity, if $0<\lambda<1$, then by convexity of $\ell$ in $\theta$,
$$R_g(\lambda\theta_1 + (1-\lambda)\theta_2) = \bE_g[\ell(Y, f_{\lambda\theta_1 + (1-\lambda)\theta_2}(X))] \le \lambda R_g(\theta_1) + (1-\lambda) R_g(\theta_2).$$

\emph{Nonparametric case.} Let $(f_n) \subset \cF$ with $f_n \to f$ in the topology $\tau$. By the continuity of evaluation maps, $f_n(x) \to f(x)$ for every $x$. Thus $\ell(Y,f_n(X)) \to \ell(Y,f(X))$ almost surely by continuity of $\ell$. With the dominating function from Assumption~\ref{assumps_nonparam}(2), the Dominated Convergence Theorem gives $R_g(f_n) \to R_g(f)$.

For convexity in the nonparametric case, if $\ell(y,t)$ is convex in $t$, then for $0<\lambda<1$,
$$R_g(\lambda f_1 + (1-\lambda)f_2) \le \lambda R_g(f_1) + (1-\lambda)R_g(f_2).$$

% \emph{RKHS case.} Let $(f_n) \subset \cF$ with $f_n \rightharpoonup f$ weakly in $\cH$. Since $\cH$ is an RKHS, point evaluations are continuous: $f_n(x) = \langle f_n, k(\cdot,x) \rangle_{\cH} \to \langle f, k(\cdot,x) \rangle_{\cH} = f(x)$ for every $x$. Thus $\ell(Y,f_n(X)) \to \ell(Y,f(X))$ almost surely by continuity of $\ell$. With the dominating function from Assumption~\ref{assumps_rkhs}(2), the Dominated Convergence Theorem gives $R_g(f_n) \to R_g(f)$.

% For convexity in the RKHS case, if $\ell(y,t)$ is convex in $t$, then for $0<\lambda<1$,
% $$R_g(\lambda f_1 + (1-\lambda)f_2) \le \lambda R_g(f_1) + (1-\lambda)R_g(f_2).$$

\medskip
\noindent\textbf{(1) Compactness of $\cR(\cF)$.}

\emph{Parametric case.} The map $\bm R: \Theta \to \mathbb{R}^m$ defined by $\bm R(\theta) = (R_1(\theta), \ldots, R_m(\theta))$ is continuous. Since $\Theta$ is compact by Assumption~\ref{assumps_parametric}(3), the image $\cR(\cF) = R(\Theta)$ is compact.

\emph{Nonparametric case.} The map $\bm R: \cF \to \mathbb{R}^m$ is continuous with respect to the topology $\tau$ on $\cF$. Since $\cF$ is weakly compact by Assumption~\ref{assumps_nonparam}(3), the image $\cR(\cF) = R(\cF)$ is compact.

% \emph{RKHS case.} The map $R: \cF \to \mathbb{R}^m$ is continuous with respect to the weak topology on $\cF$. Since $\cF$ is weakly compact by Assumption~\ref{assumps_rkhs}(3), the image $\cR(\cF) = R(\cF)$ is compact.

\medskip
\noindent\textbf{(2) No Pareto optimal points in $\text{conv}(\cR(\cF)) \setminus \cR(\cF)$.}

Let $\bm r \in \text{conv}(\cR(\cF)) \setminus \cR(\cF)$. Then there exist $k \ge 2$ parameters/functions and weights $\lambda_i >0$ with $\sum_i \lambda_i = 1$ such that $\bm r = \sum_{i=1}^k \lambda_i \bm R(\theta_i)$ (or $\bm R(f_i)$).

Define $\theta^{\ast} = \sum_{i=1}^k \lambda_i \theta_i$ (or $f^{\ast} = \sum_{i=1}^k \lambda_i f_i$), which lies in $\Theta$ (or $\cF$) by convexity. By convexity of each $R_g$,
$$R_g(\theta^{\ast}) \le \sum_{i=1}^k \lambda_i R_g(\theta_i) = r_g$$
for all $g$. Since $\bm r \notin \cR(\cF)$ but $\bm R(\theta^{\ast}) \in \cR(\cF)$, we have $\bm R(\theta^{\ast}) \neq \bm r$. Combined with $\bm R(\theta^{\ast}) \le r$ componentwise, this means at least one inequality is strict, so $\bm R(\theta^{\ast})$ strictly dominates $\bm r$ in at least one component. Therefore, $\bm r$ cannot be Pareto optimal.

\medskip
\noindent\textbf{Additional properties under strict convexity.}

When the loss function is strictly convex in $\theta$ (or $t$), the convexity inequalities in Step 1 become strict inequalities whenever $\theta_1 \neq \theta_2$ (or $f_1(X) \neq f_2(X)$ with positive probability). This strict convexity yields two additional properties:

\medskip
\noindent\textbf{(3) Weakly Pareto optimal implies Pareto optimal.}

Suppose there exist parameters/functions $\theta_1, \theta_2$ (or $f_1,f_2$) such that $R_g(\theta_1) \le R_g(\theta_2)$ for all $g$ with strict inequality for at least one $g$. 

Define $\theta' = \frac{\theta_1 + \theta_2}{2}$ (or $f' = \frac{f_1 + f_2}{2}$), which lies in $\Theta$ (or $\cF$) by convexity. By strict convexity,
$$R_g(\theta') < \frac{1}{2} R_g(\theta_1) + \frac{1}{2} R_g(\theta_2) \le R_g(\theta_2)$$
for all $g$. Thus $\theta_2$ (or $f_2$) is not weakly Pareto optimal, proving that every weakly Pareto optimal point must be Pareto optimal.

\medskip
\noindent\textbf{(4) No weakly Pareto optimal points in $\text{conv}(\cR(\cF)) \setminus \cR(\cF)$.}

Let $\bm r \in \text{conv}(\cR(\cF)) \setminus \cR(\cF)$. As in part (2), there exist $k \ge 2$ parameters/functions and weights $\lambda_i >0$ with $\sum_i \lambda_i = 1$ such that $\bm r = \sum_{i=1}^k \lambda_i \bm R(\theta_i)$, and $\theta^{\ast} = \sum_{i=1}^k \lambda_i \theta_i$ lies in $\Theta$ (or $\cF$).

By strict convexity,
$$R_g(\theta^{\ast}) < \sum_{i=1}^k \lambda_i R_g(\theta_i) = r_g$$
for all $g$, so $\bm R(\theta^{\ast})$ strictly dominates $\bm r$ in all components. Therefore, $\bm r$ is not weakly Pareto optimal.
\end{proof}

\subsection{Fairness Guarantees of Relative Improvement}
\label{app:properties_proof}

We provide proofs of the fairness guarantees for relative improvement stated in the main text Section~\ref{sec:properties}.

\begin{proof}[Proof of Theorem~\ref{thm:bounded_loss}]
Since $f_{\text{RI}}$ maximizes the minimum relative improvement,
\begin{align*}
\min_{g \in \cG} \rho_g(f_{\text{RI}}) 
&= \max_{f \in \cF} \min_{g \in \cG} \rho_g(f) 
= \max_{f \in \cF} \min_{g \in \cG} \frac{R_g(f_0)-R_g(f)}{R_g(f_0)-R_g(f_g^{\ast})}\\ 
&\geq \min_{g \in \cG} \frac{R_g(f_0)-R_g(f_0)}{R_g(f_0)-R_g(f_g^{\ast})} = 0,
\end{align*}
where the inequality follows from feasibility of $f_0$. Hence $\rho_g(f_{\text{RI}}) \ge 0$ for every group $g$, which by definition of relative improvement implies $R_g(f_{\text{RI}}) \le R_g(f_0)$.
\end{proof}

\begin{proof}[Proof of Proposition~\ref{prop:axioms}]
Each axiom follows by direct verification, adapting \citet{kalai_1975_other} to our framework where groups correspond to players and negative risks to utilities.
\end{proof}

\begin{proof}[Proof of Theorem~\ref{thm:comprehensive_closure}]
Let $S = \cR(\cF)$ and let $\mathrm{comp}(S)$ denote its comprehensive closure. We show that the leximin solution over $S$ coincides with that over $\mathrm{comp}(S)$.

First, observe that any point $\bm r' \in \mathrm{comp}(S) \setminus S$ is Pareto dominated by some point $\bm r \in S$. Indeed, by definition of the comprehensive closure, there exists $\bm r \in S$ such that $r_g \le r'_g$ for all $g$, with strict inequality for at least one group. Hence, $\bm r'$ cannot be Pareto optimal in $\mathrm{comp}(S)$.

Second, the leximin solution is Pareto optimal. Therefore, no point in $\mathrm{comp}(S) \setminus S$ can be selected by the leximin criterion, and any leximin solution in $\mathrm{comp}(S)$ must lie in $S$.

Finally, since $f_0 \in \cF$, the baseline risk vector $\bm -d$ belongs to $S$. By Theorem~\ref{thm:bounded_loss}, a leximin relative improvement solution satisfies $R_g(f_{\mathrm{RI}}) \le R_g(f_0)$ for all $g$, implying that the leximin solution lies within the bounds defining $\mathrm{comp}(S)$.

Combining these observations, we conclude that the leximin solution over $S$ coincides with that over $\mathrm{comp}(S)$.
\end{proof}

\begin{proof}[Proof of Proposition~\ref{prop:generalized_axioms}]
The leximin solution on a feasible set $S$ equals the leximin solution on its comprehensive closure $\text{comp}(S)$. Denote by $f(S,\bm d)$ the leximin solution when $S$ is a feasible set with disagreement point $\bm d$, and by $g(S,\bm d)$ the leximin solution when $S$ is a comprehensive feasible set with disagreement point $d$. Then $f(S,\bm d) = g(\text{comp}(S), \bm d)$.

Since $g(\cdot,\cdot)$ satisfies the five axioms on comprehensive sets \citep{imai1983individual}:
\begin{enumerate}
\item \textbf{Pareto Optimality (PO):} 
$$f(S,\bm d) = g(\text{comp}(S),\bm d) \in \text{PF}(\text{comp}(S)) = \text{PF}(S).$$

\item \textbf{Symmetry (SYM):} For any permutation $\pi$,
\begin{align*}
\pi * f(S,d) 
&= \pi * g(\text{comp}(S), \bm d) = g(\pi * \text{comp}(S), \pi * \bm d) \\
&= g(\text{comp}(\pi * S), \pi * \bm d) = f(\pi * S, \pi * \bm d).
\end{align*}

\item \textbf{Scale Invariance (SI)} For any affine transformation $T$ applied coordinatewise,
\begin{align*}
T(f(S,\bm d)) 
&= T(g(\text{comp}(S), \bm d)) = g(T(\text{comp}(S)), T(\bm d)) \\
&= g(\text{comp}(T(S)), T(\bm d)) = f(T(S), T(\bm d)).
\end{align*}

\item \textbf{Independence of Irrelevant Alternatives with Ideal point (IIIA)} If $\text{comp}(S_1) \subseteq \text{comp}(S_2)$ with the same ideal point $I(\text{comp}(S_1)) = I(\text{comp}(S_2))$, and if $f(S_2,\bm d) = g(\text{comp}(S_2), \bm d) \in \text{comp}(S_1)$, then
$$f(S_1, \bm d) = g(\text{comp}(S_1), \bm d) = g(\text{comp}(S_2), \bm d) = f(S_2, \bm d).$$

\item \textbf{Modified Individual Monotonicity (IM')} If $\text{comp}(S_1) \subseteq \text{comp}(S_2)$ and the relevant projections for player $i$ coincide, ${}^{i}\text{comp}(S_1) = {}^{i}\text{comp}(S_2)$ (equivalently, $\text{comp}({}^{i}S_1) = \text{comp}({}^{i}S_2)$), then
$$f_i(S_1, \bm d) = g_i(\text{comp}(S_1), \bm d) \le g_i(\text{comp}(S_2), \bm d) = f_i(S_2, \bm d).$$
\end{enumerate}

Our relative improvement maximizer $f_{\text{RI}}$ operates on function classes $\cF$ rather than abstract feasible sets $S$ under the mapping in Section~\ref{subsec:general_framework}.
\end{proof}

\subsection{Empirical Estimator}
We establish concentration inequalities and convergence rates for the empirical estimator in Section~\ref{sec:empirical}.

\begin{proof}[Proof of Lemma~\ref{lem:suff-to-cond1}]
(A) By symmetrization (Lemma 2.3.1 in \cite{van2000asymptotic}),
\begin{align*}
\mathbb{E}\left[\sup_{f \in \cF}
\big|\widehat{R}_g(f) - R_g(f)\big|\right]
&\le
\frac{2}{n_g}
\mathbb{E}_{X,Y,\varepsilon}\left[
\sup_{f \in \cF}
\Big|\sum_{i=1}^{n_g}
\varepsilon_i\,\ell(Y_i, f(X_i))\Big|
\right] \\
&= 2\cR_{n_g}(\ell \circ \cF),
\end{align*}
where $\cR_{n_g}(\ell \circ \cF)$ is the Rademacher complexity of the function class $\ell\circ\cF$ when the sample size is $n_g$. 
By Dudley’s entropy integral bound (Theorem 5.22 in \cite{wainwright2019high}), 
\begin{equation}\label{eq:dudley_rademacher}
\cR_{n_g}(\ell \circ \cF)
=\frac{1}{\sqrt{n_g}}\mathbb{E}_\varepsilon\left[\sup_{f\in\cF} Z_f\right]
\le
\frac{32}{\sqrt{n_g}}
\int_{0}^{1}
\sqrt{\log\mathcal{N}(u;\ell \circ \cF,L_2(P_{g}))}\,du.
\end{equation}
Since $\ell(\cdot, \cdot)$ is L-Lipschitz in the second argument, for all $f,f'\in\cF$,
$$\|\ell(\cdot,f(\cdot))-\ell(\cdot,f'(\cdot))\|_{L_2(P_g)}
\le L\,\|f-f'\|_{L_2(P_{g,X})}.$$ Therefore, $\mathcal{N}(u;\ell \circ \cF,L_2(P_{g})) \le \mathcal{N}(u/L; \cF,L_2(P_{g,X}))$, and we obtain
$$
\cR_{n_g}(\ell \circ \cF)
\le
\frac{32}{\sqrt{n_g}}
\int_{0}^{1}
\sqrt{\log\mathcal{N}(u/L;\cF,L_2(P_{g,X}))}\,du.
$$
Changing the variables by $t = u/L$,
$$
\int_{0}^{1}
\sqrt{\log\mathcal{N}(u/L;\cF,L_2(P_{g,X}))}\,du = L \int_{0}^{1/L}
\sqrt{\log\mathcal{N}(t;\cF,L_2(P_{g,X}))}\,dt
$$
Under the entropy condition,
$$
\begin{aligned}
\int_{0}^{1/L}
\sqrt{\log \mathcal{N}(t;\cF,L_2(P_{g,X}))}\,dt
&\le
\sqrt{C_0}\int_{0}^{1/L} t^{-p/2}\,dt \\
&=
\frac{\sqrt{C_0}}{1 - p/2}
\left(\frac{1}{L}\right)^{1 - \frac{p}{2}} < \infty.
\end{aligned}
$$
Therefore,
\begin{equation}
\label{lem_pf:exp_bound}
\mathbb{E}\left[\sup_{f\in\cF}\big|\hat R_g(f)-R_g(f)\big|\right]
\le 2\mathbb{E}\left[\widehat{\cR}_{n_g}(\ell \circ \cF)\right]
\le C_1\frac{L}{\sqrt{n_g}}.
\end{equation}

Define
$$
\Phi(S):=\sup_{f\in\cF}\big|\hat R_g(f)-R_g(f)\big|
=\sup_{f\in\cF}\left|\frac{1}{n_g}\sum_{i\in I_g}\big(\ell(y_i,f(x_i))-\mathbb{E}[\ell(Y,f(X))]\big)\right|,
$$
as a function of the sample $S=\{(x_i,y_i)\}_{i\in I_g}$. 
If we replace a single point $(x_i,y_i)$ by $(x_i',y_i')$, then by boundedness $\ell\in[0,1]$,
$$
|\Phi(S)-\Phi(S^{(i)})|
\le
\frac{1}{n_g}\sup_{f\in\cF}|\ell(y_i,f(x_i))-\ell(y_i',f(x_i'))|
\le \frac{1}{n_g}.
$$
Thus $\Phi$ satisfies the bounded-differences condition with $c_i=1/n_g$. 
By McDiarmid’s inequality (Corollary 2.21 in \cite{wainwright2019high}), for any $t>0$,
$$
\mathbb{P}\left\{\Phi(S)-\mathbb{E}[\Phi(S)]\ge t'\right\}
\le
\exp\left(-\frac{2t'^2}{\sum_{i}c_i^2}\right)
=
\exp(-2n_g t'^2).
$$
Choosing $t'=\frac{t}{\sqrt{n_g}}$ gives, with probability at least $1-2e^{-t}$,
\begin{equation}
\label{lem_pf:tail_bound}
\sup_{f\in\cF}\big|\hat R_g(f)-R_g(f)\big|
\le
\mathbb{E}\left[\sup_{f\in\cF}\big|\hat R_g(f)-R_g(f)\big|\right]
+\sqrt{\frac{t}{n_g}}.
\end{equation}
Combining \eqref{lem_pf:exp_bound} and \eqref{lem_pf:tail_bound} yields the desired bound.

\smallskip
(B) Since we no longer have a bound on the loss function, consider the bounded loss function defined as
$$
\ell_f = \phi_\tau (\ell_f) + (\ell_f - \phi_\tau (\ell_f)), 
\qquad 
\phi_\tau (u) = \mathrm{sign}(u)\min\{|u|, \tau\},
$$
where $\ell_f(x,y):=\ell(y,f(x))$ and $\tau>0$ is some constant. The remainder term satisfies
$$
|\ell_f - \phi_\tau (\ell_f)| \le F_g(x,y)\,\mathbf{1}\{F_g(x,y) \ge \tau\}=:Z(x,y).
$$
Hence, the tail contribution to the empirical deviation obeys
$$
\sup_{f\in\mathcal{F}} 
\big|(\hat{P}_g - P_g)(\ell_f - \phi_\tau(\ell_f))\big| 
\le 
\big|(\hat{P}_g - P_g)Z\big|.
$$
Since $F_g(X,Y)$ is sub-Gaussian with parameter $\sigma$,
$$
\mathbb{E}[Z^2]
= \mathbb{E}\big[F_g^2\,\mathbf{1}\{F_g \ge \tau\} \big] 
= \int_{\tau}^\infty 2t\, P(F_g > t)\,dt 
\le \int_{\tau}^\infty 2t\,e^{-t^2/(2\sigma^2)}\,dt 
= 2\sigma^2 e^{-\tau^2/(2\sigma^2)}.
$$
Thus,
$$
\mathbb{E}\Bigl[\sup_{f\in\mathcal{F}} 
\big|(\hat{P}_g - P_g)(\ell_f - \phi_\tau(\ell_f))\big|\Bigr]
\le 
\frac{C_3\sigma}{\sqrt{n_g}}\, e^{-\tau^2/(2\sigma^2)}.
$$
Moreover, since $(\hat{P}_g - P_g)Z$ is $\sigma/\sqrt{n_g}$-sub-Gaussian, it follows that $\sup_{f\in\mathcal{F}} 
\big|(\hat{P}_g - P_g)(\ell_f - \phi_\tau(\ell_f))\big|$ is also $\sigma/\sqrt{n_g}$-sub-Gaussian. Therefore, by concentration of sub-Gaussian random variables, with probability at least $1-2e^{-t}$,
\begin{align}
\sup_{f\in\mathcal{F}} 
\big|(\hat{P}_g - P_g)(\ell_f - \phi_\tau(\ell_f))\big| 
&\le \mathbb{E}\Bigl[\sup_{f\in\mathcal{F}} 
\big|(\hat{P}_g - P_g)(\ell_f - \phi_\tau(\ell_f))\big|\Bigr] + C_4 \sigma \sqrt{\frac{t}{n_g}} \nonumber\\
&\le \frac{C_3\sigma}{\sqrt{n_g}}\, e^{-\tau^2/(2\sigma^2)} + C_4 \sigma \sqrt{\frac{t}{n_g}}.
\label{eq:tail_bound}
\end{align}
On the other hand, since $\phi_\tau (\ell_f)$ is a bounded and Lipschitz loss satisfying the entropy condition, it follows from part (A) that, with probability at least $1-2e^{-t}$,
\begin{align}
\label{eq:bounded_bound}
\sup_{f\in\mathcal{F}}
\big|(\hat{P}_g - P_g)\phi_\tau (\ell_f)\big|
\le 
\frac{C_1 L}{\sqrt{n_g}} + C_2\sqrt{\frac{t}{n_g}}.
\end{align}
Combining \eqref{eq:tail_bound} and \eqref{eq:bounded_bound} via the union bound, we obtain that with probability at least $1-4e^{-t}$,
\begin{align*}
\sup_{f\in\mathcal{F}}
\big|(\hat{P}_g - P_g)\ell_f\big|
&\le 
\sup_{f\in\mathcal{F}}
\big|(\hat{P}_g - P_g)\phi_\tau(\ell_f)\big| + 
\sup_{f\in\mathcal{F}}
\big|(\hat{P}_g - P_g)(\ell_f-\phi_\tau(\ell_f))\big| \\
&\le
\frac{C_1 L}{\sqrt{n_g}} + C_2\sqrt{\frac{t}{n_g}} + \frac{C_3\sigma}{\sqrt{n_g}}\, e^{-\tau^2/(2\sigma^2)} + C_4 \sigma \sqrt{\frac{t}{n_g}} \\
&\le 
\frac{C_1' L}{\sqrt{n_g}} + C_2'\sigma\sqrt{\frac{t}{n_g}},
\end{align*}
where $C_1' = C_1 + C_3 e^{-\tau^2/(2\sigma^2)}$ and $C_2' = C_2 + C_4$.
\end{proof}

\begin{proof}[Proof of Theorem~\ref{thm:RI_main}]
Fix $\delta\in(0,1)$ and set
$$
\varepsilon_0\ :=\ \max_{g\in\cG}\ r_{n_g}\!\big(\log(2m/\delta)\big).
$$
By Assumption~\ref{asmp:cond1} and a union bound over $g\in\cG$, with probability at least $1-\delta$ we have, simultaneously for all $g$,
\begin{equation}
\label{eq:upper_bound}
\sup_{f\in\cF}\big|\hat R_g(f)-R_g(f)\big|\ \le\ r_{n_g}\!\big(\log(2m/\delta)\big)\ \le\ \varepsilon_0.
\end{equation}

Consider the difference between the empirical and population relative improvements for a generic $f$:
$$
\hat{\rho}_g(f) - \rho_g(f) \;=\; \frac{\hat{R}_g(f_0) - \hat{R}_g(f)}{\hat{R}_g(f_0) - \hat{R}_g^{\ast}} \;-\; \frac{R_g(f_0) - R_g(f)}{R_g(f_0) - R_g^{\ast}}. 
$$
For simplicity, write $A_g(f) = R_g(f_0) - R_g(f)$ and $A_g^{\ast} = A_g(f_g^{\ast})$. Define the empirical counterparts $\hat{A}_g(f) = \hat{R}_g(f_0) - \hat{R}_g(f)$ and $\hat{A}_g^{\ast} = \hat{A}_g(f_g^{\ast})$. Then
\begin{align*}
|\hat{\rho}_g(f) - \rho_g(f)|
&= \left|\frac{\hat{A}_g(f)}{\hat{A}_g^{\ast}} - \frac{A_g(f)}{A_g^{\ast}}\right| \\
&= \frac{|A_g^{\ast}(\hat{A}_g(f) - A_g(f)) - A_g(f)(\hat{A}_g^{\ast} - A_g^{\ast})|}{\hat{A}_g^{\ast} A_g^{\ast} } \\
&\le \frac{|\hat{A}_g(f) - A_g(f)|}{\hat{A}_g^{\ast}} + \frac{A_g(f)}{A_g^{\ast}}\,\frac{|\hat{A}_g^{\ast} - A_g^{\ast}|}{\hat{A}_g^{\ast}}.
\end{align*} 
Since $A_g^{\ast} = R_g(f_0) - R_g(f_g^{\ast}) \ge \Delta$ (Assumption~\ref{asmp:gap}), and by \eqref{eq:upper_bound},
$$
|\hat{A}_g^{\ast} - {A}_g^{\ast}|
\;\le\;
2\sup_{f \in \cF} |\hat{R}_g(f) - R_g(f)|
\;\le\; 2\varepsilon_0
\;<\; \frac{\Delta}{2}.
$$
Hence, with probability at least $1-\delta$, we have $\hat{A}_g^{\ast} \ge \Delta/2$ for all $g \in \cG$. Moreover,
\begin{align*}
|\hat{\rho}_g(f) - \rho_g(f)|
&\le \frac{1}{\Delta}\,|\hat{A}_g(f) - A_g(f)| \;+\; \frac{2}{\Delta}\,|\hat{A}_g^{\ast} - A_g^{\ast}| \\
&\le \frac{3}{\Delta}\,\sup_{f \in \cF} |\hat{R}_g(f) - R_g(f)| \;+\; \frac{3}{\Delta}\,|\hat{R}_g(f_0) - R_g(f_0)|.
\end{align*}
Therefore,
$$
\sup_{f \in \cF} |\hat{\rho}_g(f) - \rho_g(f)|
\;\le\;
\frac{3}{\Delta}\,\sup_{f \in \cF} |\hat{R}_g(f) - R_g(f)|
\;+\;
\frac{3}{\Delta}\,|\hat{R}_g(f_0) - R_g(f_0)|
\;\le\;
\frac{6}{\Delta}\,\varepsilon_0
\;=:\; \varepsilon.
$$

Also, plugging $f = \hat{f}_g^{\ast}$ into the display above gives
\begin{align}
\frac{\hat{A}_g(\hat{f}_g^{\ast})}{\hat{A}_g(f_g^{\ast})}
&\le \frac{A_g(\hat{f}_g^{\ast})}{A_g(f_g^{\ast})} + \varepsilon \;\le\; 1 + \varepsilon, \\
\frac{1}{\hat{A}_g(\hat{f}_g^{\ast})}
&\ge \left(1- \varepsilon\right)\,\frac{1}{\hat{A}_g(f_g^{\ast})}.
\end{align}

Now, with probability at least $1-\delta$, for any $f \in \cF$,
\begin{align*}
\min_{g \in \cG} \rho_g(\hat{f}_{\mathrm{RI}})
&= \min_{g \in \cG} \left[ \frac{A_g(\hat{f}_{\mathrm{RI}})}{A_g(f_g^{\ast})} \right] \\
&\ge \min_{g \in \cG} \left[ \frac{\hat{A}_g(\hat{f}_{\mathrm{RI}})}{\hat{A}_g(f_g^{\ast})} - \varepsilon \right] \\
&\ge \min_{g \in \cG} \left[ \frac{\hat{A}_g(\hat{f}_{\mathrm{RI}})}{\hat{A}_g(\hat{f}_g^{\ast})} - \varepsilon \right] \\
&\ge \min_{g \in \cG} \left[ \frac{\hat{A}_g(f)}{\hat{A}_g(\hat{f}_g^{\ast})} - \varepsilon \right] \\
&\ge \min_{g \in \cG} \left[ (1-\varepsilon)\,\frac{\hat{A}_g(f)}{\hat{A}_g(f_g^{\ast})} - \varepsilon \right] \\
&\ge \min_{g \in \cG} \left[ (1-\varepsilon)\,\left\{ \frac{A_g(f)}{A_g(f_g^{\ast})} - \varepsilon \right\} - \varepsilon \right] \\
&\ge \min_{g \in \cG} \left[ \frac{A_g(f)}{A_g(f_g^{\ast})} - 3 \varepsilon \right]
\;=\; \min_{g \in \cG} \rho_g(f) \;-\; 3 \varepsilon.
\end{align*}
Therefore,
$$
\min_{g \in \cG}\rho_g(\hat f_{\text{RI}})
\ \ge\
\min_{g \in \cG} \rho_g(f_{\text{RI}})
\;-\;
\frac{C}{\Delta}\left(
\max_{g\in\cG}\ r_{n_g}\!\big(\log(2m/\delta)\big)
\right),
$$
which holds for some constant $C>0$.
\end{proof}

\section{Choice and Role of the Baseline Predictor} \label{app:f0_discussion}

The baseline predictor $f_0$ plays two roles in our framework: it defines the disagreement point $d_g = -R_g(f_0)$ in the bargaining problem, and it serves as the reference against which relative improvement is measured.

\textbf{Interpretable default choices.} In most applications, $f_0$ should be chosen as an interpretable baseline that represents the trivial prediction made without access to covariates. In regression, the natural choice is the unconditional mean $f_0(x) = \mathbb{E}[Y]$, equivalently $f_0(x) = 0$ after centering. Under squared loss, the denominator becomes $$R_g(f_0) - R_g(f_g^{\ast}) = \mathbb{E}_{P_g}[Y^2] - \sigma_g^2 = \|\beta_g\|_{\Sigma_g}^2,$$
which is the group-specific explained variance, and the relative improvement $\rho_g(f)$ coincides with the ratio of explained variance. Furthermore, it admits an equivalent interpretation as the ratio of coefficients of determination $R^2$: 
$$\rho_g(\beta) = \frac{R^2_g(\beta)}{R^2_g(\beta_g)},$$
where $R^2_g(\beta)$ denotes the coefficient of determination for group $g$. This interpretability requirement is not merely a convention: it ensures that $R_g(f_0) - R_g(f_g^{\ast})$ has a clear meaning as the total available signal for group $g$, and that $\rho_g(f)$ can be understood as the fraction of that signal captured by $f$. The choice $f_0(x) = 0$ is also consistent with the baseline used by \citet{meinshausen2015maximin}. In binary classification, the majority-class predictor $f_0(x) = \pi_0$, where $\pi_0 = P(Y = 1)$, serves the same role. In some applications, $f_0$ may represent an existing deployed model with access to fewer covariates than $\cF$, provided it admits a clear interpretation as a pre-intervention baseline; in this case, $R_g(f_0) - R_g(f_g^{\ast})$ quantifies the additional predictability gained by incorporating the full covariate set.

\textbf{Bargaining interpretation.} In the bargaining formulation, $f_0$ represents the disagreement point: the outcome groups revert to if negotiation fails. Crucially, $f_0$ is exogenous to the bargaining problem---it is fixed before negotiation begins and is not itself a product of optimization over $\cF$. This further motivates the interpretability requirement. A predictor that already embodies a compromise across groups---such as a pooled ERM solution or any Pareto-optimal predictor---is generally unsuitable as $f_0$ for two reasons. First, $R_g(f_0) - R_g(f_g^{\ast})$ loses its interpretability as a measure of available signal, since it conflates the available signal with the effects of a prior modeling choice. Second, a group that is already satisfied with the existing compromise would have no incentive to participate in the bargaining process, undermining the cooperative framing.

\textbf{Sensitivity to the baseline choice.} Since $\rho_g(f)$ is defined relative to $f_0$, the solution $f_{\mathrm{RI}}$ generally depends on the baseline. This is inherent to the formulation: group-wise improvement is meaningful only relative to a specified starting point, and changing $f_0$ defines a different bargaining problem rather than a robustness variant of the same one. In practice, $f_0$ is typically determined by domain convention. Our use of the general notation $f_0$ reflects this range of applications rather than an arbitrary modeling choice.

\section{Verification of Assumptions for the Examples}
\label{app:assumption_verification}

In this section, we verify that the examples introduced in Section~\ref{sec:formulation} satisfy Assumptions~\ref{assumps_parametric} and~\ref{assumps_nonparam} under standard regularity conditions.

For linear regression with squared loss, Assumption~\ref{assumps_parametric} requires compact convex $\Theta$ and finite second moments; all properties hold when $\Sigma_g$ is positive definite, while only (1)-(2) hold in overparameterized settings. For logistic regression with binary cross-entropy, Assumption~\ref{assumps_parametric} requires compact convex $\Theta$, finite first moments, and $X \ne 0$ with positive probability. For RKHS methods with squared loss, Assumption~\ref{assumps_nonparam} requires $\bE_g[k(X,X)] < \infty$. The nonparametric assumption also encompasses Lipschitz/H\"older functions (via Arzel\`a-Ascoli) and sieves/basis expansions. 

\textbf{Linear Regression.} Consider a function class $\cF = \{f(x) = \theta^T x: \theta \in \Theta\}$ with a convex and compact set $\Theta$ and squared loss $\ell(y, f(x)) = (y-f(x))^2$. The data generation process is $Y = \beta_g^T X + \epsilon_g$ with $\bE_g[\epsilon_g|X] =0$, $\epsilon_g \sim \cN(0, \sigma_g^2)$ and $\bE_g[XX^T] = \Sigma_g$. 

We verify that Assumption~\ref{assumps_parametric} holds:

\textbf{(1) Strict convexity and continuity.} The squared loss $\ell(y,\theta^\top x) = (y - \theta^\top x)^2$ is continuous with respect to $\theta$. For strict convexity, note that
$$R_g(\theta) = \bE_g[(Y - \theta^T X)^2] = \|\theta - \beta_g\|^2_{\Sigma_g} + \sigma_g^2,$$
where $\|\theta\|^2_{\Sigma_g} = \theta^T \Sigma_g \theta$. If $\Sigma_g$ is positive definite, then $R_g(\theta)$ is strictly convex in $\theta$. In the overparameterized case where $\Sigma_g$ is not positive definite, the risk is convex but not strictly convex: if $v \in \ker(\Sigma_g)$, then $R_g(\theta + tv) = R_g(\theta)$ for all $t \in \mathbb{R}$.

\textbf{(2) Dominating function.} For any $\theta \in \Theta$,
\begin{align*}
\ell(y,f_\theta(x)) = (y-\theta^\top x)^2 &\le 2y^2+ 2(\theta^\top x)^2 \\
&\le 2y^2+ 2\|\theta\|^2 \|x\|^2 \\
&\le 2y^2 + 2B^2 \|x\|^2 =: h(x,y),
\end{align*}
where $B = \sup_{\theta \in \Theta} \|\theta\| < \infty$ by compactness of $\Theta$. Since $\bE_g[Y^2] = \bE_g[(\beta_g^T X + \epsilon_g)^2] < \infty$ (as $\bE_g[\|X\|^2] < \infty$ and $\sigma_g^2 < \infty$) and $\bE_g[\|X\|^2]<\infty$, we have $\bE_g[h(X,Y)] < \infty$ for all $g$.

\textbf{(3) Compact and convex parameter space.} The parameter set $\Theta$ is compact and convex by assumption.

Therefore, all conditions of Assumption~\ref{assumps_parametric} are satisfied for the linear regression problem with finite second moment of $\|X\|$.

\textbf{Logistic Regression.} Consider a function class $\mathcal{F} = \{f(x) = \sigma(\theta^Tx): \theta \in \Theta\}$, where $\sigma(\cdot)$ denotes the sigmoid function and $\Theta$ denotes a convex and compact parameter set. Let the loss function be the binary cross-entropy $\ell(y,f(x))=-y\log f(x)-(1-y)\log (1-f(x))$ for $y \in \{0,1\}$. The data generation process is $Y = \mathbf{1}\{\beta_g^TX + \epsilon > 0\}$ where $X \sim N(0, \Sigma_g)$ and $\epsilon \sim N(0, \sigma_g^2)$.

We verify that Assumption~\ref{assumps_parametric} holds:

\textbf{(1) Convexity and continuity.} The binary cross-entropy loss can be written as
$$\ell(y, f_\theta(x)) = -y\log\sigma(\theta^\top x) - (1-y)\log(1-\sigma(\theta^\top x)) = \log(1 + e^{\theta^\top x}) - y\theta^Tx.$$
This function is convex in $\theta$, and the risk function $R_g(\theta) = \bE_g[\ell(Y, f_\theta(X))]$ is convex. Continuity with respect to $\theta$ is immediate from continuity of the exponential and logarithm functions.

\textbf{(2) Dominating function.} The loss function satisfies
\begin{align*}
|\ell(y, f_\theta(x))| = |\log(1 + e^{\theta^Tx}) - y\theta^\top x| &\leq \log(1 + e^{|\theta^\top x|}) + |\theta^Tx| \\
& \le \log 2 + 2|\theta^Tx| \\
& \le \log 2 + 2B\|x\| =: h(x,y),
\end{align*}
where we used $\log(1 + e^t) \le \log 2 + |t|$ for all $t \in \mathbb{R}$, and $B = \sup_{\theta \in \Theta} \|\theta\| < \infty$ by compactness. Since $\bE_g[\|X\|]<\infty$ by assumption, we have $\bE_g[h(X,Y)] < \infty$ for all $g$.

\textbf{(3) Compact and convex parameter space.} The parameter set $\Theta$ is compact and convex by assumption.

Therefore, all conditions of Assumption~\ref{assumps_parametric} are satisfied for the logistic regression problem with binary cross-entropy and finite first moment of $\|X\|$.

\textbf{RKHS Setting.} For a positive semi-definite kernel $k:\cX \times \cX \rightarrow \bR$ defining an RKHS $\cH$, consider the function class 
$$\cF = \{f \in \cH: \|f\|_\cH \le R\}$$
with squared loss $\ell(y,t) = (y-t)^2$.

We verify that Assumption~\ref{assumps_nonparam} holds:

\textbf{(1) Strict convexity and continuity.} The squared loss $\ell(y,t) = (y-t)^2$ is strictly convex and continuous with respect to $t$.

\textbf{(2) Dominating function.} Since $\cH$ is an RKHS, every $f\in\cF$ satisfies
$$|f(x)| \le \|f\|_{\cH}\, \sqrt{k(x,x)} \le R\sqrt{k(x,x)}.$$
Therefore,
$$|\ell(y,f(x))| \le 2y^2 + 2|f(x)|^2 \le 2y^2 + 2R^2 k(x,x) =: g(x,y).$$
If $\bE_g[Y^2] < \infty$ and $\bE_g[k(X,X)] < \infty$ for all $g$, then $\bE_g[g(X,Y)] < \infty$ for all $g$.

\textbf{(3) Weak compactness.} The set $\cF = \{f \in \cH: \|f\|_\cH \le R\}$ is the closed ball of radius $R$ in the Hilbert space $\cH$. By the Banach-Alaoglu theorem (or equivalently, the weak compactness of closed balls in Hilbert spaces), $\cF$ is weakly compact. Moreover, $\cF$ is convex by definition. Evaluation maps $f \mapsto f(x)$ are continuous with respect to the weak topology since $f(x) = \langle f, k(\cdot,x)\rangle_\cH$.

Therefore, all conditions of Assumption~\ref{assumps_nonparam} are satisfied for the RKHS setting with squared loss when $\bE_g[Y^2] < \infty$ and $\bE_g[k(X,X)] < \infty$ for all groups $g$. We provide examples of nonparametric function classes that satisfy the assumptions under standard regularity conditions.

\textbf{Lipschitz and H\"older Functions.} For $\cF = \{f : |f(x)-f(x')| \le L\|x-x'\|^\alpha, \|f\|_\infty \le M\}$ with $\cX$ compact, the function class is equicontinuous and uniformly bounded. By the Arzel\`a-Ascoli theorem, $\cF$ is compact under the sup-norm topology. Evaluation maps $f \mapsto f(x)$ are continuous since $\|f_n - f\|_\infty \to 0$ implies $|f_n(x) - f(x)| \to 0$ for all $x$. Convexity is immediate: if $f_1, f_2 \in \cF$ and $0 < \lambda < 1$, then
$$|(\lambda f_1 + (1-\lambda)f_2)(x) - (\lambda f_1 + (1-\lambda)f_2)(x')| \le \lambda L\|x-x'\|^\alpha + (1-\lambda)L\|x-x'\|^\alpha = L\|x-x'\|^\alpha,$$
and $\|\lambda f_1 + (1-\lambda)f_2\|_\infty \le \lambda M + (1-\lambda)M = M$.

\textbf{Sieves and Basis Expansions.} For $\cF_k = \{\sum_{j=1}^k \theta_j \phi_j : \theta \in \Theta\}$ where $\Theta \subset \mathbb{R}^k$ is compact and convex and $\{\phi_j\}$ are continuous basis functions (e.g., wavelets, splines), the map $R: \Theta \to C(\cX)$ defined by $R(\theta)(x) = \sum_{j=1}^k \theta_j \phi_j(x)$ is continuous in the sup-norm topology. Since $\Theta$ is compact, $\cF_k = R(\Theta)$ is compact. Evaluation maps are continuous: if $\theta_n \to \theta$ in $\Theta$, then $\sum_{j=1}^k \theta_{n,j} \phi_j(x) \to \sum_{j=1}^k \theta_j \phi_j(x)$ for all $x$. Convexity follows from convexity of $\Theta$: if $f_1 = \sum_j \theta_j^{(1)} \phi_j$ and $f_2 = \sum_j \theta_j^{(2)} \phi_j$, then $\lambda f_1 + (1-\lambda)f_2 = \sum_j (\lambda \theta_j^{(1)} + (1-\lambda)\theta_j^{(2)}) \phi_j \in \cF_k$.

\section{Discussion of the Bargaining Problem}
\label{app:disc_bargaining}

\subsection{Axioms for Other Bargaining Solutions}
\label{app:axioms}
We provide formal definitions of axioms referenced in Section~\ref{subsec:axiom_comparison}. The four core axioms (PO), (SYM), (SI), and (IM) are defined in the main text. Here we define additional axioms satisfied by alternative bargaining solutions.

\begin{enumerate}
\item \textbf{Weak Pareto Optimality (WPO).} There exists no $f' \in \cF$ such that $R_g(f') < R_g(f)$ for all $g \in \cG$. This is weaker than (PO), which requires that no alternative weakly improves all components and strictly improves at least one.

\item \textbf{Independence of Irrelevant Alternatives (IIA).} If $\cF_1 \subseteq \cF_2$ with the same baseline $f_0$ and disagreement point, and the solution under $\cF_2$ satisfies $f_2 \in \cF_1$, then $f_1 = f_2$.

\item \textbf{Translation Invariance (TI).} For any constants $\{c_g\}_{g=1}^m$, the affine transformation $\widetilde{R}_g(f) = R_g(f) + c_g$ preserves the solution structure: $\widetilde{R}_g(\widetilde{f}) = R_g(f) + c_g$ for all $g$, where $\widetilde{f}$ denotes the solution under the transformed problem.

\item \textbf{Strong Monotonicity (SM).} If $\cF_1 \subseteq \cF_2$ with the same baseline $f_0$, then $R_g(f_{1}) \geq R_g(f_{2})$ for all $g \in \cG$, where $f_1$ and $f_2$ denote the solutions under $\cF_1$ and $\cF_2$, respectively.

\item \textbf{Strong Monotonicity other than Ideal Point (SMON).} If $\cF_1 \subseteq \cF_2$ with the same group-optimal risks $R_g(f_g^{\ast})$ for all $g$, then $R_g(f_1) \geq R_g(f_2)$ for all $g \in \cG$, where $f_1$ and $f_2$ denote the solutions under $\cF_1$ and $\cF_2$, respectively. 
\end{enumerate}

\textbf{Nash Bargaining Solution}
The Nash bargaining solution \citep{nash1950bargaining} maximizes the product of utility gains: $\max_{u \in S} \prod_{i=1}^m (u_i - d_i)$. It is uniquely characterized by (PO), (SYM), (SI), and (IIA).

\textbf{Egalitarian Solution}
The egalitarian solution \citep{kalai1977proportional} selects the outcome where all players achieve equal gain from the disagreement point: $u_i - d_i = u_j - d_j$ for all $i, j$. It satisfies (SYM), (TI), and (SM), but only guarantees weak Pareto optimality in general. \citet{chen2000individual} showed that in two-player games, when the egalitarian solution coincides with the maximin gain solution, full (PO) is guaranteed.

\textbf{Equal Loss Solution}
The equal loss solution \citep{chun1988equal} selects outcomes where all players suffer equal loss from their ideal points: $b_i - u_i = b_j - u_j$ for all $i, j$, where $b_i$ is player $i$'s maximum achievable utility. Like the egalitarian solution, it satisfies (WPO), (SYM), (TI), and (SMON), where (SMON) is a slight modification of (SM) with respect to the ideal point rather than the disagreement point.

See Figure~\ref{fig:bargaining_solutions} for illustration of each bargaining solution in two player setting.

\begin{figure}[t]
\centering
\includegraphics[width=0.95\textwidth]{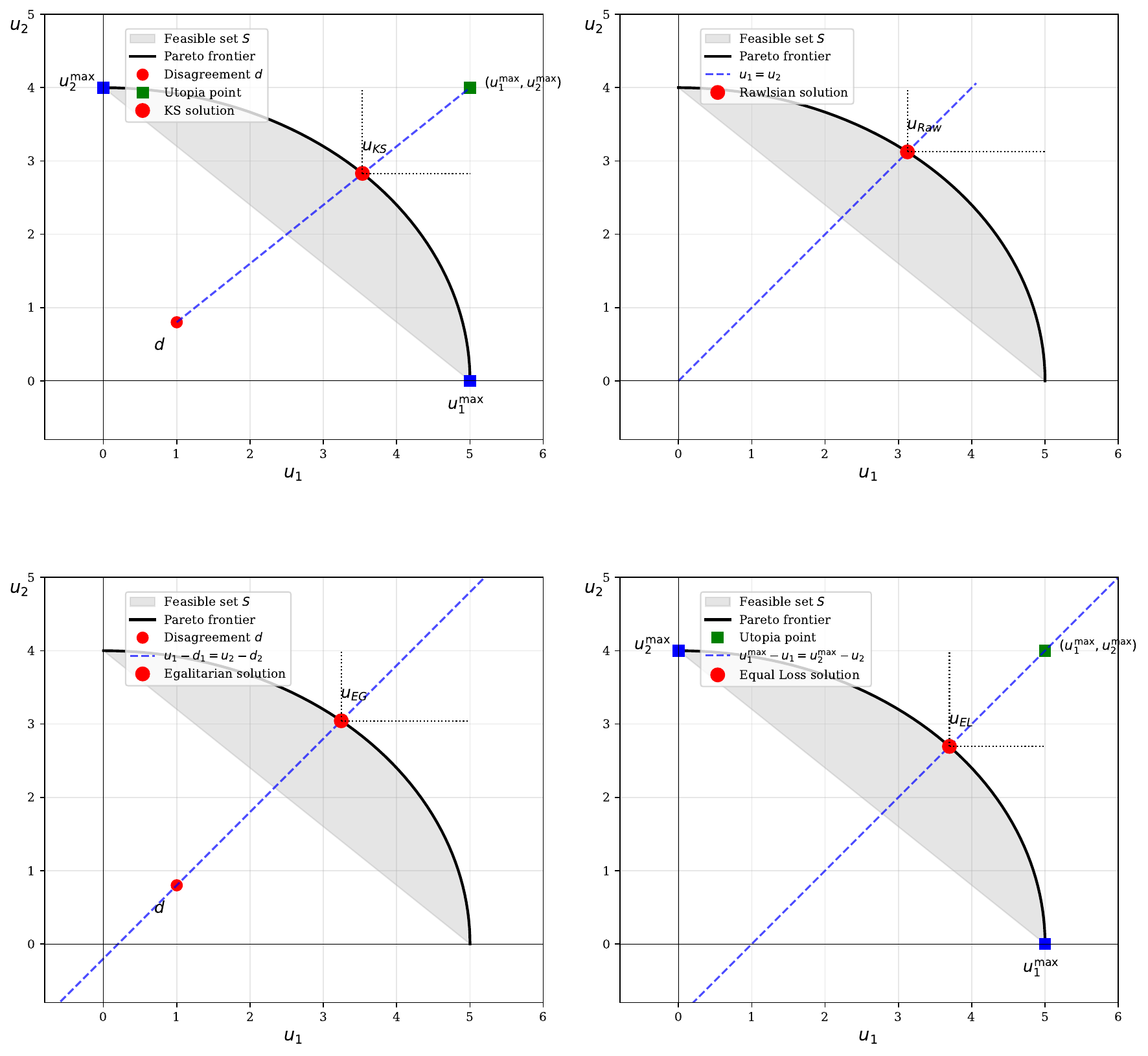}
\caption{Comparison of bargaining solutions: (a) KS equalizes relative improvements, (b) Rawlsian maximizes minimum utility, (c) Egalitarian equalizes absolute gains from $d$, (d) Equal Loss equalizes regrets from ideal points.}
\label{fig:bargaining_solutions}
\end{figure}

\subsection{Leximin Refinement and Comprehensive Closure}
\label{app:extension}

\subsubsection{Leximin Refinement}
\label{app:leximin_refinement}

\begin{definition}[Leximin solution]
A leximin solution is defined through a sequential maximization process. For a predictor $f \in \cF$, let $\bm \rho(f) = (\rho_1(f), \cdots, \rho_m(f))$ denote its relative improvement vector, and let $\rho_{(1)}(f) \leq \cdots \leq \rho_{(m)}(f)$ denote the sorted components. The leximin solution $f^{\ast}_{\text{RI}}$ is found by:
\begin{enumerate}
\item First, maximize the minimum relative improvement:
$$\cF_1 = \argmax_{f\in\cF} \min_{g \in \cG} \rho_g(f) = \argmax_{f \in \cF} \rho_{(1)}(f).$$
\item Among predictors in $\cF_1$, maximize the second-smallest relative improvement:
$$\cF_2 = \argmax_{f \in \cF_1} \rho_{(2)}(f)$$
\item Continue sequentially: for $k=3, \ldots, m$,
$$\cF_k = \argmax_{f \in \cF_{k-1}} \rho_{(k)}(f)$$
\item The set of leximin solutions is $\cF_m$.
\end{enumerate}
Equivalently, we can write this as a single lexicographic maximization:
\begin{equation}
f^{\ast}_{\text{RI}} = \argmax_{f \in \cF}^{\text{lex}} \left( \rho_{(1)}(f), \ldots, \rho_{(m)}(f) \right)
\end{equation}    
\end{definition}

\begin{remark}[Tie-breaking and uniqueness]
At each stage $k$, if $\cF_k$ contains multiple predictors, the next stage breaks ties by maximizing $\rho_{(k+1)}$. Under the regularity conditions of Theorem~\ref{thm:risk_set}, the sequential process terminates at a unique relative improvement vector and it is Pareto optimal. This is the result of \citet{imai1983individual}.
\end{remark}

\begin{remark}[Connection to maximin]
The first stage $\cF_1$ corresponds to the maximin solution that maximizes worst-case relative improvement. The leximin refinement provides a principled way to break ties when multiple predictors achieve the same worst-case performance, by prioritizing improvements to successively less-advantaged groups.
\end{remark}

\begin{remark}[Leximin refinement for other bargaining solutions] The leximin refinement is not unique to the Kalai-Smorodinsky solution. As mentioned in the main text (footnote~\ref{fn:leximin}), it can be applied to resolve non-uniqueness in other bargaining solutions for $m > 2$ groups. The choice of which vector to leximin-optimize (relative improvements for KS, utilities for Egalitarian) depends on the underlying fairness criterion being refined.
\end{remark}

\subsubsection{Comprehensive Closure}
\label{app:comprehensive}

Classical bargaining theory employs different notions of comprehensiveness depending on the role of the disagreement point. In the main text, we adopt $d$-comprehensiveness as the comprehensive condition. The terminology below follows the definitions from \citet{thomson1994cooperative}.

\begin{definition}[Comprehensive set]
\label{def:comprehensive}
A set $S \subseteq \mathbb{R}^m$ is \emph{comprehensive} if whenever $\bm x \in S$ and $\bm y \leq \bm x$ componentwise (i.e., $y_i \leq x_i$ for all $i$), then $\bm y \in S$. Comprehensiveness allows utility to be disposed of in any amount without bound.
\end{definition}

\begin{definition}[$\bm d$-Comprehensive set]
\label{def:d_comprehensive}
Given a disagreement point $\bm d \in \mathbb{R}^m$, a set $S \subseteq \mathbb{R}^m$ is \emph{$\bm d$-comprehensive} if whenever $\bm x \in S$ and $\bm d \leq \bm y \leq \bm x$ componentwise, then $\bm y \in S$. This property captures the assumption that utility is freely disposable above the disagreement point $\bm d$.
\end{definition}

\begin{remark}[Relationship between notions]
A $\bm d$-comprehensive set allows voluntary utility disposal only down to the disagreement point $\bm d$, reflecting the assumption that rational players would not voluntarily accept utilities below what they are guaranteed at disagreement (Individual Rationality, as discussed in Theorem~\ref{thm:bounded_loss}).
\end{remark}

\begin{remark}[When $f_0 \notin \cF$]
Theorem~\ref{thm:bounded_loss} assumes $f_0 \in \cF$ to guarantee individual rationality. When $f_0 \notin \cF$, some groups may have $\rho_g(f_{\text{RI}}) < 0$, meaning worse performance than baseline. However, the bargaining problem remains well-defined.

Following \citet{thomson1994cooperative}, a bargaining problem $(S,\bm d)$ is non-degenerate if:
\begin{equation}
\exists x \in S \text{ such that } x_i > d_i \text{ for all } i\in\{1, \ldots, m\}.
\end{equation}
In our framework, this translates to:
\begin{equation}
\label{eq:strict_domination}
\exists f \in \cF \text{ such that } R_g(f) < R_g(f_0) \text{ for all } g \in \cG.
\end{equation}

Without this condition, every predictor in $\cF$ harms at least one group relative to baseline, making the bargaining problem degenerate. The assumption $f_0 \in \cF$ in Theorem~\ref{thm:bounded_loss} is a sufficient (but not necessary) condition that ensures both non-degeneracy and individual rationality. When $f_0 \notin \cF$, condition~\eqref{eq:strict_domination} ensures non-degeneracy and individual rationality.

In degenerate cases where condition~\eqref{eq:strict_domination} fails, one could technically use the fully comprehensive closure instead of the $d$-comprehensive closure to keep the solution well-defined. However, this would permit $\rho_g < 0$, violating the fairness principle that cooperation should not harm participants.
\end{remark}

\begin{definition}[$d$-Comprehensive closure]
\label{def:d_comprehensive_closure}
The $\bm d$-comprehensive closure of $S$ with respect to disagreement point $d$ is
$$\mathrm{comp}_{\bm d}(S) = \{\bm y \in \mathbb{R}^m : \exists \bm x \in S \text{ such that } \bm d \leq \bm y \leq \bm x \text{ componentwise}\}.$$
This is the smallest $\bm d$-comprehensive set containing $S$.
\end{definition}

\begin{remark}[Application to our framework]
\label{rem:comprehensive_our_framework}
In relative improvement space, the disagreement point is $\bm d = \mathbf{0}$ (corresponding to the baseline predictor). The $\mathbf{0}$-comprehensive closure is
$$\mathrm{comp}_{\mathbf{0}}(\Omega(\cF)) = \{\bm \rho' \in \mathbb{R}^m : \exists \bm \rho \in \Omega(\cF) \text{ such that } \mathbf{0} \leq \bm \rho' \leq \bm \rho \text{ componentwise}\}.$$
\end{remark}

\section{Figure Descriptions}

\subsection{Detailed Explanation of Motivating Example}
\label{app:motivating_details}

Figure~\ref{fig:motivating} (and Figure~\ref{fig:beta-sweep}) provides an alternative view of the Pareto frontier by reparametrizing it through the slope coefficient $\beta_1$ of the single-feature linear model $f(x) = \beta_1 x + \beta_0$ (see Appendix~\ref{sec:experiments_detail} for the Pareto frontier construction). 

The \emph{left panel} of each subfigure plots per-group RMSE $R_g(\beta_1)$ as a function of $\beta_1$. Horizontal dotted lines indicate the group baselines $R_g(f_0)$, and filled
circles mark the group-specific oracle slopes $\beta_{1,g}^{\ast}$ at which each group's risk is individually minimized. The \emph{right panel} plots relative improvement $\rho_g(\beta_1) = (R_g(f_0) - R_g(\beta_1))\,/\,(R_g(f_0) - R_g(f_g^{\ast}))$ as a function of $\beta_1$. Vertical dashed lines indicate the $\beta_1$ selected by MMR (purple) and MMRI (green), with the achieved RI values annotated at each solution. The divergence between the two $\rho_g$ curves at the MMR solution directly reflects the asymmetry in oracle gaps between groups.

\subsection{Detailed Comparison of Group Fairness Methods}
\label{app:fairness_methods}

Figure~\ref{fig:fairness_comparison} illustrates how common group fairness criteria select different solutions along the Pareto frontier in risk space, corresponding to the objectives in Equation~\eqref{eq:rel_reward_opt_fnt}, \eqref{eq:gdro}--\eqref{eq:mmr}.

Maximin relative improvement selects the Pareto-optimal solution that maximizes the minimum relative improvement across groups. As discussed in Section~\ref{subsec:ks_ri}, for the two-group case ($m=2$), this solution coincides with the equal relative improvement point on the Pareto frontier. Consequently, it is given by the intersection of the Pareto frontier with the equal-relative-improvement line, i.e., the line segment connecting the disagreement point and the utopia point.

Group DRO minimizes the worst-group risk, selecting the Pareto-optimal point that equalizes the maximum group risk. As noted by \citet{martinez2020minimax}, an equal-risk point may not exist; in this case, it selects the point on the Pareto frontier that is closest to equal risk across groups, which is the minimax Pareto-fair solution.

Maximin explained variance (MMV) maximizes the minimum absolute improvement from the baseline across groups; geometrically, it corresponds to the point on the Pareto frontier that is closest to the line of slope one passing through the disagreement point, thereby balancing absolute risk reductions across groups.

Minimax regret (MMR) minimizes the maximum deviation from each group’s optimal risk; geometrically, it corresponds to the point on the Pareto frontier that intersects the equal-regret line of slope one emanating from the utopia (ideal) point.

However, when the feasible set is rectangular, the Pareto frontier collapses to a single point. In this case, the equal-regret line does not intersect the Pareto frontier, and the unique Pareto-optimal point—coinciding with the utopia point—becomes the solution for all methods. In this case, the solution is neither an equal-risk nor an equal-regret point; nevertheless, it still satisfies equal relative improvement.

\subsection{Linear regression setups}
Figure~\ref{fig:risk_set} presents synthetic data analyses using linear regression with two groups. We consider the linear regression setup described in Section~\ref{sec:formulation}. We use the function class $\cF = \{f(x) = \theta^\top x: \theta \in \Theta\}$ with convex compact $\Theta$ and squared loss. We assume that $Y = \beta_g^\top X + \epsilon_g$ with $\bE_g[\epsilon_g|X] =0$, $\epsilon_g \sim \cN(0, \sigma_g^2)$, $\bE_g[X]=0$ and $\bE_g[XX^\top] = \Sigma_g$. We use the natural baseline in regression which is $f_0(x) = 0$ (the unconditional mean) and under the squared loss, the group optimum is $f_g^{\ast}(x) = \beta_g^\top x$. Then the optimization reduces to 
$$\theta_{\text{RI}} = \arg \max_{\theta \in \Theta} \min_{g \in \cG} \left(1- \frac{\| \theta - \beta_g \|^2_{\Sigma_g}}{\| \beta_g \|^2_{\Sigma_g}}\right).$$

% In Figure~\ref{fig:motivating}, we consider the simple linear regression without the intercept. The data generation process for group 1 is $Y^{(1)} = \beta_1 X^{(1)} + \epsilon^{(1)}$ with $\bE[(X^{(1)})^2] = \Sigma_1$ and $\epsilon^{(1)} \sim \cN(0, \sigma_1^2)$. For group 2, $Y^{(2)} = \beta_2 X^{(2)} + \epsilon^{(2)}$ with $\bE[(X^{(2)})^2] = \Sigma_2$ and $\epsilon^{(2)} \sim \cN(0, \sigma_2^2)$. We set the true parameters as $\beta_1 = 2, \sigma_1^2 = 1$ versus $\beta_2 = 7, \sigma_2^2 = 9$ and assume that $\Sigma_1 = \Sigma_2$. Then the minimax regret solution and maximin relative improvement solutions are defined as:
% \begin{align*}
%     \beta_{\text{MMR}} &= \arg \min_{\beta \in [0,8]} \max_{g \in \cG} \left((\beta - \beta_g)^2 \right)= \arg \min_{\beta \in [0,8]} \max_{g \in \cG} |\beta - \beta_g | = \frac{9}{2},\\
%     \beta_{\text{MMRI}} &= \arg \max_{\beta \in [0,8]} \min_{g \in \cG} \left( 1 - \frac{(\beta - \beta_g)^2}{\beta_g^2} \right) = \arg \min_{\beta \in [0,8]} \max_{g \in \cG} \frac{|\beta - \beta_g|}{|\beta_g|} = \frac{28}{9}.
% \end{align*}
% Plugging these solutions into the relative improvement formula $(\rho_1, \rho_2) = \left(1 - \frac{(\beta_{\mathrm{sol}} - \beta_1)^2}{\beta_1^2}, 1 - \frac{(\beta_{\mathrm{sol}} - \beta_2)^2}{\beta_2^2} \right)$, the minimax regret solution $\beta_{\text{MMR}}$ yields the relative improvement $(-0.56, 0.87)$ and the maximin relative improvement solution $\beta_{\text{MMRI}}$ yields the relative improvement $(0.69, 0.69)$. 

In Figure~\ref{fig:risk_set}, we illustrate the multiple linear regression example with two predictors:
\begin{align*}
\Theta &= \{\theta:\|\theta\| \leq 1\}, \quad \beta_1 = (0.4, 0), \quad \beta_2 = (0.4, 0.6),\\
\Sigma_1 &= \begin{psmallmatrix} 1 & 0.5 \\ 0.5 & 1 \end{psmallmatrix}, \quad \Sigma_2 = \begin{psmallmatrix} 1 & 0 \\ 0 & 1 \end{psmallmatrix}, \quad \sigma_g = 1.
\end{align*}
Rather than specifying solutions, we focus on the geometric properties of the risk set $\cR(\cF)$. Figure~\ref{fig:risk_set} illustrates that $\cU$ is compact and convex, and shows its convex and comprehensive hull. As seen in the figure, the Pareto frontier remains unchanged after taking the hull, so the bargaining solutions of interest are unaffected by this operation.

\section{Empirical Illustration on ACS Income data}
\label{sec:experiments_detail}

\textbf{Data and setup.}
We use the American Community Survey (ACS) public-use microdata for 2018, accessed via the \emph{folktables} package~\citep{ding2021retiring}. The prediction target is log-transformed total personal income (\texttt{PINCP}), restricted to full-time, year-round employees (\texttt{ESR}${}=1$, \texttt{WKHP}$\ge 35$, \texttt{WKW}${}=1$). We consider two binary group partitions---\emph{sex} (male/female) and \emph{race} (white/non-white)---across 50 U.S.\ states.

The feature set comprises four standard ACS predictors: age (\texttt{AGEP}), educational attainment (\texttt{SCHL}), and two binary indicators derived from categorical ACS variables:
marital status $\texttt{MARC} = \mathbf{1}\{\texttt{MAR}=1\}$ (currently married vs.\ not),
and householder status $\texttt{RELPC} = \mathbf{1}\{\texttt{RELP}=0\}$ (reference person of the household vs.\ not).

\textbf{Implementation.}
The model class $\mathcal{F}$ is linear regression with intercept, parametrized by group-reweighting $\lambda \in[0,1]$:
\begin{align}
\label{eq:wls}
\min_{\beta,\,\beta_0}\;
(1-\lambda)\,\frac{1}{n_0}\!\sum_{i:\,g_i=0}\!(y_i - f_\beta(x_i))^2
\;+\;
\lambda\,\frac{1}{n_1}\!\sum_{i:\,g_i=1}\!(y_i - f_\beta(x_i))^2,
\end{align}
admitting the closed-form solution $\hat\beta = (X^\top W X)^{-1} X^\top W y$. The baseline is the constant predictor $f_0(x) = \bar{y}$, and per-group oracles $f_g^{\ast}$ are fit separately on each group. Sweeping $\lambda$ over $10{,}001$ equally spaced values traces the full Pareto frontier. Since $\mathcal{R}(\mathcal{F})$ is compact and its convex hull contains no Pareto optimal points outside $\mathcal{R}(\mathcal{F})$ (Theorem~\ref{thm:risk_set}), sweeping $\lambda \in [0,1]$ over the scalarized objective~\eqref{eq:wls} traces the complete Pareto frontier.

Each fairness criterion selects its solution from this frontier:
\emph{ERM} uses sample-proportion weighting;
\emph{Group DRO} minimizes $\max_g \hat{R}_g(f)$;
\emph{MMR} minimizes $\max_g (\hat{R}_g(f) - \hat{R}_g(f_g^{\ast}))$;
\emph{MMV} maximizes $\min_g (\hat{R}_g(f_0) - \hat{R}_g(f))$;
\emph{MMRI} maximizes $\min_g \hat{\rho}_g$, where $\hat{\rho}_g = (\hat{R}_g(f_0) - \hat{R}_g)\,/\,(\hat{R}_g(f_0) - \hat{R}_g(f_g^{\ast}))$;
\emph{Nash} maximizes $\prod_g (\hat{R}_g(f_0) - \hat{R}_g(f))$.

For nonlinear models, we implement each method via iterative optimization using a two-phase procedure: an ERM warm-start for $T_{\text{warm}}$ epochs, followed by method-specific exponentiated gradient ascent on group weights $q_g$.
\emph{Group DRO} updates $q_g \propto q_g \exp(\eta \hat{R}_g^2)$; \emph{MMR} replaces losses with regrets; \emph{MMRI} updates $q_g \propto q_g \exp(-\eta \hat{\rho}_g)$, up-weighting the group with the lowest relative improvement. Results from the gradient-based procedure are consistent with the closed-form solutions across all configurations considered.

\textbf{Gap ratio distribution.}
Table~\ref{tab:gap-ratio} summarizes the oracle gap ratio $r = (R_0(f_0) - R_0(f_0^{\ast})) / (R_1(f_0) - R_1(f_1^{\ast}))$ across all 400 configurations (50 states $\times$ 2 partitions $\times$ 4 features). Asymmetry is considerably more pronounced under the race partition than the sex partition: for householder status (\texttt{RELPC}), 86\% of states yield extreme ratios ($r < 0.5$ or $r > 2.0$), and no state is near-symmetric. Under the sex partition, marital status (\texttt{MARC}) shows the most asymmetry (14\% extreme), while education (\texttt{SCHL}) is the most symmetric (66\% near-symmetric).

\begin{table}[h]
\centering
\caption{%
  \textbf{Oracle gap ratio distribution across 50 U.S.\ states.}
  For each partition--feature combination, we report the min, median, and max of the
  gap ratio $r$ across states, along with the fraction of states with extreme asymmetry
  ($r < 0.5$ or $r > 2.0$) and near-symmetry ($0.8 \leq r \leq 1.25$).
}
\label{tab:gap-ratio}
\small
\renewcommand{\arraystretch}{1.2}
\begin{tabular}{llrrrrr}
\toprule
Partition & Feature & Min & Median & Max & Extreme & Symmetric \\
\midrule
sex & AGEP  & 0.58 & 1.32 & 2.02 & 1/50 (2\%)   & 19/50 (38\%) \\
    & SCHL  & 0.33 & 0.88 & 1.31 & 2/50 (4\%)   & 33/50 (66\%) \\
    & MARC  & 0.71 & 1.62 & 2.18 & 7/50 (14\%)  & 4/50 (8\%)   \\
    & RELPC & 0.37 & 1.01 & 1.81 & 3/50 (6\%)   & 26/50 (52\%) \\
\midrule
race & AGEP  & 0.10 & 0.61 & 3.48 & 17/50 (34\%) & 10/50 (20\%) \\
     & SCHL  & 0.15 & 0.69 & 1.84 & 6/50 (12\%)  & 12/50 (24\%) \\
     & MARC  & 0.18 & 0.53 & 2.96 & 25/50 (50\%) & 4/50 (8\%)   \\
     & RELPC & 0.04 & 0.30 & 2.85 & 43/50 (86\%) & 0/50 (0\%)   \\
\bottomrule
\end{tabular}
\end{table}

\textbf{Results.} 
We first examine single-feature models across three states---California, Hawaii, and North Dakota---chosen because they exhibit pronounced gap asymmetry in different directions and magnitudes. Figure~\ref{fig:pareto-single} shows the Pareto frontier and method solutions for six configurations; Figure~\ref{fig:beta-sweep} reparametrizes the same frontiers by the slope coefficient $\beta$, displaying per-group risk and relative improvement as functions of $\beta$.

\emph{California, sex, marital status (\texttt{MARC}).}
The marriage wage premium is well documented for men but substantially weaker for women~\citep{korenman1991does}. This asymmetry appears directly in the oracle gaps: marital status reduces prediction error nearly twice as much for men as for women (gap ratio $\approx 1.90$). MMR over-allocates to the male group (Figure~\ref{fig:beta-sweep}a).

\emph{California, race, age (\texttt{AGEP}).}
Age predicts income roughly twice as well for white workers as for non-white workers (gap ratio $\approx 2.14$), consistent with differential returns to experience across racial groups (Figure~\ref{fig:beta-sweep}b).

\emph{Hawaii, sex, marital status (\texttt{MARC}).}
Hawaii exhibits a similar marriage premium asymmetry (gap ratio $\approx 2.17$), with a distinctive demographic composition that amplifies the effect (Figure~\ref{fig:beta-sweep}c).

\emph{Hawaii, race, age (\texttt{AGEP}).}
The gap ratio reaches $\approx 3.48$---the most extreme among all 50 states---reflecting the unique racial composition of Hawaii's labor market, where age predicts white workers' income far more strongly than non-white workers' income. At the MMR solution, the non-white group attains only $24\%$ of its oracle improvement while the white group attains $78\%$ (Figure~\ref{fig:beta-sweep}d).

\emph{North Dakota, sex, education (\texttt{SCHL}).}
Education reduces prediction error roughly three times more for women than for men (gap ratio $\approx 0.33$). At the MMR solution, the female group attains $58\%$ of its oracle improvement while the male group---with three times less room to improve---is made worse than baseline ($-28\%$). MMRI equalizes relative improvement across groups (Figure~\ref{fig:beta-sweep}e).

\emph{North Dakota, race, age (\texttt{AGEP}).}
Age is roughly ten times more predictive for non-white workers than for white workers (gap ratio $\approx 0.10$). MMR allocates nearly all model capacity to the non-white group, leaving the white group worse than baseline---an extreme instance of scale insensitivity (Figure~\ref{fig:beta-sweep}f).

In all six cases, MMRI equalizes relative improvement across groups, while MMR systematically over-serves the group whose oracle gap is larger in absolute terms.

The asymmetry persists under the full four-feature model (Figure~\ref{fig:pareto-full}). In California, additional features compress the ratio toward symmetry---$1.22$ (sex) and $1.14$ (race). In Hawaii, the race partition retains a notable gap ratio of $1.71$ even with all four features. In North Dakota, the gap ratio remains substantially below unity: $0.53$ (sex) and $0.32$ (race). Even moderate asymmetry is sufficient for MMR to allocate disproportionately across groups. Taken together, these results confirm that the failure mode illustrated in Figure~\ref{fig:motivating} is not an artifact of the synthetic construction, but a systematic consequence of applying absolute-scale criteria when groups differ in inherent predictability.

%% ====================================================================
%%  FIGURES
%% ====================================================================

\begin{figure}[t]
\centering
\begin{subfigure}[b]{0.42\textwidth}
  \includegraphics[width=\textwidth]{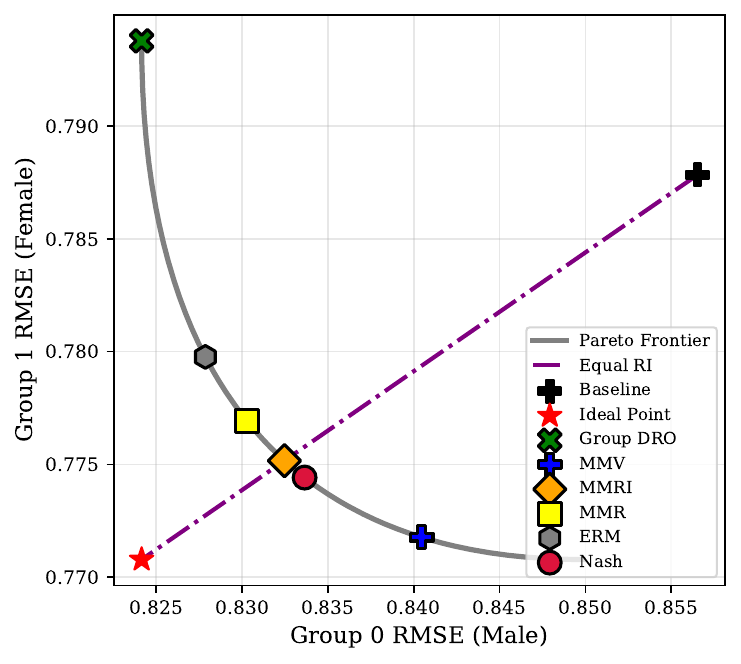}
  \caption{CA / sex / MARC (ratio $\approx 1.90$)}
\end{subfigure}
\hfill
\begin{subfigure}[b]{0.42\textwidth}
  \includegraphics[width=\textwidth]{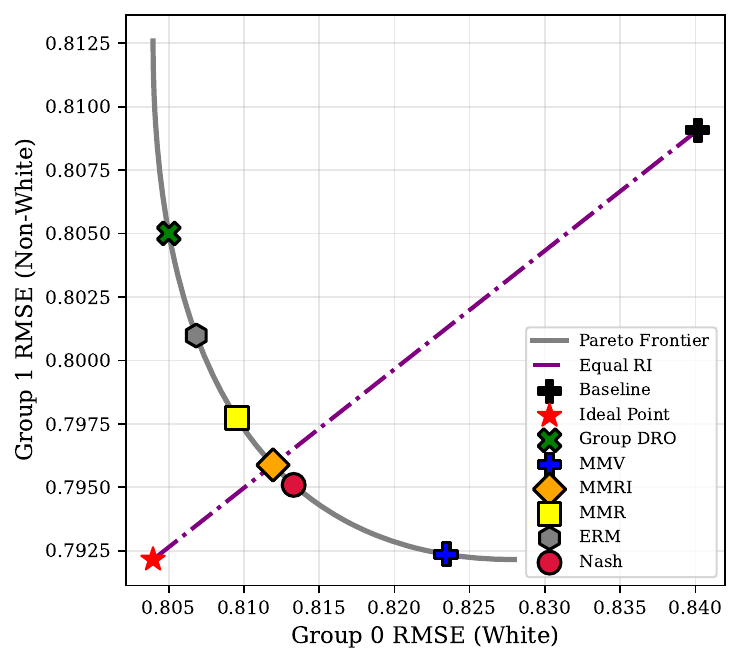}
  \caption{CA / race / AGEP (ratio $\approx 2.14$)}
\end{subfigure}

\medskip
\begin{subfigure}[b]{0.42\textwidth}
  \includegraphics[width=\textwidth]{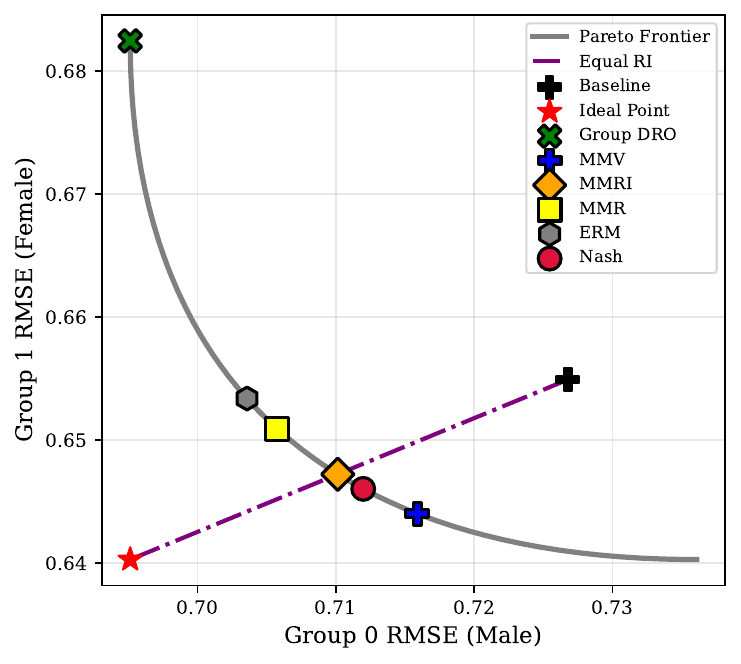}
  \caption{HI / sex / MARC (ratio $\approx 2.17$)}
\end{subfigure}
\hfill
\begin{subfigure}[b]{0.42\textwidth}
  \includegraphics[width=\textwidth]{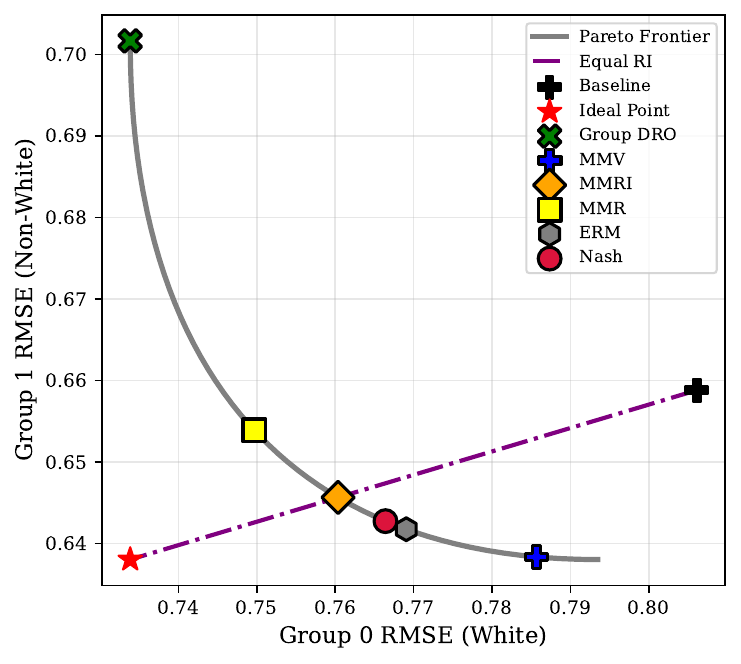}
  \caption{HI / race / AGEP (ratio $\approx 3.48$)}
\end{subfigure}

\medskip
\begin{subfigure}[b]{0.42\textwidth}
  \includegraphics[width=\textwidth]{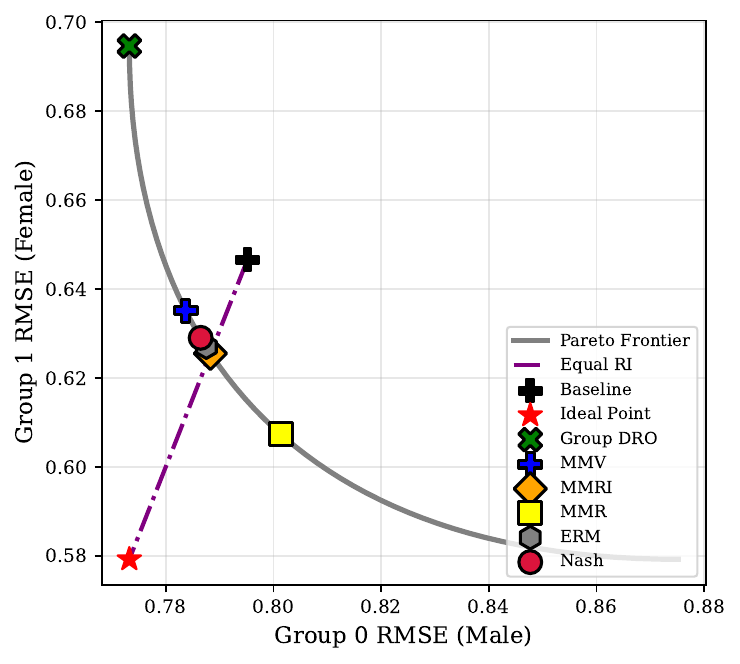}
  \caption{ND / sex / SCHL (ratio $\approx 0.33$)}
\end{subfigure}
\hfill
\begin{subfigure}[b]{0.42\textwidth}
  \includegraphics[width=\textwidth]{plots/intro_diff_scale_ND_race_binary_AGEP_employed.pdf}
  \caption{ND / race / AGEP (ratio $\approx 0.10$)}
\end{subfigure}
\caption{%
  \textbf{Pareto frontiers and method solutions (single-feature models).}
  Each panel shows per-group RMSE for the Pareto-optimal set of linear models
  (grey curve), the baseline (cross), the ideal point (star),
  and the solutions selected by ERM, Group DRO, MMR, MMRI, MMV, and Nash.
  The dashed line indicates equal relative improvement.
  When the gap ratio deviates from unity, MMR moves away from the equal-RI line,
  while MMRI remains on or near it.
}
\label{fig:pareto-single}
\end{figure}

\begin{figure}[t]
\centering
\begin{subfigure}[b]{0.48\textwidth}
  \includegraphics[width=\textwidth]{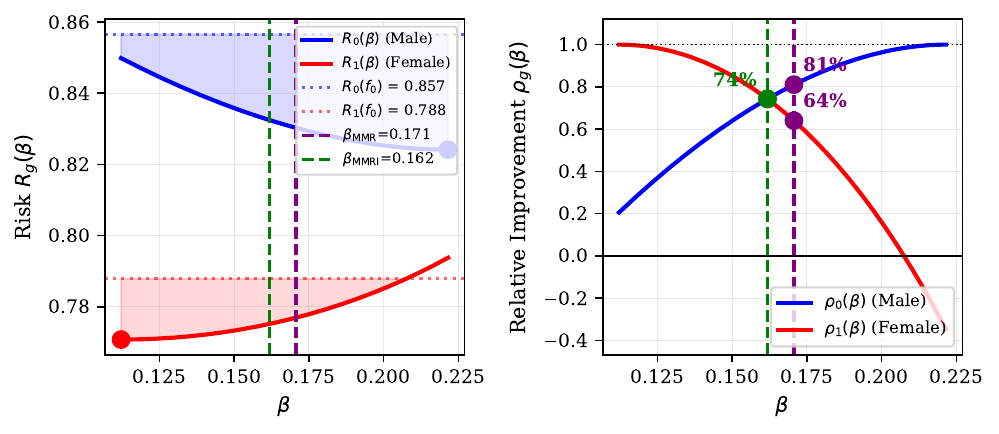}
  \caption{CA / sex / MARC}
\end{subfigure}
\hfill
\begin{subfigure}[b]{0.48\textwidth}
  \includegraphics[width=\textwidth]{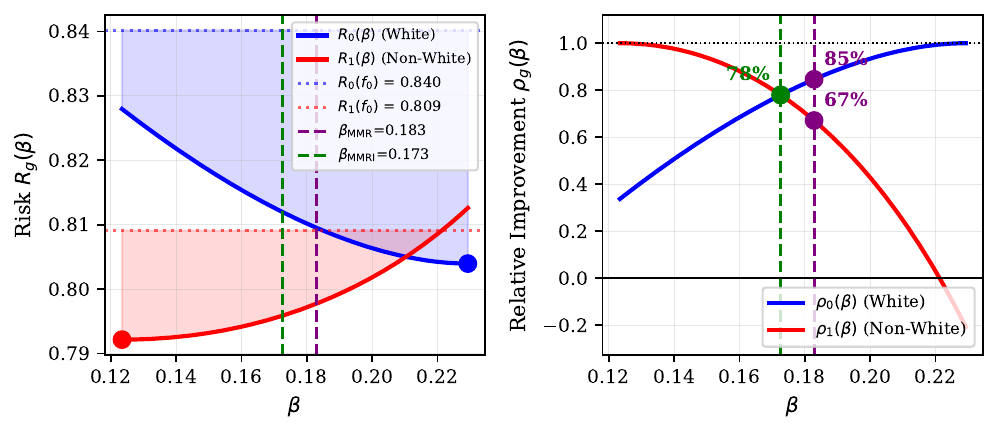}
  \caption{CA / race / AGEP}
\end{subfigure}

\medskip
\begin{subfigure}[b]{0.48\textwidth}
  \includegraphics[width=\textwidth]{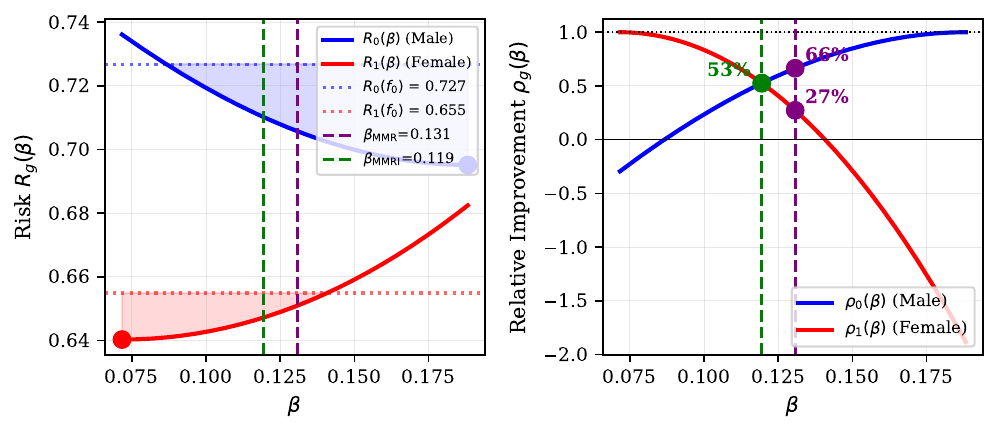}
  \caption{HI / sex / MARC}
\end{subfigure}
\hfill
\begin{subfigure}[b]{0.48\textwidth}
  \includegraphics[width=\textwidth]{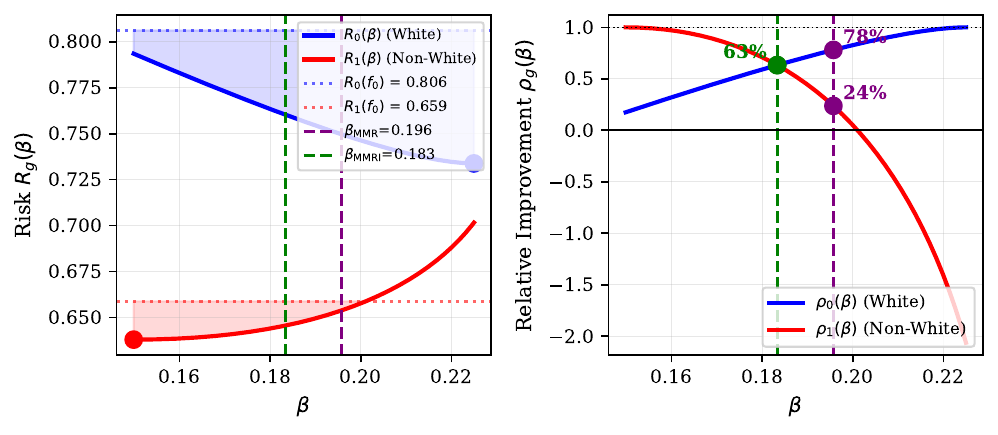}
  \caption{HI / race / AGEP}
\end{subfigure}

\medskip
\begin{subfigure}[b]{0.48\textwidth}
  \includegraphics[width=\textwidth]{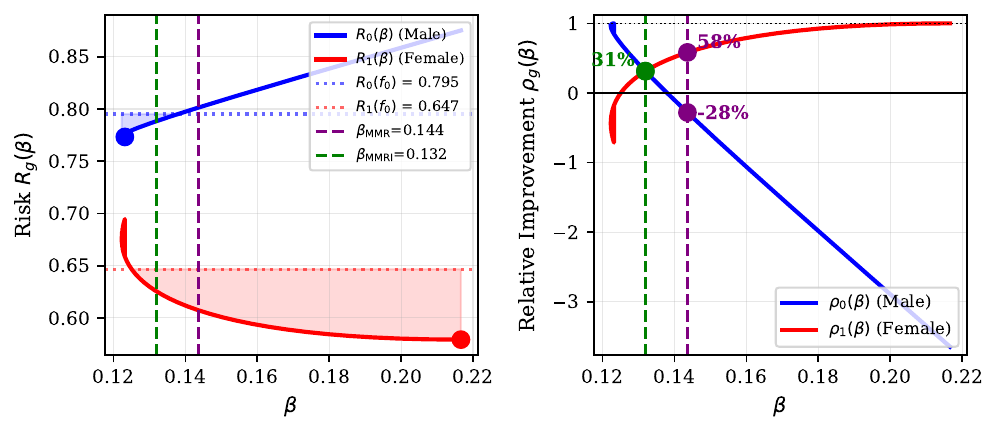}
  \caption{ND / sex / SCHL}
\end{subfigure}
\hfill
\begin{subfigure}[b]{0.48\textwidth}
  \includegraphics[width=\textwidth]{plots/fig1_analog_ND_race_binary_AGEP_employed.pdf}
  \caption{ND / race / AGEP}
\end{subfigure}
\caption{%
  \textbf{Risk and relative improvement as functions of the slope $\beta$
  (single-feature models).}
  Left half of each panel: per-group RMSE $R_g(\beta)$ with baselines (dotted)
  and oracle points (circles).
  Right half: relative improvement $\rho_g(\beta)$, with MMR (purple) and MMRI
  (green) solutions marked.
  The gap between the two curves at each $\beta$ reflects the asymmetry in
  oracle gaps.
}
\label{fig:beta-sweep}
\end{figure}

\begin{figure}[t]
\centering
\begin{subfigure}[b]{0.44\textwidth}
  \includegraphics[width=\textwidth]{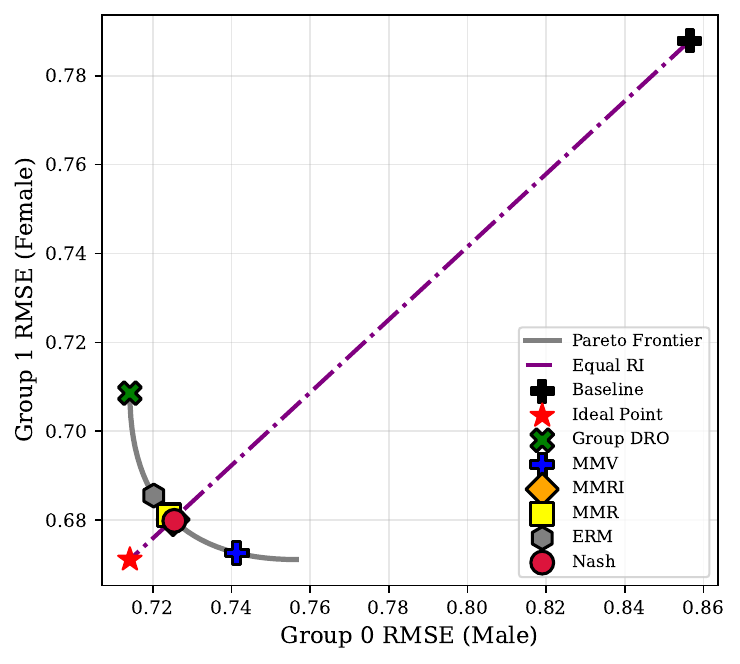}
  \caption{CA / sex (ratio $\approx 1.22$)}
\end{subfigure}
\hfill
\begin{subfigure}[b]{0.44\textwidth}
  \includegraphics[width=\textwidth]{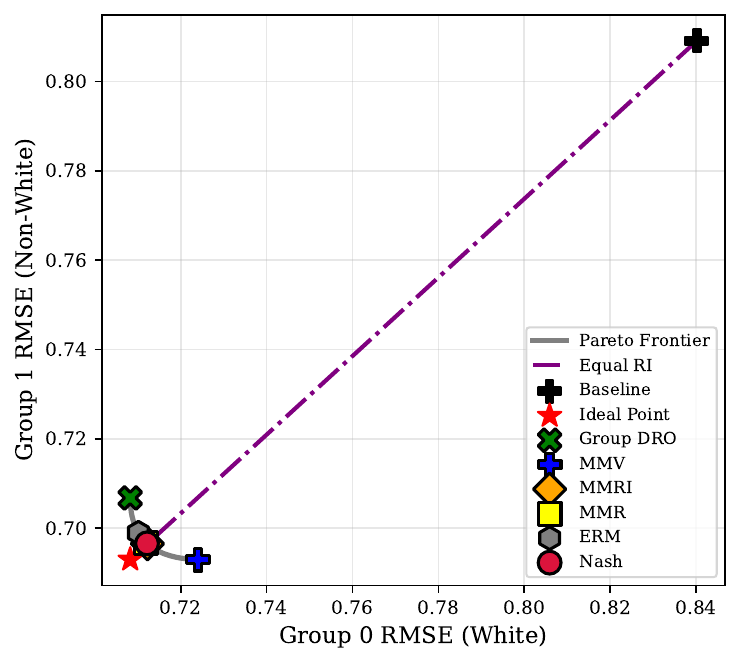}
  \caption{CA / race (ratio $\approx 1.14$)}
\end{subfigure}

\medskip
\begin{subfigure}[b]{0.44\textwidth}
  \includegraphics[width=\textwidth]{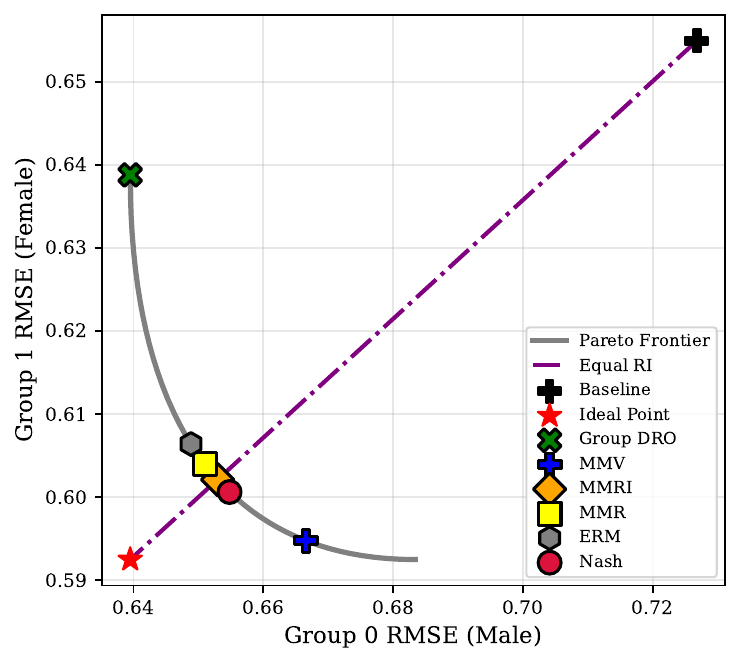}
  \caption{HI / sex (ratio $\approx 1.40$)}
\end{subfigure}
\hfill
\begin{subfigure}[b]{0.44\textwidth}
  \includegraphics[width=\textwidth]{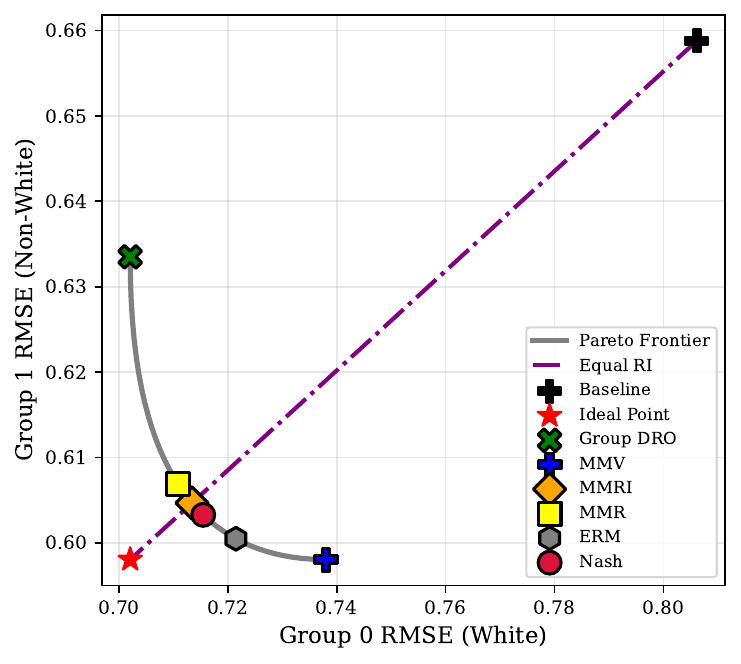}
  \caption{HI / race (ratio $\approx 1.71$)}
\end{subfigure}

\medskip
\begin{subfigure}[b]{0.44\textwidth}
  \includegraphics[width=\textwidth]{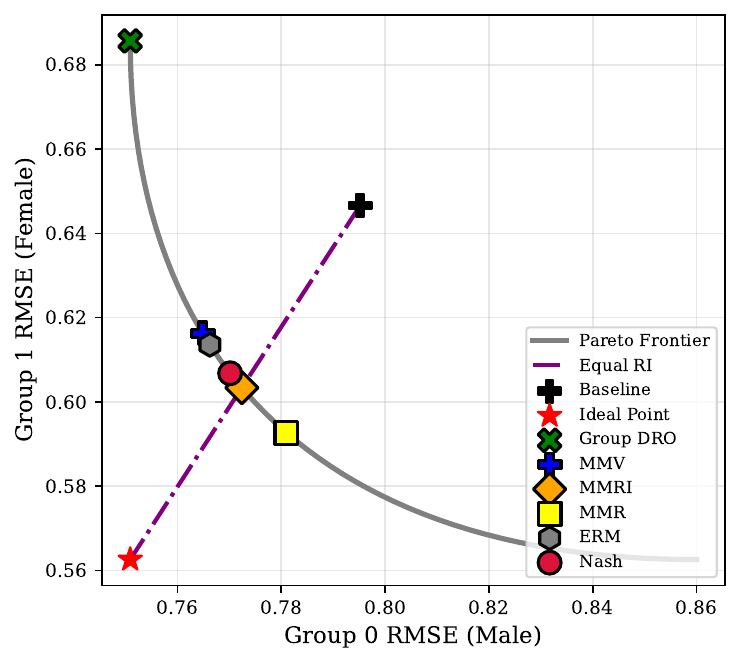}
  \caption{ND / sex (ratio $\approx 0.53$)}
\end{subfigure}
\hfill
\begin{subfigure}[b]{0.44\textwidth}
  \includegraphics[width=\textwidth]{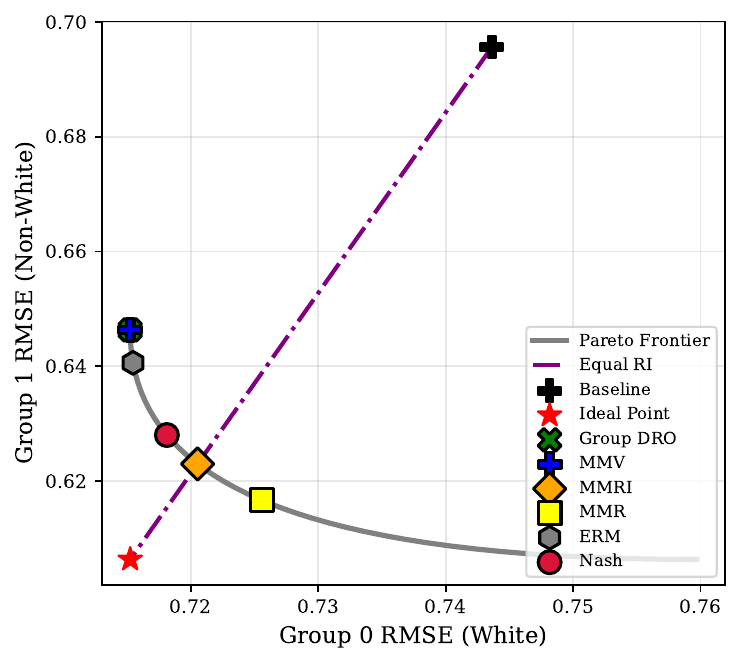}
  \caption{ND / race (ratio $\approx 0.32$)}
\end{subfigure}
\caption{%
  \textbf{Pareto frontiers and method solutions (full four-feature models).}
  Same layout as Figure~\ref{fig:pareto-single}.
  The gap asymmetry narrows relative to single-feature models but remains
  present, particularly for North Dakota under the race partition (ratio $0.32$)
  and Hawaii under the race partition (ratio $1.71$).
}
\label{fig:pareto-full}
\end{figure}

% \section{You \emph{can} have an appendix here.}

% You can have as much text here as you want. The main body must be at most $8$ pages long.
% For the final version, one more page can be added.
% If you want, you can use an appendix like this one.  

% The $\mathtt{\backslash onecolumn}$ command above can be kept in place if you prefer a one-column appendix, or can be removed if you prefer a two-column appendix.  Apart from this possible change, the style (font size, spacing, margins, page numbering, etc.) should be kept the same as the main body.
%%%%%%%%%%%%%%%%%%%%%%%%%%%%%%%%%%%%%%%%%%%%%%%%%%%%%%%%%%%%%%%%%%%%%%%%%%%%%%%
%%%%%%%%%%%%%%%%%%%%%%%%%%%%%%%%%%%%%%%%%%%%%%%%%%%%%%%%%%%%%%%%%%%%%%%%%%%%%%%

\end{document}